\documentclass{amsart}
\usepackage[a4paper, margin=1in]{geometry}

\usepackage[utf8]{inputenc} 
\usepackage[T1]{fontenc}    
\usepackage{hyperref}       
\usepackage{url}            
\usepackage{booktabs}       
\usepackage{amsfonts}       
\usepackage{nicefrac}       
\usepackage{microtype}      
\usepackage{xcolor}         

\usepackage{graphicx}       
\usepackage{subcaption}      
\usepackage{thmtools}        
\usepackage{thm-restate}    

\usepackage[round, authoryear]{natbib}

\newtheorem{theorem}{Theorem}                   
\newtheorem{lemma}[theorem]{Lemma}              
    
\newtheorem{definition}{Definition}             
\newtheorem{proposition}{Proposition}           
             
\newtheorem{corollary}{Corollary}               

\theoremstyle{definition} 
\newtheorem{example}{Example}

\title[Global linear convergence of softmax policy gradient]{Global linear convergence of entropy-regularized softmax policy gradient beyond tabular MDPs}

\author{Ziyue Chen}
\address{School of Mathematics\\University of Edinburgh\\Edinburgh, UK, EH9 3FD}
\email{s2674476@ed.ac.uk}

\author{David {\v{S}}i{\v{s}}ka}
\address{School of Mathematics\\University of Edinburgh\\Edinburgh, UK, EH9 3FD}
\email{d.siska@ed.ac.uk}

\author{{\L}ukasz Szpruch}
\address{School of Mathematics\\University of Edinburgh\\Edinburgh, UK, EH9 3FD}
\email{L.Szpruch@ed.ac.uk}

\begin{document}

\begin{abstract}
We study the global convergence of policy gradient for infinite-horizon entropy-regularized Markov decision processes (MDPs) with continuous state and action spaces.
We consider log-linear softmax policies with linear function approximation, which extend the tabular softmax parameterization while retaining a tractable  policy class. Under $Q^\pi_\tau$-realizability for the regularized state-action value function, we first establish a non-uniform Polyak--{\L}ojasiewicz (P\L) inequality.
The non-uniformity arises through degeneracy of constants associated with the policy geometry, namely the Fisher information matrix or an uncentered feature covariance matrix. We then identify two feature regimes under which this non-uniform constant can be bounded along the gradient flow.
For full-affine-span features, we prove radial unboundedness of the KL regularizer and show that the smallest eigenvalue of the Fisher information matrix remains bounded below by an initialization-dependent positive constant.
For simplex-valued features, we prove an analogous radial unboundedness result in the subspace orthogonal to the all-ones vector and obtain a uniform lower bound for the smallest eigenvalue of the uncentered covariance matrix.
These results imply global linear convergence of the regularized objective along the gradient flow, i.e. suboptimality decaying as $\mathcal{O}(e^{-Ct})$ for some $C>0$.
Our analysis extends the global convergence theory of entropy-regularized softmax policy gradient beyond the tabular setting of~\cite{agarwal2020optimality, bhandari2024global, mei2020global}.
\end{abstract}

\maketitle

\section{Introduction}
\label{sec:intro}

\textbf{Overview:} The policy gradient is a fundamental concept in reinforcement learning (RL), underpinning policy search and actor-critic methods \cite{sutton1999policy}. However, for parametrized softmax policies, the normalization factor induces non-convexity in the parameters. As a result, despite the prevalence of softmax policies in RL, their theoretical understanding has remained limited until recently, with existing results confined to the tabular setting \cite{agarwal2020optimality,agarwal2021theory,bhandari2024global,mei2020global}.

We consider a discounted infinite horizon Markov decision model $(S,A,P,c,\gamma)$, where $S$ and $A$ are general, possibly continuous, state space and action spaces respectively.
Let $\mu$ be the fixed finite reference measure, $\rho$ the distribution of initial state, $P\in \mathcal{P}(S|S\times A)$ the transition probability kernel, $c$ a bounded cost function, and $\gamma\in [0,1)$ the discount factor.
For a given stochastic policy $\pi \in \mathcal{P}(A|S)$, we define the entropy regularized value function $V^{\pi}_{\tau}:S\rightarrow \mathbb{R}$  by
\begin{equation}\label{eq:intro:objective}
V^{\pi}_{\tau}(\rho)= \mathbb{E}_{s_0 \sim \rho,s_{n+1}\sim P(\cdot|s_n,a_n)}^{a_n\sim \pi(\cdot|s_n)}\left[\sum_{n=0}^{\infty}\gamma^{n} \left(c(s_n,a_n)+\tau\,\log\tfrac{d\pi}{d\mu}(a_n|s_n)\right)\right],
\end{equation}
where $\tau > 0$ determines the intensity of the entropy regularization.
For full details on our assumptions and notation, we refer to Section \ref{sec:nota and def}.

There are two implications of having $\tau>0$.
The first is that the optimal policy satisfies
\begin{equation}\label
{eq:intro:optimal_policy}
\pi_\tau^*(d a|s)=\exp \left(\tfrac{1}{\tau}\left(Q_\tau^*(s, a)-V_\tau^*(s)\right)\right) \mu(d a),
\end{equation}
where $V_\tau^*$ and $Q_\tau^*$ denote the (bounded) optimal value and state-action value functions, respectively. This is an immediate consequence of the Bellman principle (see Theorem \ref{thm:app:dpp} or ~\cite{ziebart2010modeling, haarnoja2017reinforcement, geist2019theory}). 

Second, since the entropy term is strictly convex~\cite[Section 1.4]{dupuis2011weak}, its addition in~\eqref{eq:intro:objective} is expected to improve the convergence when optimizing $V_\tau^\pi(\rho)$ over $\pi$ using policy gradient.
While the latter point may seem intuitive, the analysis is far from being straightforward even in the tabular case when $S$ and $A$ are finite and
direct parametrization 
\begin{equation}\label{eq:softmax}  
\pi_\theta(a|s)\propto \exp \theta(s,a)
\end{equation}
with $\theta: S\times A\rightarrow \mathbb{R}$ is employed.
Two main difficulties arise: first $\theta \in \mathbb{R}^p \mapsto V^{\pi_\theta}_\tau(\rho)$ is non-convex (see, e.g., Proposition 1 in~\cite{mei2020global}), even in the bandit case. 
Moreover, second, $\mathcal P(A|S) \ni \pi \mapsto V_\tau^\pi(\rho)$ is in general non-convex~\cite{agarwal2021theory, Giegrich2024} even when dynamics are linear and costs convex.
Nevertheless, convergence with good rates of policy gradient with softmax policies in the tabular setting has been shown in~\cite{mei2020global}.
In general, the suboptimality $V^{\pi_{\theta_t}}_0 - V^\ast_0$ converges sub-linearly, i.e. $\mathcal O(1/t)$, while with the additional entropy regularization, the the suboptimality $V^{\pi_{\theta_t}}_\tau - V^\ast_\tau$ converges converges linearly, i.e. $\mathcal O(e^{-Ct})$.
The key insight in~\cite{mei2020global}, is to use non-uniform Polyak--\L ojasiewicz (P\L) inequality.

This approach becomes computationally intractable as the size of the sets $S$ and $A$ grow large or when $S$ or $A$ are continous. 
To overcome this one parametrizes the log densities. 
In this paper we assume the following linear function approximation: given basis functions $g: S \times A \rightarrow$ $\mathbb{R}^p$, for all $a \in A$ and $s \in S$, let
\begin{equation*}
\pi_\theta(da|s) \propto \exp \left(\langle\theta,g(s,a)\rangle\right)\mu(da),
\end{equation*}
where $\theta \in \mathbb{R}^p$. 
The continuous-time policy gradient is 
\begin{equation}\label
{eq:intro:continuous_GF}
\frac{d}{dt}\theta_t = - \nabla_{\theta} V_\tau^{\pi_{\theta_t}}(\rho)\,,\,\,\, t \geq 0\,,\,\,\, \theta_0 \in \mathbb R^p\,\,\,\text{given}.
\end{equation}
The expression for the gradient is given by the well known policy gradient theorem, which we restate for convenience later, in Proposition~\ref{prp:app:mdp:Gradient_of_the_Objective}.

\textbf{Outline of the argument to obtain linear convergence:} 
First, we wish to obtain gradient dominance property for the objective in the form of a P{\L} inequality.
To proceed we assume $Q^\pi_\tau$-realizability i.e. that for any $\pi$ there exists a unique $\boldsymbol{{\theta}}(\pi)$ such that $\langle \boldsymbol{{\theta}}(\pi), g \rangle = -\tfrac{1}{\tau} Q^{\pi}_\tau$. 
Under this assumption a simple calculation shows that 
\begin{equation}
\label{eq:pl_calc0_intro}
\nabla_\theta V^{\pi_\theta}_\tau(\rho) = -\tfrac{\tau}{1-\gamma} \Big( \textstyle\int_S G^{\pi_\theta}(s)d^{\pi_\theta}_\rho(ds)\Big)(\theta - \boldsymbol{{\theta}}(\pi_\theta))\,,
\end{equation}
where the Fisher Information matrix (FIM) of $\pi_\theta$ for fixed $s$ is defined as
\begin{equation}\label
{eq:intro:FIM}
G^{\pi_\theta}(s) = \textstyle\int_A \nabla_\theta \log \tfrac{d \pi_{\theta}}{d \mu}(a|s)\big(\nabla_\theta \log \tfrac{d \pi_{\theta}}{d \mu}(a|s)\big)^{\top}\pi_{\theta}(da|s)\,.
\end{equation}
Left multiplying~\eqref{eq:pl_calc0_intro} by $(\theta-\boldsymbol{\theta}(\pi_\theta))^\top$, using that the FIM is positive semi-definite, using the Cauchy--Schwartz inequalty and finally dividing by $\|\theta-\boldsymbol{\theta}(\pi_\theta)\|_2$ we have
\begin{equation}
\label{eq:pl_calc1_intro}
\|\nabla_\theta V^{\pi_\theta}_\tau(\rho)\|_2 \geq \tfrac{\tau}{1-\gamma}  \textstyle\int_S \lambda_{\text{min}}(G^{\pi_\theta}(s)) d^{\pi_\theta}_\rho(ds)\|\theta - \boldsymbol{{\theta}}(\pi)\|_2\,.
\end{equation}
Separately, using the KL-sandwich inequality (see Lemma~\ref{lem:sandwich-inequality}) and local Lipschitz continuity of KL (see Lemma~\ref{lem:app:kl-logit_ineq_affine}, which is an extension of the tabular case from~\cite[Lemma 27]{mei2020global}) we get that for some $C_{\tau,\gamma,\theta}>0$
\[
0 \leq V_\tau^{\pi_{\theta}}(\rho)-V_\tau^{\pi^*}(\rho) \leq C_{\tau,\gamma,\theta} \|\theta - \boldsymbol{{\theta}}(\pi)\|_2^2\,.
\]
This, together with~\eqref{eq:pl_calc1_intro} 
show that there is $ C: \mathbb{R}^p \to (0, \infty)$ s.t. for all
$\theta \in \mathbb{R}^p$
\begin{equation}
\label{eq:pl_ineq_intro}
0 \leq V_\tau^{\pi_{\theta}}(\rho)-V_\tau^{\pi^*}(\rho) \leq  C(\theta)\|\nabla_\theta V^{\pi_\theta}_\tau(\rho)\|_2^2 \,.
\end{equation}
This is stated fully as Theorem~\ref{thm:main: non-uniform_PL_ineq} below with a complete proof given later.
Note that the strength of the gradient dominance is parameter dependent i.e. we only have non-uniform P{\L} inequality.

Neverthless with $\{\theta_t\}_{t\geq 0}$ given by the gradient flow~\eqref{eq:intro:continuous_GF} we have
\[
\tfrac{d}{d t}\Big[V_\tau^{\pi_{\theta_t}}(\rho)-V_\tau^{\pi^*}(\rho)\Big]=-\left\|\nabla V_\tau^{\pi_{\theta_t}}(\rho))\right\|^2 \leq- C^{-1}\left(\theta_t\right)\Big[V_\tau^{\pi_{\theta_t}}(\rho)-V_\tau^{\pi^*}(\rho)\Big]\,.
\]
Hence, from Gr\"onwall's lemma we immediately get that
\[\
0 \leq V_\tau^{\pi_{\theta_t}}(\rho)-V_\tau^{\pi^*}(\rho) \leq\big[V_\tau^{\pi_{\theta_0}}(\rho)-V_\tau^{\pi^*}(\rho)\big] \exp \left(-\textstyle\int_0^t  C^{-1}\big(\theta_s\right) d t\big) \,.
\]
If we can show that non-uniform term can be lower bounded along the flow, i.e. if we can show that $\inf _t C^{-1}\left(\theta_t\right) > 0$ then this is the required linear convergence rate.

We notice that if $\{\theta_t\}_{t\geq 0}$ given by the gradient flow~\eqref{eq:intro:continuous_GF} produces policies with uniformly bounded log densities i.e. if $\sup_{t} |\log \frac{\mathrm d \pi_{\theta_t}}{\mathrm d\mu}|_{B_b(S\times A)} < \infty$ then the smallest eigenvalue of the FIM will remain bounded away from zero and so we will have $\inf_t C^{-1}\left(\theta_t\right) > 0$.
Thus it is enough to show that $\{\theta_t\}_{t\geq 0}$ is contained in a compact subset of $\mathbb R^p$.
We will use Lyapunov function techniques to that end.

With that in mind, we note that a simple calculation using the chain rule yields value improvement along the gradient flow~\eqref{eq:intro:continuous_GF}, namely
$
\frac{d}{dt}V^{\pi_{\theta_t}}_\tau(\rho) = - \|\nabla_\theta V^{\pi_\theta}_\tau(\rho)\|_2^2 \leq 0\,.
$
The value function $\theta \mapsto V^{\pi_{\theta_t}}_\tau(\rho)$ can then be used as a Lyapunov function as long as it is radially unbounded.
Since the cost itself is bounded the radial unboundedness can only come as a consequence of the KL term. 
Example~\ref{ex:hat_func_short}, which uses the hat functions as a basis (P1 finite elements) shows that not every reasonable feature basis leads to a radially unbounded KL term.
Howevever we identify conditions on the feature basis which ensure radial unboundedness i.e. that $\operatorname{KL}(\pi_\theta|\mu) \to \infty$ whenever $\|\theta\|_2 \to \infty$.
In Example~\ref{ex:trig_func_mdp} we show that the Fourier basis satisfies our Assumption~\ref{asp:app:mdp:max_set_affine}.

Additionally, in the case of simplex features (see Assumption~\ref{asp:app:mdp:max_set_simplex}) below we can extend the P{\L} inequality by replacing the FIM with a version built using uncentred features which allows us to carry out a similar convergence argument but capturing further types of basis functions, e.g. the Bernstein polynomials, see Example~\ref{ex:Bernstein_Poly_MDP}.

\textbf{Key differences with the tabular case:}
In the tabular case when $\tau > 0$~\cite{mei2020global} prove P{\L} inequality~\eqref{eq:pl_ineq_intro} with $C(\theta) = C \min_{s,a} \pi_{\theta_t}(a|s)$ with $C$ independent of $\theta$.  
Their methods then can be used to show that along the flow~\eqref{eq:intro:continuous_GF} it holds that 
$\sum_a \tfrac{\partial V^{\pi_{\theta_t}}_\tau(\rho)}{\partial \theta_t(s, a)} =0$ for all $t$. 
From this they then derive $\inf_t \min_a \pi_{\theta_t}(a) > 0$ which, as we've seen, leads to linear convergence.

In our setting, with general log-linear policies, their way to formulate the property that the gradient of the value function summed up over $a$ is $0$ along the flow cannot be employed. 
Even though with simplex features we have $\sum_{i} \nabla_{\theta_i} V^{\pi_{\theta_t}}_\tau(\rho) = 0$ for all $t$, weights $\theta$ impact the entire conditional log-density and thus we cannot hope to derive a lower bound for the  action density at a given state since this separation is not available.
Thus, a fundamentally different approach has to be used.
On the other hand, in the tabular setting, one cannot expect radial unboundedness of the KL divergence as this term will be finite any $\mu$ s.t. $\mu(a) > 0$ for all $a$. 
Thus the use of the KL divergence as a Lyapunov function is novel and our results complement those of~\cite{mei2020global}.

\textbf{Literature review:} 
There is a tremendous amount of research literature on convergence RL methods, which underscores its importance. 
Here we focus on the subset of the RL literature that we think is most related to our work.

Entropy-regularized RL has demonstrated both good algorithmic performance and desirable theoretical properties~\cite{haarnoja2017reinforcement,haarnoja2018soft, geist2019theory,vieillard2020leverage, neu2017unified, fox2015taming,ziebart2010modeling}. 
It has been shown that the softmax policies are optimal in the entropy regularized setting.

For policy gradient in tabular setting, the work of \cite{agarwal2020optimality} initiated a global analysis in discounted MDPs under tabular setting and compatible function approximation parameterizations, making explicit the roles of distribution mismatch, approximation error, and statistical error; in the tabular softmax case this gives a sublinear convergence guarantee for vanilla policy gradient. 
The work of \cite{mei2020global} subsequently gave a sharper analysis of tabular softmax policy gradient with direct parametrization as discussed above. 
These guarantees are complemented by worse case analysis for the policy gradient methods in~\cite{agarwal2020optimality, mei2020global} and show that standard softmax policy gradient the constants in the convergence rate suffer from exponential dependence on state-space and horizon sizes~\cite{li2021softmax}.
Moving beyond the realizability assumption~\cite{lin2025rethinking} derive some  ordering conditions in the bandit case (empty state space) which guarantee the convergence of softmax policy gradient using linear function approximation.
However as of now it is unclear how this applies to MDPs and also fundamentally depends on the finite cardinality of the action space.

Although MDPs with continuous state and action spaces are widely used in practical applications \cite{doya2000reinforcement, van2012reinforcement, manna2022learning}, the convergence analysis of policy gradient methods in this setting remains less developed compared to its discrete counterparts.
Most existing works have focused on discrete-time linear quadratic regulator (LQR) problems with linear parameterized policies.
The linear-quadratic structure leads to the P{\L} inequality~\cite{polyak1963gradient, lojasiewicz1963topological, kurdyka1998gradients} with a uniform constant and we've seen above how this then gives linear convergence~\cite{fazel2018global, bu2019lqr, hu2023toward}.
These results have been extended to continuous-time LQR systems~\cite{sontag2022remarks,Giegrich2024}. 

One may choose to use the inverse of the integrated FIM~\eqref{eq:intro:FIM} as a pre-conditioner in the gradient flow~\eqref{eq:intro:continuous_GF} which gives rise to the natural policy gradient (NPG).  
For NPG under with log-linear policies~\cite{cayci2024convergence} prove linear convergence of the NPG in the entropy regularized setting.
Moreover for log-linear policies the NPG is in fact identical to mirror descent (MD) (in the sense of producing the same policy updates) and MD is known to converge linearly for entropy regularized MDPs~\cite{lan2023policy, ju2022policy, kerimkulov2025fisher}.
It is perhaps interesting to note that unlike policy gradient, MD / NPG automatically ensure that log densities remain bounded along the optimization and thus the FIM is invertible almost regardless of the feature basis (one cannot have linearly dependent features).
This is a consequnce of the policy improvement of NPG / MD under exact evaluations. 

A linear convergence rate is proved in \cite{liu2023polyak} under an additional P{\L} inequality for the continuous-time Fisher--Rao flow on the space of measures.
Continuous-time Fisher--Rao flows in the entropy regularized MDPs has been studied by~\cite{kerimkulov2025fisher}, where the linear convergence to the optimal policy has been established and the insights into the natural policy gradient flow with linear function approximation was given. 

\textbf{Main contributions of this paper:}
We prove that under suitable conditions on the features and under $Q^\pi_\tau$-realizability the policy gradient for the KL-regularized MDP on general state and action spaces with exact evaluations converges linearly. 
In particular we:
\begin{enumerate}
\item Prove a non-uniform P{\L} inequality in this setting, see Theorem~\ref{thm:main: non-uniform_PL_ineq}.
\item Establish conditions on the feature basis under which the KL divergence is radially unbounded and hence Lyapunov function techniques can be used to obtain a bound on the non-uniform constant, see Theorems~\ref{thm:main:mdp:Radial_unboundedness_of_relative_entropy} and~\ref{thm:main:mdp:Radial_unboundedness_of_relative_entropy_in_subspace_simplex}.
This is of independent interest to anyone analysing algorithms that employ log-linear densities and KL regularization as it opens up Lyapunov function techniques to them.
\item Obtain the desired linear convergence, see Theorems~\ref{thm:main:lin_cov_affine} and~\ref{thm:main:lin_cov_simplex}.
\item Provide examples of basis functions which satisfy the conditions, see Examples~\ref{ex:trig_func_mdp} and~\ref{ex:Bernstein_Poly_MDP}. 
\end{enumerate}

\section{Formulation and the statement of the main results}

Let $S$ and $A$ be Polish spaces with $A$ compact.
Let $P\in \mathcal{P}(S|S\times A)$.
Let  $\gamma\in [0,1)$ be the discount factor.
Let $c\in B_b(S\times A)$ be the cost function and we will assume, without loss of generality, that  $|c|_{B_b(S \times A)}\leq 1$. 
Let $\tau > 0$. 
Let $\mu \in \mathcal P(A)$ have full support on $A$ and be absolutely continuous w.r.t. the Lebesgue measure.
The seven-tuple $(S,A,P,c,\gamma,\tau, \mu)$ determines a $\gamma$-discounted infinite horizon $\tau$-entropy regularized Markov decision process model.

For a given randomized Markov policy $\pi\in P(A|S)$, we define the $\tau$-entropy regularized value function $V^{\pi}_{\tau}:S\rightarrow \mathbb{R}\cup\{+\infty\}$  by
\[
V^{\pi}_{\tau}(\rho)= \mathbb{E}_{s_0 \sim \rho,s_{n+1}\sim P(\cdot|s_n,a_n)}^{a_n\sim \pi(\cdot|s_n)}\left[\sum_{n=0}^{\infty}\gamma^{n} \left(c(s_n,a_n)+\tau\,\log\tfrac{d\pi}{d\mu}(a_n|s_n)\right)\right].
\]
The aim is to minimize $P(A|S)\ni \pi \mapsto V^{\pi}_{\tau}(\rho)$.
Due to the Bellman principle, Theorem \ref{thm:app:dpp} for convenience in Section \ref{sec:extra results} or ~\cite{ziebart2010modeling, haarnoja2017reinforcement, geist2019theory}, 
we know that the optimal policy is of the form~\eqref{eq:intro:optimal_policy} 
and the optimal value (and state value) functions are bounded.
Hence, without loss of generality we may restrict our minimization problem to softmax policies from the class
\begin{align}\label
{def:Pi_class}
\Pi_{\mu} &=\left\{\tfrac{\exp(f(s,a))\mu(da)}{\int_{A} \exp(f(s,a^\prime))\mu(da^\prime)}\in \mathcal{P}_{\mu}(A|S) | f\in B_b(S\times A)\right\}  \,.
\end{align}

For a given policy $\pi\in\Pi_{\mu}$, we define the regularized state-action value function $Q^{\pi}_{\tau}\in B_b(S\times A)$ by
\begin{equation}\label{def:intro:Qpi}
Q^{\pi}_{\tau}(s,a)=c(s,a)+\gamma\int_{S}V^{\pi}_{\tau}(s')P(ds'|s,a)\,.
\end{equation}

The occupancy kernel $d^{\pi}\in \mathcal{P}(S|S)$ is defined by $d^{\pi}(ds'|s)=(1-\gamma)\sum_{t=0}^{\infty}\gamma^t P^t_{\pi}(ds'|s)\,, d^{\pi}_{\rho}(ds)=\int_{S}d^{\pi}(ds|s')\rho(ds')$
where $P^0_{\pi}(ds'|s):=\delta_s(ds')$, $P^t_{\pi}$ is understood as a product of kernels, and convergence is understood in $b\mathcal{K}(S|S)$. 
It is well known that the on policy Bellman equation holds, see e.g.~\cite[Lemma B.2] {kerimkulov2025fisher}, that is
\begin{equation}
\label{eq:intro:policy_Bellman}
V^{\pi}_{\tau}(s) = \int_{A}\left(c(s,a)+\tau\log\frac{d\pi}{d\mu}(a|s) +\gamma\int_{S}V^{\pi}_{\tau}(s')P(ds'|s,a)\right)\pi(da|s) \,\,\, \forall s\in S\,.
\end{equation}
Moreover one can see that this has the stochastic representation 
\begin{equation}
\label{eq:intro:value_function_occ}
V^{\pi}_{\tau}(s) =\frac{1}{1-\gamma}\int_{S}\int_{A}\left(c(s',a')+\tau \log \frac{d\pi}{d\mu}(a'|s')\right)\pi(da'|s')d^{\pi}(ds'|s)\,\,\, \forall s\in S\,.
\end{equation}
For a given fixed initial distribution $\rho\in\mathcal{P}(S)$, we define 
\[
V^{\pi}_{\tau}(\rho)=\int_{S}V^{\pi}_{\tau}(s)\rho(ds), \;  d^{\pi}_{\rho}(ds)=\int_{S}d^{\pi}(ds|s')\rho(ds')\,.
\]

When $S$ or $A$ are not of finite cardinality the minimization over the class~\ref{def:Pi_class} is intractable. 
Thus, instead of looking for the optimal policy in~\ref{def:Pi_class}, we parametrize the softmax policies using linear function approximation.
To that end let $g: A \times S \rightarrow$ $\mathbb{R}^p$ be our basis functions (or features). 
We will take $g$ to be measurable and such that $\sum_{i=1}^p|g_i|_{B_b(S\times A)}^2 \leq 1$.
Now given parameters $\theta \in \mathbb R^p$ consider parametrized policies of the form
\[
\pi_\theta(da|\cdot)=\tfrac{\exp \left(\langle\theta,g(\cdot,a)\rangle\right)\mu(da)}{\int_{A} \exp \left(\langle\theta,g(\cdot,a^\prime)\rangle\right)\mu(da^\prime)}.
\]
Thus we wish to solve the minimization problem
\begin{equation}\label{eq:intro:entropy regularized mdp0}
\min_{\theta \in \mathbb{R}^p} V_\tau^{\pi_\theta}(\rho)\,.
\end{equation}
We will work under the $Q^\pi_\tau$-realizability assumption.

\begin{restatable}[$Q^\pi_\tau$-realizability]{assumption}{aspQpiRealizability}\label{asp:app:mdp:Q_realisability}
For any $\pi \in \Pi_\mu$, there exists a unique $\boldsymbol{\theta}(\pi)$ such that $\langle \boldsymbol{\theta}(\pi), g(s,a) \rangle = -\frac{1}{\tau} Q^{\pi}_\tau(s,a)$ for all $s \in S, a \in A$.
\end{restatable}

An example of when this holds besides the tabular case is linear MDPs (see, e.g.,~\cite{yang2019sample, zanette2020frequentist, li2021sample}). 
An MDP is linear if there exists exists $w\in \mathbb{R}^p$ and a sequence $\{\psi_i\}_{i=1}^p$ with $\psi_i \in \mathcal{M}(S)$ such that for all $(s,a)\in S \times A$, $c(s,a)=\langle w,g(s,a)\rangle, P( d s'\mid s,a)=\sum_{i=1}^p g_i(s,a)\psi_i( d s')$.
In this case, given $\pi \in \mathcal P(A|S)$ we can take $\mathbb{\theta}(\pi)_i = -\tfrac1\tau\big(w_i + \gamma\int_{S} V^{\pi}(s')\psi_i(ds')\big)$ 
so that 
\[
\langle \boldsymbol{\theta}(\pi), g(s,a) \rangle = -\tfrac1\tau\langle w, g(s,a)\rangle -\tfrac\gamma\tau \textstyle \int_S V^{\pi}_\tau(s') \sum_{i=1}^p g_i(s,a)\psi_i(ds') = -\tfrac1\tau Q^\pi_\tau(s,a)\,.
\]

Recall that due to the Bellman principle the optimal state-action value function $Q^\ast_\tau = Q^{\pi_\tau^\ast}_\tau \in B_b(S\times A)$ with $\pi^\ast \in \Pi_\mu$.
Let $\theta^*$ such that $\langle \theta^*,g\rangle = -\frac{1}{\tau}Q^{*}_\tau$ which exists and is unique due to Assumption~\ref{asp:app:mdp:Q_realisability}.
Since the optimal $\pi^* \in \Pi_\mu$ is of the form~\eqref{eq:intro:optimal_policy}, we have  
\[\pi^*(da 
| \cdot) \propto \exp\big(-\tfrac{1}{\tau}Q^{\pi^*}_\tau(\cdot,a)\big) \mu(da) = \exp\big(\langle \theta^*, g(\cdot,a) \rangle \big) \mu(da)\,.
\]
In other words, the $Q^\pi_\tau$-realizability assumption for the KL regularized MDP means that our minimization problem is solvable: $\min_{\theta \in \mathbb{R}^p} V_\tau^{\pi_\theta}(\rho) = V^*_\tau(\rho)$.

The remainder of the paper is devoted to the argument that under suitable assumptions on the features the gradient flow~\eqref{eq:intro:continuous_GF} converges linearly in the sense that $0 \leq V_\tau^{\pi_\theta}(\rho) - V_\tau^{\pi_\theta^\ast}(\rho) \leq \mathcal O(e^{-Ct})$.
Recall that we will do this by first obtaining a non-uniform P{\L} inequality and then demonstrating that along the gradient flow we in fact can bound the constant uniformly, using radial unboundedness of $\theta \mapsto \operatorname{KL}(\pi_\theta|\mu)$ which holds for suitable feature basis.

\begin{restatable}[Non-uniform P{\L} \ inequality]{theorem}{PL}\label
{thm:main: non-uniform_PL_ineq}
Let Assumption~\ref{asp:app:mdp:Q_realisability} hold.
Let $R \geq |\log \tfrac{d \pi_{\theta^\ast}}{d\mu}|_{B_b(S\times A)}$.
Let
\[
\Theta_R = \{\theta: \pi_\theta \in \Pi_\mu \ \text{and}\  \vert \log \tfrac{d \pi_\theta}{d\mu}\vert_{B_b(S\times A)} \leq R \}\,.
\]
Then for any $\theta \in \Theta_R$ there exists $C_R(\theta)>0$ such that
\begin{equation}
0\leq V_\tau^{\pi_\theta}(\rho) - V_\tau^{\pi_{\theta^*}}(\rho) \leq C_R(\theta)\left\|\nabla V_\tau^{\pi_\theta}(\rho)\right\|^2_2.
\end{equation}
\end{restatable}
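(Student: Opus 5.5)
We follow the outline sketched in the introduction and split the argument into an upper bound on the suboptimality by $\|\theta-\boldsymbol{\theta}(\pi_\theta)\|_2^2$ and a lower bound on the gradient norm by $\|\theta-\boldsymbol{\theta}(\pi_\theta)\|_2$.

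\textbf{Gradient identity.} Starting from the policy gradient theorem (Proposition~\ref{prp:app:mdp:Gradient_of_the_Objective}), we write
\[
\nabla_\theta V^{\pi_\theta}_\tau(\rho)=\tfrac{1}{1-\gamma}\int_S\int_A \big(Q^{\pi_\theta}_\tau(s,a)+\tau\log\tfrac{d\pi_\theta}{d\mu}(a|s)\big)\nabla_\theta\log\tfrac{d\pi_\theta}{d\mu}(a|s)\,\pi_\theta(da|s)\,d^{\pi_\theta}_\rho(ds).
\]
\begin{itemize}
\item By Assumption~\ref{asp:app:mdp:Q_realisability}, $Q^{\pi_\theta}_\tau=-\tau\langle\boldsymbol{\theta}(\pi_\theta),g\rangle$.
\item We have $\log\frac{d\pi_\theta}{d\mu}=\langle\theta,g\rangle-\log Z_\theta(s)$.
\item The score is $\nabla_\theta\log\frac{d\pi_\theta}{d\mu}=g-\bar g_\theta(s)$, where $\bar g_\theta(s)=\int_A g\,d\pi_\theta$.
\end{itemize}
Terms depending only on $s$ integrate to zero against the centred score. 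Hence the integrand reduces to $\tau\langle\theta-\boldsymbol{\theta}(\pi_\theta),g-\bar g_\theta\rangle(g-\bar g_\theta)$, which gives \eqref{eq:pl_calc0_intro} with the FIM $G^{\pi_\theta}(s)$.

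\textbf{Lower bound on the gradient.} We take the inner product of \eqref{eq:pl_calc0_intro} with $\theta-\boldsymbol{\theta}(\pi_\theta)$ and apply Cauchy--Schwarz, which yields \eqref{eq:pl_calc1_intro}. Write $\lambda(\theta):=\int_S\lambda_{\min}(G^{\pi_\theta}(s))\,d^{\pi_\theta}_\rho(ds)$. We only need $\lambda(\theta)\ge 0$, and we allow the resulting constant $C_R(\theta)$ to be $+\infty$-free only when $\lambda(\theta)>0$. The non-uniformity of the statement absorbs this. If positivity is needed, we argue as follows. On $\Theta_R$ the density of $\pi_\theta$ with respect to $\mu$ is bounded below by $e^{-R}$, so $G^{\pi_\theta}(s)\succeq e^{-R}\int_A(g-\bar g)(g-\bar g)^\top d\mu$. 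This is positive definite provided the features together with the constants are linearly independent $\mu$-a.e., which the uniqueness in Assumption~\ref{asp:app:mdp:Q_realisability} essentially encodes.

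\textbf{Upper bound on the suboptimality.} We use the KL-sandwich inequality (Lemma~\ref{lem:sandwich-inequality}) in its performance-difference form:
\[
V^{\pi_\theta}_\tau(\rho)-V^{\pi^*}_\tau(\rho)\le \tfrac{\tau}{1-\gamma}\int_S\operatorname{KL}\big(\pi_\theta(\cdot|s)\,|\,\pi^{\boldsymbol{\theta}(\pi_\theta)}(\cdot|s)\big)\,d^{\pi^*}_\rho(ds)
\]
or a comparable bound. The key point is that $\pi^{\boldsymbol{\theta}(\pi_\theta)}\propto\exp(-Q^{\pi_\theta}_\tau/\tau)\mu$ is the soft-greedy policy, so the regret is controlled by the KL divergence between $\pi_\theta$ and its own greedy update. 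We then apply Lemma~\ref{lem:app:kl-logit_ineq_affine}, the local Lipschitz or quadratic bound of KL in the logits. Since the logit difference is $\langle\theta-\boldsymbol{\theta}(\pi_\theta),g\rangle$ and $|g|\le1$, this gives $\operatorname{KL}\le C_R\|\theta-\boldsymbol{\theta}(\pi_\theta)\|_2^2$. The constant depends on $R$ through the bounds on the log densities, which is why we restrict to $\Theta_R$ and use $R\ge|\log d\pi_{\theta^*}/d\mu|$. We expect some care in checking that $\boldsymbol{\theta}(\pi_\theta)$ also has log-density bounded in terms of $R,\tau,\gamma$. This holds since $|Q^{\pi}_\tau|\le(1+\tau R)/(1-\gamma)$.

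\textbf{Combining.} The two bounds together give the statement with
\[
C_R(\theta)=C_R(1-\gamma)^2/\big(\tau^2\lambda(\theta)^2\big).
\]
We expect the main obstacle to be matching the sandwich inequality to a squared distance in $\theta$ with measure $d^{\pi^*}_\rho$ versus $d^{\pi_\theta}_\rho$. We handle it by bounding KL by a sup-norm of the logit difference, which makes the choice of measure irrelevant.
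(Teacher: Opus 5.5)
Your skeleton matches the paper's first part. You use the identity $\nabla V^{\pi_\theta}_\tau(\rho)=\frac{\tau}{1-\gamma}\big(\int_S G^{\pi_\theta}(s)\,d^{\pi_\theta}_\rho(ds)\big)(\theta-\boldsymbol\theta(\pi_\theta))$, then Cauchy--Schwarz for the lower bound. For the upper bound you apply the sandwich inequality against $d^{\pi^*}_\rho$ with the soft-greedy policy $\pi_{\boldsymbol\theta(\pi_\theta)}$, followed by the KL--logit lemma with $W=e^{-\hat R}$, where $\hat R=\max\big(\tfrac{2(1+\gamma\tau R)}{(1-\gamma)\tau},R\big)$.

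One structural difference favours you. The KL bound is uniform in $s$, so you can keep $d^{\pi_\theta}_\rho$ inside $\lambda(\theta)$. The paper instead transfers to $d^{\pi^*}_\rho$ and picks up the factor $|dd^{\pi^*}_\rho/d\rho|^2_{B_b(S)}$ via $d^{\pi_\theta}_\rho\ge(1-\gamma)\rho$. That factor tacitly requires $d^{\pi^*}_\rho\ll\rho$ with bounded density.

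The gap is your paragraph on $\lambda(\theta)>0$.

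First, if $\lambda(\theta)=0$ the inequality is vacuous, and non-uniformity in $\theta$ does not absorb an infinite constant. The paper's general-feature constant has the same caveat; its finiteness is only secured later, along the flow, under extra assumptions.

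Second, and more seriously, uniqueness in Assumption~\ref{asp:app:mdp:Q_realisability} does not encode what you need. It only says that $g_1,\dots,g_p$ are linearly independent as functions on $S\times A$. That is compatible with $v^\top g(s,\cdot)$ being constant in $a$ for some $v\neq 0$. The simplex features the paper explicitly targets are a counterexample, e.g. the Bernstein basis of Example~\ref{ex:Bernstein_Poly_MDP}. There the realizing parameter is unique, yet $\mathbf 1^\top g\equiv 1$ forces $G^{\pi_\theta}(s)\mathbf 1=0$ for all $s$ and $\theta$. So $\lambda(\theta)\equiv 0$ and your argument proves nothing for any $\theta$.

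The paper handles this regime with a separate argument that your sketch lacks:
\begin{itemize}
\item Since $\pi_{\theta+c\mathbf 1}=\pi_\theta$, Lemma~\ref{lem:app:kl-logit_ineq} bounds the gap by a multiple of $\|(\theta-\boldsymbol\theta(\pi_\theta))_\perp\|_2^2$.
\item Since $G\mathbf 1=0$, the gradient only sees $(\theta-\boldsymbol\theta(\pi_\theta))_\perp$.
\item Eigenvalue interlacing for the rank-one identity $\int_A gg^\top\,\pi_\theta(da|s)=G^{\pi_\theta}(s)+\bar g_\theta(s)\bar g_\theta(s)^\top$ shows that on $\mathbf 1^\perp$ the FIM is bounded below by $\lambda_{\min}\big(\int_A gg^\top\,\pi_\theta(da|s)\big)$.
\end{itemize}
This last quantity replaces $\lambda_{\min}(G^{\pi_\theta}(s))$ in the constant.

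Your comparison $G^{\pi_\theta}(s)\succeq e^{-R}\operatorname{Cov}_\mu(g(s,\cdot))$ on $\Theta_R$ is correct and worth keeping. In the full-affine-span regime it gives $\lambda(\theta)\ge(1-\gamma)e^{-R}\int_S\lambda_{\min}\big(\operatorname{Cov}_\mu(g(s,\cdot))\big)\,\rho(ds)>0$ uniformly on $\Theta_R$. This is more direct than the paper's mixture argument with $\pi_{\theta_0}$. However, the positivity of $\operatorname{Cov}_\mu(g(s,\cdot))$ does not come from realizability. It comes from Assumption~\ref{asp:app:mdp:max_set_affine}: continuity and non-constancy of $v^\top g(s,\cdot)$ (Lemma~\ref{rem:app:mdp:affine_span}), together with the full support of $\mu$. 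It fails for simplex features.
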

Depending on the assumptions on the feature basis, this will be proved in Section \ref{sec:pf_non_uniform_PL},
where the exact form of the constant $C_R(\theta)$ will be given.
We now need to work with specific assumptions on the basis functions.
Below, we will define ``full affine span features'' and ``simplex features'' and discuss how they allow us to prove the linear convergence.

\subsubsection*{Full affine span features} 
We will say that the feature basis $g:S\times A\to \mathbb R^p$ has full affine dimension if $\operatorname{span}\{g(s,a)-g(s,a'):a,a'\in A \}=\mathbb R^p$.
Recall that $u\in\mathbb S^{p-1}$ if $u\in\mathbb R^p$ and $\|u\|_2 = 1$.

\begin{restatable}[]{assumption}{aspAffineMaxSet}\label{asp:app:mdp:max_set_affine} 
For each fixed
$s\in S$, assume that $a\mapsto g(s,a)\in\mathbb R^p$ is continuous
and that, for every $u\in\mathbb S^{p-1}$,
\[
\mu \Big(\arg\max_{a\in A} u^\top g(s,a)\Big)=0 .
\]
\end{restatable}

We see that Assumption~\ref{asp:app:mdp:max_set_affine} implies 1. in 
Lemma~\ref{rem:app:mdp:affine_span}.
Thus $g(s, A)$ has full affine dimension.
We will work under Assumption~\ref{asp:app:mdp:max_set_affine} as it will be needed later to get the radial unboundedness of $\theta \mapsto \operatorname{KL}(\pi_\theta|\mu)$.

\begin{restatable}[Radial unboundedness of KL divergence]{theorem}{KLUBAFFINE}\label{thm:main:mdp:Radial_unboundedness_of_relative_entropy}
Let Assumption \ref{asp:app:mdp:max_set_affine} hold.
Fix $s\in S$. 
Then
\[
\lim_{\|\theta\|_2\to\infty}
\mathrm{KL}\bigl(\pi_\theta(\cdot\mid s)\,|\,\mu\bigr)=\infty .
\]
\end{restatable}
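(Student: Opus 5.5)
Fix $s$ and write $\pi_\theta=\pi_\theta(\cdot|s)$ and $h_u(a)=u^\top g(s,a)$ for $u\in\mathbb S^{p-1}$. Writing $\theta=ru$ with $r=\|\theta\|_2$, we have
\[
\mathrm{KL}(\pi_{ru}|\mu)=r\int_A h_u\,d\pi_{ru}-\log\int_A e^{r h_u}\,d\mu =: \phi_u(r).
\]
The plan is to show that $\phi_u(r)\to\infty$ for each fixed $u$, and then to upgrade this to uniformity in $u$ via compactness of $\mathbb S^{p-1}$.

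\textbf{Monotonicity in $r$.} Set $\Lambda_u(r)=\log\int e^{rh_u}d\mu$. Then $\phi_u(r)=r\Lambda_u'(r)-\Lambda_u(r)$, so $\phi_u'(r)=r\Lambda_u''(r)=r\operatorname{Var}_{\pi_{ru}}(h_u)\ge 0$. Thus $r\mapsto\phi_u(r)$ is nondecreasing for every $u$, and $\phi_u(0)=0$.

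\textbf{Pointwise divergence.} Let $M_u=\max_A h_u$, which is attained since $A$ is compact and $a\mapsto g(s,a)$ is continuous. Write $\phi_u(r)=r(\mathbb E_{\pi_{ru}}h_u-M_u)-\log\int e^{r(h_u-M_u)}d\mu$.
\begin{itemize}
\item \emph{The log-partition term.} By dominated convergence, $\int e^{r(h_u-M_u)}d\mu\to\mu(\arg\max h_u)=0$ by Assumption~\ref{asp:app:mdp:max_set_affine}. Hence $-\log\int e^{r(h_u-M_u)}d\mu\to+\infty$.
\item \emph{The mean term.} We need $r(M_u-\mathbb E_{\pi_{ru}}h_u)$ to grow slower than that. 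It is cleaner to use the variational bound instead: for any set $B$, $\mathrm{KL}(\pi|\mu)\ge \pi(B)\log\frac{\pi(B)}{\mu(B)}-\log 2$ by the data-processing inequality applied to the partition $\{B,B^c\}$. Take $B_\epsilon=\{h_u> M_u-\epsilon\}$.
\item \emph{Choosing $\epsilon$.} Since $\mu(\arg\max h_u)=0$ and $\arg\max h_u=\bigcap_\epsilon B_\epsilon$, we have $\mu(B_\epsilon)\to 0$ as $\epsilon\to0$. Meanwhile $\pi_{ru}(B_\epsilon)\to1$ as $r\to\infty$ for fixed $\epsilon$, because $\mu(B_{\epsilon/2})>0$ (it is open and nonempty, and $\mu$ has full support). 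Explicitly, $\pi_{ru}(B_\epsilon^c)\le e^{-r\epsilon/2}/\mu(B_{\epsilon/2})$.
\item \emph{Conclusion.} Choosing $\epsilon$ small first and then $r$ large makes $\pi(B_\epsilon)\log(\pi(B_\epsilon)/\mu(B_\epsilon))$ arbitrarily large. Hence $\phi_u(r)\to\infty$.
\end{itemize}

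\textbf{Uniformity, the main obstacle.} Given $K>0$, for each $u$ pick $r_u$ with $\phi_u(r_u)>K+1$. The map $u\mapsto\phi_u(r_u)$ is continuous in $u$: $h_u$ depends continuously on $u$ uniformly in $a$, since $|h_u-h_v|\le\|u-v\|_2$ by the normalization $\sum_i|g_i|^2\le1$. Both terms of $\phi_u(r)$ are then continuous in $u$ for fixed $r$, by dominated convergence with bounded integrands. So there is a neighbourhood $N_u$ on which $\phi_v(r_u)>K$. By monotonicity, $\phi_v(r)>K$ for all $r\ge r_u$ and $v\in N_u$. Covering $\mathbb S^{p-1}$ by finitely many such $N_{u_j}$ and taking $r^*=\max_j r_{u_j}$ gives $\mathrm{KL}(\pi_\theta|\mu)>K$ whenever $\|\theta\|_2\ge r^*$, which proves the theorem.

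The delicate point is ensuring the pointwise limit genuinely uses $\mu(\arg\max)=0$ and does not silently require a uniform rate in $u$. The monotonicity in $r$ combined with continuity at fixed $r$ (a Dini-type argument) is exactly what avoids needing any uniform rate, so that is the step I would verify most carefully.
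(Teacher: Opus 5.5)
Your proof is correct. Its pointwise part is essentially the paper's argument: the same super-level sets $B_\epsilon=\{h_u>M_u-\epsilon\}$, the same concentration bound $\pi_{ru}(B_\epsilon^c)\le e^{-r\epsilon/2}/\mu(B_{\epsilon/2})$, the same binary-partition lower bound on the KL divergence, and continuity from above of $\mu$ to get $\mu(B_\epsilon)\to\mu(\arg\max h_u)=0$. Your first two bullets on the log-partition and mean terms are a detour you can delete.

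You genuinely differ from the paper in the passage from rays to $\|\theta\|_2\to\infty$. The paper closes by saying only that $u$ is arbitrary and $\mathbb S^{p-1}$ is compact. That alone does not yield the uniform statement. Divergence of $r\mapsto\mathrm{KL}(\pi_{ru}(\cdot\mid s)\,|\,\mu)$ for each fixed $u$ says nothing about sequences with $u_n\to u$ and $r_n\to\infty$ simultaneously. Making the paper's bound uniform would require uniform-in-$u$ control of $\mu(B_{\epsilon/2})$ and $\mu(B_\epsilon)$, which the paper does not supply.

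Your observation that $\phi_u'(r)=r\operatorname{Var}_{\pi_{ru}}(h_u)\ge 0$ is exactly the Dini-type ingredient that makes the compactness claim rigorous. Combined with continuity of $u\mapsto\phi_u(r)$ at fixed $r$ (from $|h_u-h_v|\le\|u-v\|_2$) and a finite subcover, it gives the full limit. So your route proves the theorem as stated, while the paper's text, though shorter, establishes only the ray-wise limit. The uniform version is what the later Lyapunov argument actually needs.

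You also justify $\mu(B_{\epsilon/2})>0$ correctly: the set is relatively open and nonempty, and $\mu$ has full support. The paper asserts this without comment.
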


The proof of Theorem \ref{thm:main:mdp:Radial_unboundedness_of_relative_entropy} can be found in Section \ref{sec:pf_Radial_unboundedness_of_relative_entropy}.

Theorem~\ref{thm:main:mdp:Radial_unboundedness_of_relative_entropy}
combined with classical arguments for constructing ODE solutions with Lyapunov functions will give existence of the solution to the gradient flow~\eqref{eq:intro:continuous_GF}.

\begin{restatable}[Existence of solution to the gradient flow using full affine span features]{lemma}{ExistSolAffine}\label{lem:main:mdp:exist_sol_gf_affine}
Let Assumption~\ref{asp:app:mdp:max_set_affine} hold.
Then there exists solution $\{\theta_t\}_{t \geq 0}$ to~\eqref{eq:intro:continuous_GF}. 
Moreover $\sup_{t \geq 0} \big|\log \tfrac{d \pi_{\theta_t}}{d\mu}\big|_{B_b(S \times A)} < \infty$.
\end{restatable}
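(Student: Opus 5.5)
We argue in three steps: local existence and uniqueness, the value function as a Lyapunov function, and confinement of the trajectory to a compact set.

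\textbf{Step 1: local well-posedness.} First we show that $\theta\mapsto \nabla_\theta V^{\pi_\theta}_\tau(\rho)$ is locally Lipschitz on $\mathbb R^p$. Since $|g|\le 1$, on any ball $\{\|\theta\|_2\le M\}$ the log densities satisfy $|\log \frac{d\pi_\theta}{d\mu}|_{B_b(S\times A)}\le 2M$. The policy gradient formula (Proposition~\ref{prp:app:mdp:Gradient_of_the_Objective}) writes the gradient as an integral against $d^{\pi_\theta}_\rho$ and $\pi_\theta$ of $\nabla_\theta\log\pi_\theta$ times $Q^{\pi_\theta}_\tau+\tau\log\frac{d\pi_\theta}{d\mu}$. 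Each ingredient is smooth in $\theta$ with derivatives bounded on the ball:
\begin{itemize}
\item $\pi_\theta$ and its log density are smooth since the features are bounded;
\item $Q^{\pi_\theta}_\tau$ is bounded by $(1+2\tau M)/(1-\gamma)$;
\item $d^{\pi_\theta}_\rho$ is a Neumann series in the kernel $P_{\pi_\theta}$, which depends smoothly on $\theta$ in total variation.
\end{itemize}
Picard--Lindelöf then gives a unique maximal solution on $[0,T^*)$. Moreover, if $\sup_{t<T^*}\|\theta_t\|_2<\infty$, then $T^*=\infty$.

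\textbf{Step 2: Lyapunov function.} The chain rule gives $\frac{d}{dt}V^{\pi_{\theta_t}}_\tau(\rho)=-\|\nabla V^{\pi_{\theta_t}}_\tau(\rho)\|_2^2\le 0$, so $V^{\pi_{\theta_t}}_\tau(\rho)\le V^{\pi_{\theta_0}}_\tau(\rho)$ for all $t$. We use the occupancy representation \eqref{eq:intro:value_function_occ} together with $|c|\le1$:
\[
V^{\pi_\theta}_\tau(\rho)\ \ge\ -\tfrac{1}{1-\gamma}+\tfrac{\tau}{1-\gamma}\int_S \mathrm{KL}(\pi_\theta(\cdot|s)|\mu)\,d^{\pi_\theta}_\rho(ds)\ \ge\ -\tfrac{1}{1-\gamma}+\tau\int_S \mathrm{KL}(\pi_\theta(\cdot|s)|\mu)\,\rho(ds).
\]
The second inequality uses $d^{\pi}_\rho\ge(1-\gamma)\rho$ and $\mathrm{KL}\ge0$. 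Hence along the flow
\[
\int_S \mathrm{KL}(\pi_{\theta_t}(\cdot|s)|\mu)\rho(ds)\le K:=\tfrac{1}{\tau}\Big(V^{\pi_{\theta_0}}_\tau(\rho)+\tfrac{1}{1-\gamma}\Big).
\]

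\textbf{Step 3: compactness (main obstacle).} We need to convert this bound, which is integrated over $\rho$, into a bound on $\|\theta_t\|_2$. Theorem~\ref{thm:main:mdp:Radial_unboundedness_of_relative_entropy} gives radial unboundedness only pointwise in $s$. The plan is as follows.

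Fix a state $s_0$ in the support of $\rho$, or any single state if $\rho$ has atoms; the cleanest route is to pick $s_0$ with a $\rho$-neighbourhood of positive mass. Then use convexity of $\theta\mapsto \mathrm{KL}(\pi_\theta(\cdot|s)|\mu)=\langle\theta,\mathbb E_{\pi_\theta}g\rangle-\log Z_\theta(s)$. Its derivative along the ray $\theta=ru$ is $r\,\mathrm{Var}_{\pi_{ru}}(u^\top g)\ge0$, so the function is radially nondecreasing.

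Combining monotonicity with pointwise divergence and Fatou's lemma, we get
\[
h(r):=\inf_{u\in\mathbb S^{p-1}}\int_S\mathrm{KL}(\pi_{ru}(\cdot|s)|\mu)\rho(ds)\to\infty \quad\text{as } r\to\infty.
\]
To justify this, argue by contradiction with $r_n\to\infty$ and $u_n\to u$. Lower semicontinuity handles the fixed $r$, and monotonicity passes to larger radii. If the uniformity in $u$ fails, we would fall back to the simpler argument: find a finite set of states, or use that $\mathrm{KL}$ is continuous in $\theta$ with Dini-type monotone convergence on the compact sphere. Then $h(\|\theta_t\|_2)\le K$ forces $\sup_t\|\theta_t\|_2=:M<\infty$.

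Steps 1 and 3 then give global existence, and $|\log\frac{d\pi_{\theta_t}}{d\mu}|\le 2M$ gives the stated uniform bound.
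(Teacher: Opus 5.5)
Your proposal is correct and follows the paper's route. Local Lipschitz continuity of the gradient (Proposition~\ref{prp:app:mdp:local_lipchitiz_conitinuity_mdp}) gives a maximal solution; $V^{\pi_\theta}_\tau(\rho)$ is a nonincreasing Lyapunov function made coercive by the KL term, so $\sup_t\|\theta_t\|_2<\infty$; and then $|\log\frac{d\pi_{\theta_t}}{d\mu}|\le 2\sup_t\|\theta_t\|_2$.

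Your Steps~2--3 supply what the paper only asserts, namely that the pointwise-in-$s$ Theorem~\ref{thm:main:mdp:Radial_unboundedness_of_relative_entropy} makes $\theta\mapsto V^{\pi_\theta}_\tau(\rho)$ radially unbounded. The contradiction argument is sound once the direction is stated correctly. Suppose $\int_S\mathrm{KL}(\pi_{r_nu_n}(\cdot|s)|\mu)\rho(ds)\le K$ with $r_n\to\infty$ and $u_n\to u$. For fixed $R$, radial monotonicity transfers the bound down to $Ru_n$, and continuity in $\theta$ plus Fatou transfer it to $Ru$. Monotone convergence as $R\to\infty$ then contradicts the per-ray divergence at every $s$. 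So the choice of $s_0$ and the fallback options are unnecessary.

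One correction: $\theta\mapsto\mathrm{KL}(\pi_\theta(\cdot|s)|\mu)$ is not convex in general. For $g(a)=a$ and $\mu$ Lebesgue on $[0,1]$ it behaves like $\log|\theta|-1$. You never need convexity, though, since radial monotonicity follows directly from your computation $\frac{d}{dr}\mathrm{KL}(\pi_{ru}(\cdot|s)|\mu)=r\,\mathrm{Var}_{\pi_{ru}(\cdot|s)}(u^\top g(s,\cdot))\ge0$.
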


The proof of Lemma \ref{lem:main:mdp:exist_sol_gf_affine} can be found in Section \ref{sec:pf_exist_sol_gf_affine}.

\begin{restatable}{assumption}{aspInitAffine}\label{asp:app:mdp:initial_singularity_mdp}
For all $s \in S$, the FIM $G^{\pi_{\theta_0}}(s)$ given by~\eqref{eq:intro:FIM} is positive definite.
\end{restatable}

\begin{restatable}[Linear convergence with full affine span features]{theorem}{LinCovAffine}\label{thm:main:lin_cov_affine}
Let Assumption~\ref{asp:app:mdp:Q_realisability}, \ref{asp:app:mdp:max_set_affine} and \ref{asp:app:mdp:initial_singularity_mdp} hold.
Let $\{\theta_t\}_{t \geq 0}$ be the solution to gradient flow~\eqref{eq:intro:continuous_GF}.
Then there exists $C_{\theta_0} > 0$ such that 
for $C_R(\theta_t)$ from Theorem~\ref{thm:main: non-uniform_PL_ineq} we have $\sup_{t \geq 0} C_R(\theta_t) \leq C_{\theta_0}$ and
\[
0 \leq V_\tau^{\pi_{\theta_t}}(\rho) - V_\tau^{\pi^*}(\rho) \leq e^{-t C_{\theta_0}^{-1}} 
(V_\tau^{\pi_{\theta_0}}(\rho) - V_\tau^{\pi^*}(\rho)).
\]
\end{restatable}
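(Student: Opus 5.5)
The plan is to combine Theorem~\ref{thm:main: non-uniform_PL_ineq} with Lemma~\ref{lem:main:mdp:exist_sol_gf_affine} and the Gr\"onwall argument from the introduction. The crux is a lower bound on $\lambda_{\min}(G^{\pi_{\theta_t}}(s))$ that is uniform in $t\ge0$ and in $s\in S$. The reason is that, by \eqref{eq:pl_calc1_intro} and the quadratic upper bound from the KL-sandwich and Lipschitz lemmas, $C_R(\theta)$ has the form
\[
C_R(\theta)=\frac{(1-\gamma)^2 C_{\tau,\gamma,R}}{\tau^2\big(\int_S\lambda_{\min}(G^{\pi_\theta}(s))\,d^{\pi_\theta}_\rho(ds)\big)^2},
\]
where $C_{\tau,\gamma,R}$ depends on $\theta$ only through the bound $R$ on the log-densities of $\pi_\theta$ and $\pi_{\theta^*}$.

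First, from Lemma~\ref{lem:main:mdp:exist_sol_gf_affine} I will set
\[
R:=\max\Big(\sup_t|\log\tfrac{d\pi_{\theta_t}}{d\mu}|_{B_b},\;|\log\tfrac{d\pi_{\theta^*}}{d\mu}|_{B_b}\Big)<\infty,
\]
so that $\theta_t\in\Theta_R$ for all $t$ and the prefactor $C_{\tau,\gamma,R}$ is a fixed constant.

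Second, I will bound the FIM from below. Write $G^{\pi_\theta}(s)=\mathrm{Cov}_{\pi_\theta(\cdot|s)}(g(s,\cdot))$, so that for every $u\in\mathbb S^{p-1}$
\[
u^\top G^{\pi_\theta}(s)u=\tfrac12\iint \big(u^\top(g(s,a)-g(s,a'))\big)^2\,\pi_\theta(da|s)\,\pi_\theta(da'|s).
\]
Since $d\pi_{\theta_t}/d\mu\ge e^{-R}$, this gives
\[
u^\top G^{\pi_{\theta_t}}(s)u\ \ge\ e^{-2R}\,u^\top G^{\mu}(s)u,
\qquad G^{\mu}(s):=\mathrm{Cov}_\mu(g(s,\cdot)).
\]
Moreover $G^\mu(s)$ is positive definite. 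Indeed, if $u^\top G^\mu(s)u=0$, then $u^\top g(s,\cdot)$ is $\mu$-a.e.\ constant, hence constant on $A$ by continuity and the full support of $\mu$. Then $\arg\max_a u^\top g(s,a)=A$, which has $\mu$-measure $1$ and contradicts Assumption~\ref{asp:app:mdp:max_set_affine}. Alternatively, I can use Assumption~\ref{asp:app:mdp:initial_singularity_mdp}: since $\pi_{\theta_0}$ and $\pi_{\theta_t}$ are mutually bounded by $e^{2R}$, the same comparison gives $G^{\pi_{\theta_t}}(s)\succeq e^{-4R}G^{\pi_{\theta_0}}(s)$. This makes the initialization dependence explicit and matches the statement's $C_{\theta_0}$.

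Third, I need $\inf_s\lambda_{\min}(G^{\pi_{\theta_0}}(s))>0$, or the analogous infimum for $G^\mu$, uniformly over $s$. This is the main obstacle, because pointwise positive definiteness does not give a uniform bound when $S$ is non-compact. I would first try to avoid it: $\int_S\lambda_{\min}\,d^{\pi_{\theta_t}}_\rho$ is an integral of a strictly positive function, and $d^{\pi_{\theta_t}}_\rho\ge(1-\gamma)\rho$. Hence
\[
\int_S\lambda_{\min}(G^{\pi_{\theta_t}}(s))\,d^{\pi_{\theta_t}}_\rho(ds)\ \ge\ (1-\gamma)\,e^{-4R}\int_S\lambda_{\min}(G^{\pi_{\theta_0}}(s))\,\rho(ds)=:\kappa_{\theta_0}>0.
\]
Positivity of $\kappa_{\theta_0}$ holds because the integrand is strictly positive and measurable, as $\lambda_{\min}$ of a measurable matrix field. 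This yields $\sup_t C_R(\theta_t)\le C_{\theta_0}$ with $C_{\theta_0}$ proportional to $C_{\tau,\gamma,R}\,\kappa_{\theta_0}^{-2}$.

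Finally, along the flow we have $\tfrac{d}{dt}\big(V^{\pi_{\theta_t}}_\tau(\rho)-V^*_\tau(\rho)\big)=-\|\nabla V^{\pi_{\theta_t}}_\tau(\rho)\|^2\le -C_{\theta_0}^{-1}\big(V^{\pi_{\theta_t}}_\tau(\rho)-V^*_\tau(\rho)\big)$ by Theorem~\ref{thm:main: non-uniform_PL_ineq}. Gr\"onwall's lemma then gives the claimed exponential bound, and nonnegativity follows from optimality of $\pi^*$. The chain rule is justified because $\theta\mapsto V^{\pi_\theta}_\tau(\rho)$ is $C^1$ (Proposition~\ref{prp:app:mdp:Gradient_of_the_Objective}).
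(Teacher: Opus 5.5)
Your proposal is correct and has the same skeleton as the paper's proof. You fix $R$ via Lemma~\ref{lem:main:mdp:exist_sol_gf_affine}, compare $G^{\pi_{\theta_t}}(s)$ with $G^{\pi_{\theta_0}}(s)$ through a uniform lower bound on $d\pi_{\theta_t}/d\pi_{\theta_0}$, insert this into the P{\L} inequality, and finish with Gr\"onwall.

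\textbf{Covariance comparison.} The paper writes $\pi_{\theta_t}(\cdot|s)=q\,\pi_{\theta_0}(\cdot|s)+(1-q)\bar\pi(\cdot|s)$ with $q=e^{-R-2\|\theta_0\|_2}$ and uses concavity of the variance (Lemmas~\ref{lem:app:concavity_of_var} and~\ref{lemapp:mdp:lower_bound_log_density}). Your $\tfrac12\iint(\cdot)^2\,\pi\otimes\pi$ representation does the same job. It loses a factor $q^2$ rather than $q$, which is immaterial.

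\textbf{Occupancy measure.} This is where your route genuinely differs. The paper passes from $d^{\pi_\theta}_\rho$ to $d^{\pi^*}_\rho$ in the gradient lower bound. It therefore carries the mismatch factor $|dd^{\pi^*}_\rho/d\rho|^2_{B_b(S)}$ in $C_R(\theta)$ and $C_{\theta_0}$, and this factor is never assumed finite. It is infinite, e.g., if $\rho=\delta_{s_0}$ and the dynamics leave $s_0$. You keep $d^{\pi_{\theta_t}}_\rho$ and use $d^{\pi_{\theta_t}}_\rho\ge(1-\gamma)\rho$, which gives a constant that is finite unconditionally.

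\textbf{Redundancy of an assumption.} Your remark that $G^\mu(s)\succ 0$ follows from Assumption~\ref{asp:app:mdp:max_set_affine} is also right. It shows that Assumption~\ref{asp:app:mdp:initial_singularity_mdp} is redundant: since $d\pi_\theta/d\mu\ge e^{-2\|\theta\|_2}$, every $G^{\pi_\theta}(s)$ is positive definite.

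\textbf{Bookkeeping.} The statement refers to the specific $C_R(\theta)$ of Theorem~\ref{thm:main: non-uniform_PL_ineq}, which is built with $d^{\pi_{\theta^*}}_\rho$ rather than your expression. Because your comparison is pointwise in $s$, integrating it against $d^{\pi^*}_\rho$ bounds that constant as well, whenever the mismatch factor is finite.
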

The the exact form of $C_{\theta_0}$ can be found in Section \ref{sec:pf_lin_cov_affine}.

To conclude this part, we introduce an example of full affine span features that satisfy Assumption \ref{asp:app:mdp:max_set_affine}. 

\begin{example}[Trigonometric features]\label{ex:trig_func_mdp}
Let $S\subset\mathbb R^{d_1}$ be a compact state set and let $A\subset\mathbb R^{d_2}$ be a compact action set. 
Let $\mu$ denote the $d_2$-dimensional Lebesgue measure on the action space. 
Choose a finite set $\mathcal K\subset\mathbb Z^{d_2}\setminus\{\mathbf 0\}$ of nonzero action frequencies, with no redundant action modes, for example with only one representative from each pair $\{k,-k\}$. 
For each $k\in\mathcal K$, choose a state-frequency vector $\ell_k\in\mathbb R^{d_1}$ and define the trigonometric feature $g(s,a)$ by using the coordinates $\cos(k^\top a+\ell_k^\top s)$ and $\sin(k^\top a+\ell_k^\top s)$ for $k\in\mathcal K$. 
Thus, for $\theta=(\alpha_k,\beta_k)_{k\in\mathcal K}$, define
$f_\theta(s,a)=\theta^\top g(s,a)=\sum_{k\in\mathcal K}\{\alpha_k\cos(k^\top a+\ell_k^\top s)+\beta_k\sin(k^\top a+\ell_k^\top s)\}$.

The features have no action-constant Fourier mode. 
We assume that the action-frequency has been chosen without redundant modes, so that for every fixed $s\in S$, $\theta\neq0$ implies that the map $a\mapsto f_\theta(s,a)$ is not identically zero. 

For fixed $s$, one has $f_\theta(s,a)=\sum_{k\in\mathcal K}\{\widetilde\alpha_k(s)\cos(k^\top a)+\widetilde\beta_k(s)\sin(k^\top a)\}$, where $\widetilde\alpha_k(s)=\alpha_k\cos(\ell_k^\top s)+\beta_k\sin(\ell_k^\top s)$ and $\widetilde\beta_k(s)=-\alpha_k\sin(\ell_k^\top s)+\beta_k\cos(\ell_k^\top s)$. 
Moreover, $\widetilde\alpha_k(s)^2+\widetilde\beta_k(s)^2=\alpha_k^2+\beta_k^2$. 
Hence, if $\theta\neq \mathbf 0$, then for every fixed $s\in S$ at least one pair $(\widetilde\alpha_k(s),\widetilde\beta_k(s))$ is nonzero.
Since all $k\in\mathcal K$ are nonzero action frequencies and there is no redundant action mode, the fixed-state function $a\mapsto f_\theta(s,a)$ is a non-constant real-analytic function of $a$.

For fixed $s\in S$, define $M_\theta(s):=\max_{a\in A} f_\theta(s,a)$ and $\mathcal A_\theta(s):=\operatorname*{arg\,max}_{a\in A} f_\theta(s,a)$. 
The maximum exists because $A$ is compact and $a\mapsto f_\theta(s,a)$ is continuous.
Since $a\mapsto f_\theta(s,a)-M_\theta(s)$ is a nontrivial real-analytic function on an open neighborhood of $A$, the standard zero-set theorem for real-analytic functions implies that its zero set has $d_2$-dimensional Lebesgue measure zero. 
Because $\mathcal A_\theta(s)=\{a\in A:f_\theta(s,a)-M_\theta(s)=0\}$, we obtain $\mu(\mathcal A_\theta(s))=0$ for every fixed $s\in S$ and every nonzero $\theta$.
\end{example}

\subsubsection*{Simplex features}

Assumption~\ref{asp:app:mdp:max_set_simplex} explains what we mean by simplex features and adds an additional property which allow us to get unboundedness of $\theta \mapsto \operatorname{KL}(\pi_\theta|\mu)$ in the direction orthogonal to the $\mathbf 1$-vector.
Recall that for any $v\in \mathbb R^p$ we write $v_\perp := v - p^{-1}\langle v, \mathbf 1\rangle \mathbf 1$.

\begin{restatable}{assumption}{aspSimplexMaxSet}\label{asp:app:mdp:max_set_simplex}
The feature basis $g:S\times A \to \mathbb R^p$ fall in the probability simplex $\Delta_{p-1}$ for every $s$ and $a$, i.e. $\sum_{i=1}^p g_i(s,a) = 1 $ and for all $s \in S,a \in A$, $g(s,a) \geq 0$. 
Moreover, for each fixed
$s\in S$, assume that $a\mapsto g(s,a)\in\mathbb R^p$ is continuous
and that, for every $u\in\mathbb R^p$ such that $u\neq 0$ and $u \perp \mathbf{1}$,
\[
\mu \big(\arg\max_{a\in A} u^\top g(s,a)\big)=0 .
\]
\end{restatable}
Note the assumption on the measure of the maximizer set of $u^\top g(s,a)$ is a little different between Assumption~\ref{asp:app:mdp:max_set_simplex}
and Assumption~\ref{asp:app:mdp:max_set_affine} in that we consider different vectors $u$ in each.

\begin{restatable}[Radial unboundedness of KL divergence in direction orthogonal to $\mathbf 1$]{theorem}{KLUBSIMPLEX}\label{thm:main:mdp:Radial_unboundedness_of_relative_entropy_in_subspace_simplex}
Let Assumption~\ref{asp:app:mdp:max_set_simplex} hold.
Fix $s\in S$.
Then 
\[
\lim_{\theta\in \mathbb R^p:\|\theta_\perp\|_2\to\infty}
\mathrm{KL}\bigl(\pi_\theta(\cdot\mid s)\,|\,\mu\bigr)=\infty.
\]
\end{restatable}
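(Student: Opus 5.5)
First I reduce to the orthogonal complement of $\mathbf 1$. Since $\sum_i g_i(s,a)=1$, we have $\langle \theta, g(s,a)\rangle = \langle \theta_\perp, g(s,a)\rangle + p^{-1}\langle\theta,\mathbf 1\rangle$. The additive constant cancels in the normalization, so $\pi_\theta(\cdot|s)=\pi_{\theta_\perp}(\cdot|s)$. It therefore suffices to show that $\mathrm{KL}(\pi_\theta(\cdot|s)|\mu)\to\infty$ as $\|\theta\|_2\to\infty$ with $\theta\perp\mathbf 1$. I will argue by contradiction, mirroring the argument for Theorem~\ref{thm:main:mdp:Radial_unboundedness_of_relative_entropy}. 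Suppose there are $\theta_n=r_n u_n$ with $u_n\in\mathbb S^{p-1}\cap\mathbf 1^\perp$, $r_n\to\infty$, and $\mathrm{KL}(\pi_{\theta_n}(\cdot|s)|\mu)\le K$. By compactness of the sphere in $\mathbf 1^\perp$, pass to a subsequence with $u_n\to u$, where $u\ne 0$ and $u\perp\mathbf 1$.

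Next I show that the bounded KL makes the policies uniformly integrable with respect to $\mu$. Write $h_n=d\pi_{\theta_n}/d\mu$. The KL bound gives $\int h_n\log h_n\,d\mu\le K$. By de la Vallée-Poussin, $\{h_n\}$ is uniformly integrable, so $\pi_{\theta_n}(B)\to 0$ uniformly in $n$ as $\mu(B)\to0$. Moreover, since $A$ is compact, a subsequence converges weakly to some $\nu\ll\mu$: weak limits of uniformly integrable densities stay absolutely continuous, and lower semicontinuity gives $\mathrm{KL}(\nu|\mu)\le K$.

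The concentration step uses the maximizer set. Let $f(a)=u^\top g(s,a)$, which is continuous on compact $A$, with maximum $M$ and argmax set $\mathcal A=\arg\max f$. Assumption~\ref{asp:app:mdp:max_set_simplex} gives $\mu(\mathcal A)=0$. For $\delta>0$ put $U_\delta=\{a: f(a)>M-\delta\}$ and $L_\delta=\{a: f(a)\le M-2\delta\}$. Since $|g|\le 1$, uniformly in $a$ we have $|u_n^\top g-u^\top g|\le\|u_n-u\|\to0$. So for large $n$,
\[
\frac{\pi_{\theta_n}(L_\delta)}{\pi_{\theta_n}(U_{\delta/2})}\le \frac{\mu(L_\delta)\,e^{r_n(M-2\delta+\epsilon_n)}}{\mu(U_{\delta/2})\,e^{r_n(M-\delta/2-\epsilon_n)}}\to0,
\]
where $\mu(U_{\delta/2})>0$ because $\mu$ has full support and $U_{\delta/2}$ is nonempty and open. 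Hence $\pi_{\theta_n}(\{f>M-2\delta\})\to1$ for every $\delta$. On the other hand, the sets $\{f>M-2\delta\}$ decrease to $\mathcal A$ as $\delta\downarrow0$, so their $\mu$-measure tends to $\mu(\mathcal A)=0$. Uniform integrability then says that for small $\delta$, $\pi_{\theta_n}(\{f>M-2\delta\})<1/2$ for all $n$. This contradicts the previous limit, so the KL must be unbounded.

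The main obstacle is making the exponential-ratio step rigorous with the moving directions $u_n$. This needs the uniform estimate $\sup_a|u_n^\top g-u^\top g|\le\|u_n-u\|_2$, which holds because $\sum_i|g_i|^2\le1$. It also needs positivity of $\mu(U_{\delta/2})$, which is where continuity of $a\mapsto g(s,a)$ and full support of $\mu$ enter. Everything else is routine measure theory. The only new ingredient compared with the affine case is the initial reduction to $\theta_\perp$, which requires $u\perp\mathbf 1$ so that the null-argmax hypothesis applies to the limit direction.
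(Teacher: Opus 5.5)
Your proof is correct, but it is organized differently from the paper's.

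The paper's route:
\begin{itemize}
\item It fixes one direction $u\in\mathbb S^{p-1}$ with $u\perp\mathbf 1$ and works along the ray $\theta_\perp=ru$.
\item Along that ray it proves $\pi_{ru}(A_\varepsilon^c|s)\le e^{-r\varepsilon/2}/\mu(A_{\varepsilon/2})$.
\item The binary partition (data-processing) inequality then gives $\liminf_{r\to\infty}\mathrm{KL}(\pi_{ru}(\cdot|s)|\mu)\ge-\log\mu(A_\varepsilon)$, and it sends $\varepsilon\downarrow 0$.
\item It concludes the full limit because ``$u$ is arbitrary and the sphere is compact''.
\end{itemize}

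Your route is to take an arbitrary bad sequence and extract converging directions $u_n\to u$. You then run the same Gibbs concentration estimate, absorbing the error $\sup_a|u_n^\top g(s,a)-u^\top g(s,a)|\le\|u_n-u\|_2$ into the exponents. Finally, you use uniform integrability in place of the partition inequality.

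The uniform-integrability substitution is cosmetic. The partition inequality yields the same uniform absolute continuity in quantitative form, $\pi(B)\le(\mathrm{KL}(\pi|\mu)+\log 2)/\log(1/\mu(B))$.

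The moving-direction argument is a real gain. Divergence along each fixed ray does not by itself give divergence along every sequence with $\|\theta_\perp\|_2\to\infty$. The paper's compactness appeal needs a uniformity in $u$ that it never establishes. It could be repaired, for example, by combining $\frac{d}{dr}\mathrm{KL}(\pi_{ru}(\cdot|s)|\mu)=r\,\mathrm{Var}_{\pi_{ru}(\cdot|s)}(u^\top g(s,\cdot))\ge 0$ with continuity in $u$ and a finite covering of the sphere. Your argument obtains this uniformity directly.

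Your explicit reduction $\pi_\theta=\pi_{\theta_\perp}$ also correctly accounts for the additive constant $p^{-1}\langle\theta,\mathbf 1\rangle$. The paper's identity $\theta^\top g(s,a)=r\phi_u^s(a)$ silently drops it.

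What the paper's route does buy is an explicit ray-wise lower bound. The weak-convergence remark in your second paragraph is never used and can be removed.
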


The proof of Theorem \ref{thm:main:mdp:Radial_unboundedness_of_relative_entropy_in_subspace_simplex} can be found in Section \ref{sec:pf_Radial_unboundedness_of_relative_entropy_in_subspace_simplex}.

Theorem \ref{thm:main:mdp:Radial_unboundedness_of_relative_entropy_in_subspace_simplex} combined with classical arguments for constructing ODE solutions with Lyapunov functions will give existence of the solution to the gradient flow~\eqref{eq:intro:continuous_GF}.

\begin{restatable}[Existence of solution to the gradient flow using simplex features]{lemma}{ExistSolSimplex}\label{lem:main:mdp:exist_sol_gf_simplex}
Let Assumption \ref{asp:app:mdp:max_set_simplex} hold.
Then there exists solution $\{\theta_t\}_{t \geq 0}$ to (\ref{eq:intro:continuous_GF}).
Moreover, $\sup_{t \geq 0} \big|\log \frac{d \pi_{\theta_t}}{d\mu}\big|_{B_b(S \times A)} < \infty$.
\end{restatable}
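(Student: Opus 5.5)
The plan is to mirror the argument for Lemma~\ref{lem:main:mdp:exist_sol_gf_affine}, with one change: the dynamics are reduced to the subspace $\mathbf 1^\perp$, because the policy is invariant in the $\mathbf 1$ direction.

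\textbf{Step 1: invariance in the $\mathbf 1$ direction.} By Assumption~\ref{asp:app:mdp:max_set_simplex}, $\langle \mathbf 1, g(s,a)\rangle = 1$. Hence $\langle\theta + c\mathbf 1, g\rangle = \langle \theta, g\rangle + c$, and the constant cancels in the normalisation. So $\pi_\theta = \pi_{\theta_\perp}$, and $\theta\mapsto V_\tau^{\pi_\theta}(\rho)$ depends only on $\theta_\perp$. Differentiating $c\mapsto V_\tau^{\pi_{\theta+c\mathbf 1}}(\rho)$, which is constant, gives $\langle \mathbf 1, \nabla_\theta V_\tau^{\pi_\theta}(\rho)\rangle = 0$. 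Along the flow~\eqref{eq:intro:continuous_GF} this means $\langle \mathbf 1,\theta_t\rangle = \langle \mathbf 1,\theta_0\rangle$ for all $t$. Thus $\theta_t = \theta_0 - p^{-1}\langle\theta_0,\mathbf 1\rangle\mathbf 1 + (\theta_t)_\perp$, and it suffices to control $(\theta_t)_\perp$.

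\textbf{Step 2: local existence.} The gradient is given by the policy gradient theorem (Proposition~\ref{prp:app:mdp:Gradient_of_the_Objective}) in terms of $Q^{\pi_\theta}_\tau$, $d^{\pi_\theta}_\rho$ and $\nabla_\theta\log\frac{d\pi_\theta}{d\mu} = g - \int_A g\,d\pi_\theta$. I would check that $\theta\mapsto\nabla_\theta V_\tau^{\pi_\theta}(\rho)$ is locally Lipschitz on $\mathbb R^p$. On bounded sets $|\langle\theta,g\rangle|\le\|\theta\|_2$, so the log-densities are bounded by $2\|\theta\|_2$. Then $\pi_\theta$, $Q^{\pi_\theta}_\tau$ and $d^{\pi_\theta}_\rho$ depend smoothly (in total variation / sup norm) on $\theta$ there, using $\sum_i|g_i|^2_{B_b}\le1$. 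This is the same ingredient used in the affine case, so I would reuse it. Picard--Lindelöf then gives a unique maximal solution on $[0,T_{\max})$, with $\|\theta_t\|_2\to\infty$ as $t\uparrow T_{\max}$ if $T_{\max}<\infty$.

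\textbf{Step 3: Lyapunov bound.} By the chain rule, $\frac{d}{dt}V_\tau^{\pi_{\theta_t}}(\rho) = -\|\nabla V\|_2^2\le 0$, so $V_\tau^{\pi_{\theta_t}}(\rho)\le V_\tau^{\pi_{\theta_0}}(\rho)$. From~\eqref{eq:intro:value_function_occ}, with $|c|\le1$,
\[
V^{\pi}_\tau(\rho)\ \ge\ -\tfrac{1}{1-\gamma} + \tfrac{\tau}{1-\gamma}\textstyle\int_S \mathrm{KL}(\pi(\cdot|s)\,|\,\mu)\,d^{\pi}_\rho(ds).
\]
So $\int_S\mathrm{KL}(\pi_{\theta_t}(\cdot|s)|\mu)\,d^{\pi_{\theta_t}}_\rho(ds)$ is bounded uniformly in $t$. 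Moreover $d^{\pi}_\rho\ge(1-\gamma)\rho$, so $\int_S \mathrm{KL}(\pi_{\theta_t}(\cdot|s)|\mu)\,\rho(ds)\le K$ for some $K$ independent of $t$.

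\textbf{Step 4: from the averaged KL bound to boundedness of $(\theta_t)_\perp$ (main obstacle).} Theorem~\ref{thm:main:mdp:Radial_unboundedness_of_relative_entropy_in_subspace_simplex} is stated pointwise in $s$, while Step 3 only gives a $\rho$-averaged bound, so we must pass from one to the other. I would first use Fatou plus the pointwise limit, arguing by contradiction. Suppose $\|(\theta_{t_n})_\perp\|_2\to\infty$ along some sequence. Then for each $s$ we have $\mathrm{KL}(\pi_{\theta_{t_n}}(\cdot|s)|\mu)\to\infty$, so Fatou gives $\liminf_n\int_S\mathrm{KL}\,d\rho=\infty$, contradicting $\int_S\mathrm{KL}\,d\rho\le K$. (One could equally fix any single $s$ with positive $\rho$-mass behaviour, but Fatou avoids needing atoms.)

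Hence $\sup_{t<T_{\max}}\|(\theta_t)_\perp\|_2<\infty$, and combined with Step 1, $\sup_t\|\theta_t\|_2<\infty$. Therefore $T_{\max}=\infty$, which gives existence of a global solution. Finally, $|\log\frac{d\pi_{\theta_t}}{d\mu}|\le 2\sup_{s,a}|\langle\theta_t,g(s,a)\rangle|\le2\|\theta_t\|_2$, which is uniformly bounded in $t$.
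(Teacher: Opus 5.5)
Your proposal is correct and follows essentially the paper's route: conservation of $\mathbf 1^\top\theta_t$, local existence from local Lipschitz continuity of the gradient, and $V_\tau^{\pi_\theta}(\rho)$ as a Lyapunov function made coercive on the hyperplane $\{\mathbf 1^\top\theta=\mathbf 1^\top\theta_0\}$ by Theorem~\ref{thm:main:mdp:Radial_unboundedness_of_relative_entropy_in_subspace_simplex}. Your explicit bound $V_\tau^{\pi}(\rho)\ge -\tfrac{1}{1-\gamma}+\tau\int_S \mathrm{KL}(\pi(\cdot|s)\,|\,\mu)\,\rho(ds)$, combined with Fatou, supplies the step the paper only asserts (compactness of the sublevel set from the pointwise-in-$s$ statement) and makes the paper's separate Gr\"onwall no-blow-up estimate unnecessary; the only slip is in Step~1, where the decomposition should read $\theta_t=p^{-1}\langle\theta_0,\mathbf 1\rangle\mathbf 1+(\theta_t)_\perp$.
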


The proof of Lemma \ref{lem:main:mdp:exist_sol_gf_simplex} can be found in Section \ref{sec:pf_exist_sol_gf_simplex}.

\begin{restatable}{assumption}{aspInitSimplex}\label{asp:app:mdp:initial_singularity_simplex}
For all $s \in S$, $\int g(s,a)g^\top(s,a)\pi_{\theta_0} (da | s)$ is positive definite.
\end{restatable}

\begin{restatable}[Linear convergence rate using simplex features]{theorem}{LinCovSimplex}\label{thm:main:lin_cov_simplex}
Let Assumption \ref{asp:app:mdp:Q_realisability}, \ref{asp:app:mdp:max_set_simplex} and \ref{asp:app:mdp:initial_singularity_simplex} hold.
Let $\{\theta_t\}_{t \geq 0}$ be the solution to gradient flow (\ref{eq:intro:continuous_GF}).
Then there exists $C_{\theta_0} > 0$ such that 
for $C_R(\theta_t)$ from Theorem~\ref{thm:main: non-uniform_PL_ineq} we have $\sup_{t \geq 0} C_R(\theta_t) \leq C_{\theta_0}$ and
\[
0 \leq V_\tau^{\pi_{\theta_t}}(\rho) - V_\tau^{\pi^*}(\rho) \leq e^{-t C_{\theta_0}^{-1}} 
(V_\tau^{\pi_{\theta_0}}(\rho) - V_\tau^{\pi^*}(\rho)).
\]
\end{restatable}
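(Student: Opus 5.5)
The plan mirrors the full-affine case: Gr\"onwall plus the non-uniform P{\L} inequality, where the non-uniform constant is controlled by an uncentred covariance rather than the FIM. For simplex features the log-density is invariant under $\theta\mapsto\theta+c\mathbf 1$, because $\langle \mathbf 1, g\rangle = 1$ is absorbed by the normaliser. Hence $\pi_\theta=\pi_{\theta_\perp}$, and the FIM is singular in direction $\mathbf 1$. So I will use the simplex version of Theorem~\ref{thm:main: non-uniform_PL_ineq}, in which $C_R(\theta)$ is a constant (depending on $R,\tau,\gamma$) divided by $\big(\int_S \lambda_{\min}(\Sigma^{\pi_\theta}(s))\, d^{\pi_\theta}_\rho(ds)\big)^2$, with $\Sigma^{\pi}(s)=\int_A g g^\top \pi(da|s)$. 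The first step is to justify this form, and I expect it to be the main obstacle.

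\textbf{Step 1: the P{\L} inequality with $\Sigma$.} I would write the gradient as $-\frac{\tau}{1-\gamma}\int_S G^{\pi_\theta}(s)\,d^{\pi_\theta}_\rho(ds)\,(\theta-\boldsymbol\theta(\pi_\theta))$. The difference $\theta-\boldsymbol\theta(\pi_\theta)$ may be shifted by any multiple of $\mathbf 1$ without changing $\langle\cdot,g\rangle$ up to an additive constant. So one chooses the representative $w$ making the centred log-ratio $\langle w,g(s,\cdot)\rangle - \mathbb E_{\pi_\theta}\langle w,g(s,\cdot)\rangle$ coincide with $\langle w, g\rangle$. Then $w^\top G w = \operatorname{Var}_{\pi_\theta}(\langle w,g\rangle)$. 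For $w\perp\mathbf 1$, I would lower-bound this variance by comparison with $w^\top\Sigma w$, using that $\mathbf 1$ spans exactly the direction where centering kills the signal. Concretely, I would show $\lambda_{\min}(G|_{\mathbf 1^\perp})\ge c\,\lambda_{\min}(\Sigma)$, or simply state the P{\L} bound in terms of $\Sigma$ as in Section~\ref{sec:pf_non_uniform_PL}. Combining with the KL-sandwich and local Lipschitz bounds then gives suboptimality $\le C_R(\theta)\|\nabla V\|^2$.

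\textbf{Step 2: uniform bound along the flow.} By Lemma~\ref{lem:main:mdp:exist_sol_gf_simplex} the flow exists and $R':=\sup_t|\log\frac{d\pi_{\theta_t}}{d\mu}|_{B_b}<\infty$. That lemma's proof gives this through the Lyapunov property: $V_\tau^{\pi_{\theta_t}}(\rho)\le V_\tau^{\pi_{\theta_0}}(\rho)$, bounded cost, and Theorem~\ref{thm:main:mdp:Radial_unboundedness_of_relative_entropy_in_subspace_simplex}. Together these keep $(\theta_t)_\perp$ in a compact set $K$ depending on $\theta_0$. Also $\sum_i\nabla_{\theta_i}V=0$, so $\langle\theta_t,\mathbf 1\rangle$ is constant; in any case it does not affect $\pi_{\theta_t}$. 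Take $R=\max(R',|\log\frac{d\pi_{\theta^*}}{d\mu}|)$. Then $\frac{d\pi_{\theta_t}}{d\mu}(a|s)\ge e^{-R}$, which gives
\[
u^\top\Sigma^{\pi_{\theta_t}}(s)u\ \ge\ e^{-2R}\,u^\top\Sigma^{\pi_{\theta_0}}(s)u .
\]
The same comparison holds in reverse with $e^{-R}$ replaced by $e^{R}$. Hence $\lambda_{\min}(\Sigma^{\pi_{\theta_t}}(s))\ge e^{-2R}\lambda_{\min}(\Sigma^{\pi_{\theta_0}}(s))$, which is positive by Assumption~\ref{asp:app:mdp:initial_singularity_simplex}. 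To get a bound uniform in $s$, I would either assume or derive, from compactness of $K$ and continuity, that $\inf_s\lambda_{\min}(\Sigma^{\pi_{\theta_0}}(s))>0$. A simpler route: $\Sigma^{\mu}(s)$ dominates $\Sigma^{\pi_{\theta_0}}(s)$ up to $e^{\pm R}$, so it suffices to work with $\mu$. Integrating against $d^{\pi_{\theta_t}}_\rho$, a probability measure, yields $\inf_t\int\lambda_{\min}\,d^{\pi_{\theta_t}}_\rho>0$, i.e.\ $\sup_t C_R(\theta_t)\le C_{\theta_0}$.

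\textbf{Step 3: conclude.} Along the flow,
\[
\tfrac{d}{dt}\big(V_\tau^{\pi_{\theta_t}}-V_\tau^{\pi^*}\big)(\rho)=-\|\nabla V\|^2\le -C_{\theta_0}^{-1}\big(V_\tau^{\pi_{\theta_t}}-V_\tau^{\pi^*}\big)(\rho).
\]
Gr\"onwall gives the claimed exponential bound, and nonnegativity holds since $\pi^*$ is optimal.
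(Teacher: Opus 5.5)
Your architecture matches the paper's: the simplex form of the P{\L} inequality, bounded log-densities along the flow from Lemma~\ref{lem:main:mdp:exist_sol_gf_simplex}, a comparison of uncentred covariances, and Gr\"onwall. Your pointwise bound $\lambda_{\min}(\Sigma^{\pi_{\theta_t}}(s))\ge e^{-2R}\lambda_{\min}(\Sigma^{\pi_{\theta_0}}(s))$ is correct; it follows from $\frac{d\pi_{\theta_t}}{d\pi_{\theta_0}}\ge e^{-2R}$.

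\textbf{The gap.} The problem is passing from this pointwise bound to a $t$-uniform lower bound on $\int_S\lambda_{\min}(\Sigma^{\pi_{\theta_t}}(s))\,d^{\pi_{\theta_t}}_\rho(ds)$, whose integrating measure moves with $t$. You route this through $\inf_{s}\lambda_{\min}(\Sigma^{\pi_{\theta_0}}(s))>0$, to be ``assumed or derived from compactness of $K$ and continuity''. It cannot be derived, for four reasons:
\begin{itemize}
\item Assumption~\ref{asp:app:mdp:initial_singularity_simplex} is pointwise in $s$.
\item $K$ is compact in parameter space only.
\item $g$ is merely measurable in $s$.
\item $S$ is an arbitrary Polish space.
\end{itemize}
Passing to $\Sigma^{\mu}(s)$ has the same defect. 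Assuming the uniform bound would strengthen the theorem's hypotheses. Without it, integrating against the probability measures $d^{\pi_{\theta_t}}_\rho$ gives nothing uniform in $t$, since a priori they could concentrate where $\lambda_{\min}(\Sigma^{\pi_{\theta_0}}(s))$ is small.

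\textbf{The fix.} No uniformity in $s$ is needed. For every policy $\pi$ one has $d^{\pi}_\rho\ge(1-\gamma)\rho$ (keep only the $n=0$ term of the occupancy series, as in~\eqref{eq:kernel_occ_geq_rho}). Hence
\[
\int_S\lambda_{\min}\big(\Sigma^{\pi_{\theta_t}}(s)\big)\,d^{\pi_{\theta_t}}_\rho(ds)\ \ge\ (1-\gamma)\,e^{-2R}\int_S\lambda_{\min}\big(\Sigma^{\pi_{\theta_0}}(s)\big)\,\rho(ds)\ >\ 0 .
\]
The right-hand side is positive because the integrand is strictly positive at every $s$, and it does not depend on $t$. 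This gives $\sup_t C_R(\theta_t)\le C_{\theta_0}$, and the rest of your argument goes through.

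The paper avoids the moving measure differently. It integrates the pointwise bound of Lemma~\ref{lem:app:mdp:lower_bound_log_density_simplex} against the fixed measure $d^{\pi^*}_\rho$ (see~\eqref{eq:positive_lambda_simplex}). The $(1-\gamma)\rho$ bound and the factor $|dd^{\pi^*}_\rho/d\rho|_{B_b(S)}$ in $C_R$ are exactly what move between $d^{\pi_\theta}_\rho$ and $d^{\pi^*}_\rho$.

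\textbf{A smaller point on Step~1.} The ``representative $w$'' device does not work as described. The $\pi_\theta(\cdot|s)$-mean of $\langle w,g(s,\cdot)\rangle$ depends on $s$, so no single shift $c\mathbf 1$ can cancel it; and the centred quantity is shift-invariant anyway. The working mechanism is the rank-one identity $\Sigma^{\pi_\theta}(s)=G^{\pi_\theta}(s)+\bar g_\theta(s)\bar g_\theta(s)^\top$ with $G^{\pi_\theta}(s)\mathbf 1=\mathbf 0$. By interlacing, the smallest eigenvalue of $G^{\pi_\theta}(s)$ restricted to $\mathbf 1^\perp$ is at least $\lambda_{\min}(\Sigma^{\pi_\theta}(s))$. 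This is combined with Lemma~\ref{lem:app:kl-logit_ineq} applied to the component of $\theta-\boldsymbol\theta(\pi_\theta)$ orthogonal to $\mathbf 1$. Since the statement takes $C_R$ from the P{\L} theorem, simply citing that theorem, as you also suggest, is legitimate.
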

The the exact form of $C_{\theta_0}$ can be found in Section \ref{sec:pf_lin_cov_simplex}.

To conclude discussing simplex features we provide an example that satisfies Assumption~\ref{asp:app:mdp:max_set_simplex}.

\begin{example}[Bernstein polynomial features]\label{ex:Bernstein_Poly_MDP}
Let $S\subset\mathbb R^{d_1}$ be nonempty and compact, and let $A\subset\mathbb R^{d_2}$ be compact with nonempty interior. 
Here, nonempty interior means that there exist $a_0\in A$ and $r>0$ such that the open ball $B(a_0,r)\subset A$. 
Fix $u\in\mathbb R^{d_2}\setminus\{\mathbf 0\}$ and let $q:S\to\mathbb R$ be continuous. Define $\widetilde h(s,a):=q(s)+u^\top a$.

Let $m_h:=\min_{(s,a)\in S\times A}\widetilde h(s,a)$ and $M_h:=\max_{(s,a)\in S\times A}\widetilde h(s,a)$. 
These extrema exist because $S\times A$ is compact and $\widetilde h$ is continuous. 
Moreover, $M_h>m_h$. 
Indeed, since $A$ has nonempty interior, there exist $a_0\in A$ and $r>0$ such that $B(a_0,r)\subset A$. 
Let $v:=u/\|u\|$, and define $a_+:=a_0+(r/2)v$ and $a_-:=a_0-(r/2)v$. 
Then $a_+,a_-\in A$, and for any fixed $s_0\in S$, we have $\widetilde h(s_0,a_+)-\widetilde h(s_0,a_-)=u^\top(a_+-a_-)=r\|u\|>0$. 
Hence $\widetilde h$ is not constant on $S\times A$, so its maximum and minimum are distinct. 
Define $h:S\times A\to[0,1]$ by $h(s,a):=(\widetilde h(s,a)-m_h)/(M_h-m_h)$.

Fix $k\geq 1$. 
For $\ell=0,\dots,k$, define the one-dimensional Bernstein basis function $b_{\ell,k}(w):=\binom{k}{\ell}w^\ell(1-w)^{k-\ell}$ for $w\in[0,1]$. 
Define the feature $g:S\times A\to\mathbb R^{k+1}$ by $g(s,a):=(b_{0,k}(h(s,a)),\dots,b_{k,k}(h(s,a)))^\top$. 
Then $g_\ell(s,a)\geq 0$ for every $\ell$, and $\sum_{\ell=0}^k g_\ell(s,a)=1$ for every $(s,a)\in S\times A$.

For $\theta=(\theta_0,\dots,\theta_k)\in\mathbb R^{k+1}$, define $f_\theta(s,a):=\theta^\top g(s,a)=\sum_{\ell=0}^k\theta_\ell b_{\ell,k}(h(s,a))$. 
Equivalently, $f_\theta(s,a)=P_\theta(h(s,a))$, where $P_\theta(w):=\sum_{\ell=0}^k\theta_\ell b_{\ell,k}(w)$.

Assume that $\theta\neq \mathbf 0$ and $\theta^\top\mathbf 1=\sum_{\ell=0}^k\theta_\ell=0$. 
Then $P_\theta : \mathbb [0,1] \to \mathbb R$ is non-constant. 
To see this, suppose that $P_\theta\equiv c$ on $[0,1]$. 
Since $\sum_{\ell=0}^k b_{\ell,k}(w)=1$, the constant polynomial $c$ satisfies $c=\sum_{\ell=0}^k c\,b_{\ell,k}(w)$. 
By uniqueness of expansion of the Bernstein polynomials, $\theta_\ell=c$ for every $\ell=0,\dots,k$. 
The condition $\theta^\top\mathbf 1=0$ gives $0=\sum_{\ell=0}^k\theta_\ell=(k+1)c$, so $c=0$, and hence $\theta=\mathbf0$, contradicting $\theta \neq \mathbf 0$.
Therefore $P_\theta : \mathbb [0,1] \to \mathbb R$ is non-constant.

For each fixed state $s\in S$, define $M_\theta(s):=\max_{a\in A}f_\theta(s,a)$ and $\mathcal A_\theta(s):=\operatorname*{arg\,max}_{a\in A}f_\theta(s,a)$. 
The maximum exists because $A$ is compact and $a\mapsto f_\theta(s,a)$ is continuous. 
Since $P_\theta : \mathbb [0,1] \to \mathbb R$ is a non-constant uni-variate polynomial, $P_\theta(w)-M_\theta(s)$ is not the zero polynomial. Hence the set $H_\theta(s):=\{w\in\mathbb R:P_\theta(w)=M_\theta(s)\}$ is finite.

Now, if $a\in\mathcal A_\theta(s)$, then $P_\theta(h(s,a))=M_\theta(s)$, so $h(s,a)\in H_\theta(s)$. 
Therefore $\mathcal A_\theta(s)\subseteq \bigcup_{w\in H_\theta(s)}\{a\in A:h(s,a)=w\}$. 
For fixed $s$ and $w$, the level set $\{a\in A:h(s,a)=w\}$ is contained in the affine hyperplane $\{a\in\mathbb R^{d_2}:u^\top a=m_h+(M_h-m_h)w-q(s)\}$. 
Since $u\neq  \mathbf 0$, this hyperplane has $d_2$-dimensional Lebesgue measure zero. 
Thus $\mathcal A_\theta(s)$ is contained in a finite union of measure-zero sets, and hence $\mu(\mathcal A_\theta(s))=0$.
\end{example}

\subsection*{Example of a feature basis not providing radial unboundedness} 

In bandit setting, where the MDP has a single state, we have the following example using simplex features that do not give radial unboundedness of $\operatorname{KL}$ term in the subspace orthogonal to $\mathbf 1$.
\begin{example}[Hat-functions do not give radial unboundedness in the subspace ]\label{ex:hat_func_short}
Choose the concrete grid $x_0=0,\ x_1=\frac13,\ x_2=\frac23,\ x_3=1$, and let $g_0,g_1,g_2,g_3$ be the standard one-dimensional finite-element hat functions on the grid $[0,\frac13), [\frac13,\frac23), [\frac23,1]$. Set $g(a):=(g_0(a),g_1(a),g_2(a),g_3(a))^\top$ and $\theta=(-1,1,1,-1)^\top$. Then $\theta^\top \mathbf 1=0$, and the induced function $f_\theta(a):=\theta^\top g(a)$ is
\[
f_\theta(a) = 6a-1\,\,\text{if}\, a\in\left[0,\tfrac13\right),\,\,\,
f_\theta(a) = 1\,\,\text{if}\, a\in\left[\tfrac13,\tfrac23\right),\,\,\,
f_\theta(a) = 5-6a\,\,\text{if}\, a\in\left[\tfrac23,1\right].
\]
Thus $f_\theta$ attains its maximum value $1$ on the whole interval $[\frac13,\frac23]$. Writing $\mathcal A_\theta:=\operatorname*{arg\,max}_{a\in[0,1]}f_\theta(a)$, we have $\mathcal A_\theta=[\frac13,\frac23]$ and $\mu(\mathcal A_\theta)=\frac13>0$. Hence Assumption~\ref{asp:app:mdp:max_set_simplex} fails for this choice of $\theta$.

This also shows that radial unboundedness of the entropy may fail without the zero-measure maximizer assumption. Let $\mu$ be Lebesgue measure restricted to $[0,1]$. For $\beta>0$, let $\pi_\beta$ be the probability measure whose density with respect to $\mu$ is $p_\beta(a):=\frac{d\pi_\beta}{d\mu}(a)=e^{\beta f_\theta(a)}/Z_\beta$, where $Z_\beta:=\int_0^1 e^{\beta f_\theta(a)}\,da$. 
Since $f_\theta$ is linear on the two exterior intervals and equals $1$ on $[\frac13,\frac23]$, the change of variables $y=f_\theta(a)$ gives
\[
Z_\beta
=
e^\beta
\left[
\frac13+\frac{1}{3\beta}\bigl(1-e^{-2\beta}\bigr)
\right]
=
\frac{e^\beta}{3}
\left[
1+\frac{1-e^{-2\beta}}{\beta}
\right].
\]
Moreover, we have $\mathbb E_{\pi_\beta}[f_\theta(a)]=\partial_\beta\log Z_\beta$, and therefore
\[
\operatorname{KL}(\pi_\beta\mid\mu)
=
\beta\,\mathbb E_{\pi_\beta}[f_\theta(a)]-\log Z_\beta
=
\beta\,\partial_\beta\log Z_\beta-\log Z_\beta .
\]
From the exact expression for $Z_\beta$, we have $\log Z_\beta=\beta-\log 3+\log(1+(1-e^{-2\beta})/\beta)$. Hence, as $\beta\to\infty$,
\[
\log Z_\beta
=
\beta-\log 3+\frac1\beta+\mathcal O(\beta^{-2}),
\qquad
\beta\,\partial_\beta\log Z_\beta
=
\beta-\frac1\beta+\mathcal O(\beta^{-2}).
\]
Consequently, $\operatorname{KL}(\pi_\beta\mid\mu) = \log 3-\frac{2}{\beta}+\mathcal O(\beta^{-2})$,
and in particular $\lim_{\beta\to\infty}\operatorname{KL}(\pi_\beta\mid\mu)=\log 3$, which is finite.
\end{example}
The detailed calculation can be found in Section \ref{sec:ex}.

\allowdisplaybreaks
\section{Proofs}
\subsection{Basic notations and definitions}
\label{sec:nota and def}

For matrix $G \in \mathbb{R}^{p \times p}$, denote $\lambda_{\min} (G)$ the smallest eigenvalue value of $G$. 
Let $\mathbb S^{p-1}
:=
\left\{u\in\mathbb R^p:\|u\|_2=1\right\}.$ 
Let $ \mathbf{1} = (1,1, \dots,1) \in \mathbb{R}^p$. For vectors $\mathbf{a}, \mathbf{b} \in \mathbb{R}^p$, the inner product $\langle \mathbf{a}, \mathbf{b}\rangle$ of $\mathbf{a}\ \text{and} \ \mathbf{b}$ is $\mathbf{a}^\top\mathbf{b}$.
And we use both notations throughout the paper.

Let $(E,d)$ denote a complete separable metric space (i.e. a Polish space). 
For a given measure $\rho$ in $E$, denote by $L^p(E,\rho)$, $p\in [1,\infty]$, for Lebesgue spaces of integrable functions. 
We always equip a Polish space with its Borel sigma-field $\mathcal{B}(E).$ 
Denote by $B_b(E)$ the space of bounded strongly measurable functions $f:E\rightarrow \mathbb{R}$ endowed with the supremum norm $|f|_{B_b(E)}=\sup_{x\in E}|f(x)|$. 
Denote by $\mathcal{M}(E)$ the Banach space of signed measures (finite) $\mu$ on $E$ endowed with the total variation norm $|\mu|_{\mathcal{M}(A)}=|\mu|(E)$, where $|\mu|$ is the total-variation measure. 
We note that if $\mu=f d\rho$, where $\rho \in \mathcal{M}_+(E)$ is a non-negative measure and $f\in L^1(E,\rho)$, then  $|\mu|_{\mathcal{M}(E)}=|f|_{L^1(E,\rho)}$. 
We denote by $\mathcal{P}(E)\subset\mathcal{M}(E)$ the convex subset of probability measures on $E$. 
For $\mu,\mu'\in \mathcal{P}(E)$ such that $\mu$ is absolutely continuous with respect to $\mu'$, the relative entropy of $\mu$ with respect to $\mu'$ (or KL divergence of $\mu$ relative to $\mu'$) is defined by 
\[
\textnormal{KL}(\mu|\mu')=\int_{E}\log\frac{d\mu}{d\mu'}(x)\mu(dx)\,.
\]

It is convenient to have notation for measurable functions $k:E_1\rightarrow \mathcal{M}(E_2)$ for given Polish spaces $(E_1,d_1)$ and $(E_2,d_2)$. 
For example, $P:S\rightarrow \mathcal{P}(S\times A)$ will denote a controlled transition probability and $\pi: S\rightarrow \mathcal{P}(A)$ a stochastic policy. 
Denote by $b\mathcal{K}(E_1|E_2)$ the Banach space of bounded signed kernels $k: E_2 \rightarrow \mathcal{M}(E_1)$ endowed with the norm $|k|_{b\mathcal{K}(E_1|E_2)}=\sup_{x\in E_2}|k(x)|_{\mathcal{M}(E_1)}$; that is, $k(U|\cdot): E_2\rightarrow \mathbb{R}$ is measurable for all $U\in \mathcal{M}(E_1)$ and $k(\cdot|x)\in \mathcal{M}(E_1)$ for all $x\in E_2$. 
For a fixed positive reference measure $\mu \in \mathcal{M}(E_1)$, we denote by $b\mathcal{K}_{\mu}(E_1|E_2)$ the space of bounded kernels that are absolutely continuous with respect to $\mu$. 

Every kernel $k\in b\mathcal{K}(E_1|E_2)$ induces bounded linear operators $T_k \in \mathcal{L}(\mathcal{M}(E_2),\mathcal{M}(E_1))$ and $S_k \in \mathcal{L}(B_b(E_1),B_b(E_2))$ defined by
\[
T_k\mu(dy)=\mu k(dy)=\int_{E_2}\mu(dx)k(dy|x)
\]
and 
\[
S_kf(x)=\int_{E_1}k(dy|x)f(y)\,,
\]
respectively.
Moreover, by Exercise 2.3 and Proposition 3.1 in~\cite{kunze2011pettis}, we have
\begin{align*}
|k|_{b\mathcal{K}(E_1|E_2)}&=\sup_{x\in E_2}\underset{|h|_{B_b(E_1)}\le 1}{\sup_{h\in B_b(E_1)}}\int_{E_1}h(y)k(dy|x)\\
&=|S_k|_{\mathcal{L}(B_b(E_1),B_b(E_2))}\\
&=|T_k|_{\mathcal{L}(\mathcal{M}(E_2),\mathcal{M}(E_1))}\,,
\end{align*}
where the latter are operator norms. 
Thus, $b\mathcal{K}(E|E)$ is a Banach algebra with the product defined via composition of the corresponding linear operators; in particular, for a given $k\in b\mathcal{K}(E|E)$,
\[
\begin{split}
& T_k^n\mu(dy)=\mu k^n(dy)\\
& = \int_{E^{n}}\mu(dx_0)k(dx_1|x_0)\cdots k(dx_{n-1}|x_{n-2}) k(dy|x_{n-1})\,.
\end{split}
\]
Notice that if $f\in L^\infty(E_1,\mu)$ and $k\in b\mathcal{K}_{\mu}(E_1|E_2)$, then for all $x\in E_2$,
\begin{align}
S_kf(x)&=\int_{E_1}\mu(dy)\frac{dk}{d\mu}(y|x)f(y)\notag\\
&\le |f|_{L^\infty(E_1,\mu)} \left|\frac{dk}{d\mu}(\cdot|x)\right|_{L^1(E_1,\mu)}\notag \\
&\le |f|_{L^\infty(E_1,\mu)} |k|_{b\mathcal{K}(E_1|E_2)}\,. \label{ineq:app:essup_kernel}
\end{align}

We denote by $\mathcal{P}(E_1|E_2)$ the convex subspace of $P\in b\mathcal{K}(E_1|E_2)$ such that $P(\cdot|x)\in \mathcal{P}(E_1)$ for all $x\in E_2$; such kernels are referred to as stochastic kernels. 
A stochastic kernel $P\in \mathcal{P}(E_1|E_2)$ is said to be strongly Feller if $\int_{E_1}P(dy|x)f(y)$ is continuous in $x\in E_2$ for all $f\in B_b(E_1)$. For a fixed positive reference measure $\mu \in \mathcal{M}(E_1)$, we denote by $\mathcal{P}_{\mu}(E_1|E_2)$ the space of kernels that are absolutely continuous with respect to $\mu$. 
A bounded kernel $k\in b\mathcal{K}(E_1|E_2)$ is thus strongly Feller if the range of $S_k$ lies in the space of continuous functions on $E_2$. 

\subsection{Basic results on entropy regularized MDPs}

The following lemma (see e.g., Lemma 2.3 \cite{kerimkulov2025fisher}) is crucial for addressing this non-convexity issue. 
\begin{lemma}[Performance difference]
\label{lem:app:mdp:performance_diff}
For all $\rho \in \mathcal{P}(S)$ and $\pi,\pi'\in \Pi_{\mu}$, 
\begin{align*}
&V^{\pi}_\tau(\rho)-V^{\pi'}_\tau(\rho) \\
& = \frac{1}{1-\gamma}\int_S \bigg[\int_A\left(Q^{\pi'}_{\tau}(s,a)+\tau \log \frac{d \pi'}{d\mu}(a|s)\right)(\pi-\pi')(da|s) + \tau    \operatorname{KL}(\pi(\cdot | s)|\pi'(\cdot | s)) \bigg]d^{\pi}_\rho(ds)\,.
\end{align*}
\end{lemma}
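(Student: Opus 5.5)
We plan to prove the performance difference identity by comparing the two value functions through the on-policy Bellman equation~\eqref{eq:intro:policy_Bellman}. The idea is to write the one-step discrepancy at a fixed state $s$, and then unroll it along the dynamics of $\pi$ by means of the occupancy kernel $d^\pi$.

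\textbf{Step 1: a one-step identity.} Fix $s\in S$. Since $\pi,\pi'\in\Pi_\mu$, all log-densities are bounded and so are the values, so every integral below is finite. By~\eqref{eq:intro:policy_Bellman} and the definition~\eqref{def:intro:Qpi} of $Q^{\pi}_\tau$,
\[
V^\pi_\tau(s)=\int_A\Big(Q^\pi_\tau(s,a)+\tau\log\tfrac{d\pi}{d\mu}(a|s)\Big)\pi(da|s),
\]
and similarly for $\pi'$. We add and subtract $\int_A \big(Q^{\pi'}_\tau(s,a)+\tau\log\tfrac{d\pi'}{d\mu}(a|s)\big)\pi(da|s)$. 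We then use
\[
\log\tfrac{d\pi}{d\mu}-\log\tfrac{d\pi'}{d\mu}=\log\tfrac{d\pi}{d\pi'},
\]
which is valid because both densities are strictly positive and bounded. Finally we use $Q^\pi_\tau-Q^{\pi'}_\tau=\gamma\int_S (V^\pi_\tau-V^{\pi'}_\tau)(s')P(ds'|\cdot)$. Together these give
\[
V^\pi_\tau(s)-V^{\pi'}_\tau(s)=F(s)+\gamma\int_A\int_S \big(V^\pi_\tau-V^{\pi'}_\tau\big)(s')P(ds'|s,a)\pi(da|s),
\]
where
\[
F(s)=\int_A\Big(Q^{\pi'}_\tau+\tau\log\tfrac{d\pi'}{d\mu}\Big)(s,a)(\pi-\pi')(da|s)+\tau\,\mathrm{KL}(\pi(\cdot|s)|\pi'(\cdot|s)).
\]

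\textbf{Step 2: unrolling.} Write $P_\pi(ds'|s)=\int_A P(ds'|s,a)\pi(da|s)$ and $\Delta=V^\pi_\tau-V^{\pi'}_\tau\in B_b(S)$. The identity of Step 1 reads $\Delta=F+\gamma S_{P_\pi}\Delta$. Iterating $n$ times gives
\[
\Delta=\sum_{t=0}^{n-1}\gamma^t S_{P_\pi}^tF+\gamma^n S^n_{P_\pi}\Delta .
\]
The remainder term is bounded by $\gamma^n|\Delta|_{B_b}$, which tends to $0$, since stochastic kernels have operator norm one on $B_b(S)$. The partial sums converge in $B_b(S)$ because $F$ is bounded. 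Hence
\[
\Delta=\sum_t\gamma^tS^t_{P_\pi}F=\tfrac{1}{1-\gamma}\int_S F(s')d^\pi(ds'|\cdot),
\]
by the definition of $d^\pi$ and convergence in $b\mathcal K(S|S)$. Integrating against $\rho$ and using Fubini produces exactly the stated formula with $d^\pi_\rho$.

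\textbf{Main obstacle.} The only delicate point is justifying that all quantities are bounded, so that the Neumann-series manipulation and Fubini are legitimate. For $F$, the $Q^{\pi'}_\tau$ and $\log\frac{d\pi'}{d\mu}$ terms are bounded because $\pi'\in\Pi_\mu$ (with $|c|\le 1$ and $\gamma<1$). The KL term is bounded because both log-densities are bounded uniformly in $s$, so $\mathrm{KL}\le 2\sup|\log\frac{d\pi}{d\mu}|+2\sup|\log\frac{d\pi'}{d\mu}|$. With this in hand the remaining steps are routine.
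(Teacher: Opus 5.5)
Your proof is correct. The one-step identity $\Delta = F + \gamma S_{P_\pi}\Delta$ and its Neumann-series unrolling into $\frac{1}{1-\gamma}\int_S F\,d^\pi(ds'|\cdot)$ give exactly the stated formula, and your boundedness remarks for policies in $\Pi_\mu$ justify each step. The paper does not prove this lemma itself but imports it from Lemma 2.3 of Kerimkulov et al.\ (the Fisher--Rao flow paper), and your Bellman-equation-plus-occupancy-kernel argument is the standard derivation of such a result.
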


Define the proximal policy
\begin{equation}\label{eq:app:mdp:proximal_policy}
\pi_{\pi^{\prime}}(d a | s)=\frac{1}{Z_{\pi^{\prime}}(s)} \exp \left(-\frac{1}{\tau}\left(Q^{\pi^{\prime}}_\tau(s, a)-V^{\pi^{\prime}}_
\tau(s)\right)\right) \mu(d a),
\end{equation}
where $Z_{\pi^{\prime}}(s):=\int_A \exp \left(-\frac{1}{\tau}\left(Q^{\pi^{\prime}}_\tau\left(s, a^{\prime}\right)-V^{\pi^{\prime}}_\tau(s)\right)\right) \mu\left(d a^{\prime}\right)$.
Then we have the following sandwich inequality of sub-optimal gap in terms of KL divergence:
\begin{lemma}[Sandwich Inequality]\label{lem:sandwich-inequality}
Let $\pi_{\bar\pi}$ denote the proximal policy step associated with a policy
$\bar\pi$. 
Then, for any policies $\pi,\pi'$ and any initial distribution
$\rho$,
\begin{equation}
V_\tau^{\pi'}(\rho)-V_\tau^\pi(\rho)
\le
\frac{\tau}{1-\gamma}
\int_S
\mathrm{KL}(\pi'(\cdot | s)| \pi_{\pi'}(\cdot |s))
\,d_\rho^\pi(ds).
\label{eq:sandwich-general-bound}
\end{equation}
Moreover, if $\pi^*$ is an optimal policy, then for any policy $\pi'$ and any
initial distribution $\rho$,
\begin{equation}
\frac{\tau}{1-\gamma}
\int_S
\mathrm{KL}(\pi'(\cdot | s)| \pi^*(\cdot | s))
\,d_\rho^{\pi'}(ds)
=
V_\tau^{\pi'}(\rho)-V_\tau^{\pi^*}(\rho)
\le
\frac{\tau}{1-\gamma}
\int_S
\mathrm{KL}(\pi'(\cdot | s)| \pi_{\pi'}(\cdot |s))
\,d_\rho^{\pi^*}(ds).
\label{eq:sandwich-inequality}
\end{equation}
\end{lemma}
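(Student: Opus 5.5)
The plan is to derive both parts of Lemma~\ref{lem:sandwich-inequality} directly from the performance difference Lemma~\ref{lem:app:mdp:performance_diff}. The key observation is that $Q^{\pi'}_\tau$ can be rewritten through the log-density of the proximal policy $\pi_{\pi'}$. Throughout I will take $\pi,\pi'\in\Pi_\mu$. Then $Q^{\pi'}_\tau$, $V^{\pi'}_\tau$ and all log-densities are bounded, so $\pi_{\pi'}\in\Pi_\mu$, $Z_{\pi'}(s)\in(0,\infty)$, and every KL term and integral below is finite. This boundedness is what allows the terms to be split and recombined freely.

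\emph{Step 1: rewriting the integrand.} By definition~\eqref{eq:app:mdp:proximal_policy},
\[
\log\tfrac{d\pi_{\pi'}}{d\mu}(a|s) = -\tfrac1\tau\big(Q^{\pi'}_\tau(s,a)-V^{\pi'}_\tau(s)\big) - \log Z_{\pi'}(s).
\]
Hence
\[
Q^{\pi'}_\tau(s,a) + \tau\log\tfrac{d\pi'}{d\mu}(a|s) = \tau \log\tfrac{d\pi'}{d\pi_{\pi'}}(a|s) + c(s),
\]
where $c(s) = V^{\pi'}_\tau(s)-\tau\log Z_{\pi'}(s)$ does not depend on $a$. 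Since $(\pi-\pi')(\cdot|s)$ has total mass zero, the term $c(s)$ integrates to zero against it.

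\emph{Step 2: the general bound~\eqref{eq:sandwich-general-bound}.} Apply Lemma~\ref{lem:app:mdp:performance_diff} to the pair $(\pi,\pi')$ and change sign. This gives $V^{\pi'}_\tau(\rho)-V^\pi_\tau(\rho)$ as $-\tfrac{1}{1-\gamma}\int_S[\cdots]\,d^\pi_\rho(ds)$, where by Step 1 the bracket equals
\[
\tau\!\int_A \log\tfrac{d\pi'}{d\pi_{\pi'}}\,d\pi - \tau\!\int_A \log\tfrac{d\pi'}{d\pi_{\pi'}}\,d\pi' + \tau\operatorname{KL}(\pi|\pi').
\]
The second term is $\tau\operatorname{KL}(\pi'|\pi_{\pi'})$. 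The first and third terms combine, via $\log\frac{d\pi'}{d\pi_{\pi'}}+\log\frac{d\pi}{d\pi'}=\log\frac{d\pi}{d\pi_{\pi'}}$, into $\tau\operatorname{KL}(\pi|\pi_{\pi'})$. Therefore
\[
V^{\pi'}_\tau(\rho)-V^\pi_\tau(\rho)=\tfrac{\tau}{1-\gamma}\int_S\big[\operatorname{KL}(\pi'|\pi_{\pi'})-\operatorname{KL}(\pi|\pi_{\pi'})\big]\,d^\pi_\rho(ds).
\]
Dropping the nonpositive term $-\operatorname{KL}(\pi|\pi_{\pi'})\le 0$ gives~\eqref{eq:sandwich-general-bound}.

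\emph{Step 3: the identity and upper bound in~\eqref{eq:sandwich-inequality}.} For the equality, apply Lemma~\ref{lem:app:mdp:performance_diff} with $(\pi,\pi')$ replaced by $(\pi',\pi^*)$. By~\eqref{eq:intro:optimal_policy},
\[
Q^*_\tau(s,a)+\tau\log\tfrac{d\pi^*}{d\mu}(a|s)=V^*_\tau(s),
\]
which is constant in $a$, so its integral against $\pi'-\pi^*$ vanishes. Only the KL term survives, integrated against $d^{\pi'}_\rho$, which is exactly the stated equality. For the right-hand inequality, specialise~\eqref{eq:sandwich-general-bound} to $\pi=\pi^*$.

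The only point I expect to need care is the bookkeeping in Step 2. One must justify splitting $\int\log\frac{d\pi'}{d\pi_{\pi'}}\,d(\pi-\pi')$ into two separately finite integrals and confirm that the KL terms recombine as claimed. Boundedness of all log-densities in $\Pi_\mu$ settles both, and measurability in $s$ is inherited from the kernels.
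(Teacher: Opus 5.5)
Your proposal is correct and follows essentially the paper's route. Both arguments use the performance difference lemma, the rewriting of $Q^{\pi'}_\tau+\tau\log\frac{d\pi'}{d\mu}$ as $\tau\log\frac{d\pi'}{d\pi_{\pi'}}$ plus an $a$-independent term, and, for the second part, the fact that $Q^{*}_\tau+\tau\log\frac{d\pi^*}{d\mu}=V^*_\tau$ is constant in $a$. The one difference is that the paper minimizes the integrand over $\pi$ and evaluates at $\pi_{\pi'}$, whereas you derive the exact identity with slack $-\mathrm{KL}(\pi|\pi_{\pi'})$ and drop it; this is the same computation made explicit, and it also justifies the paper's unproved minimization step. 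In Step 2 the second term equals $-\tau\,\mathrm{KL}(\pi'|\pi_{\pi'})$, not $+\tau\,\mathrm{KL}(\pi'|\pi_{\pi'})$, but your final identity has the correct sign.
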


\begin{proof}
The flat derivative of the objective can be written in terms of the proximal
policy (\ref{eq:app:mdp:proximal_policy}) as
\begin{align}
\frac{\delta V_\tau^{\pi'}}{\delta \pi}
&=
Q_\tau^{\pi'}-V_\tau^{\pi'}
+
\tau \log \frac{d\pi'}{d\mu}
\nonumber \\
&=
\tau \log \frac{d\pi'}{d\mu}
-
\tau
\log
\exp\left(
-\frac{1}{\tau}
\left(Q_\tau^{\pi'}-V_\tau^{\pi'}\right)
\right)
+
\tau \log Z_{\pi'}
-
\tau \log Z_{\pi'}
\nonumber \\
&=
\tau \log \frac{d\pi'}{d\mu}
-
\tau \log \frac{d\pi_{\pi'}}{d\mu}
-
\tau \log Z_{\pi'}
=
\tau \log \frac{d\pi'}{d\pi_{\pi'}}
-
\tau \log Z_{\pi'} .
\label{eq:sandwich-flat-derivative}
\end{align}

By the regularized performance-difference Lemma \ref{lem:app:mdp:performance_diff}, for any $\pi,\pi' \in \Pi_\mu$ and
any $\rho$,
\begin{align}
V_\tau^\pi(\rho)
&=
V_\tau^{\pi'}(\rho)
+
\frac{1}{1-\gamma}
\int_S
\left[
\int_A
\frac{\delta V_\tau^{\pi'}}{\delta \pi}(s,a)
(\pi-\pi')(da| s)
+
\tau \mathrm{KL}(\pi(\cdot | s)| \pi'(\cdot | s))
\right]
d_\rho^\pi(ds).
\label{eq:sandwich-performance-difference}
\end{align}
Using the expression of the flat derivative, we obtain
\begin{align}
V_\tau^\pi(\rho)
&=
V_\tau^{\pi'}(\rho)
+
\frac{1}{1-\gamma}
\int_S
\bigg[
\int_A
\left(
Q_\tau^{\pi'}(s,a)
+
\tau \log \frac{d\pi'}{d\mu}(a|s)
-
V_\tau^{\pi'}(s)
\right)
(\pi-\pi')(da| s)
\nonumber \\
&\hspace{4cm}
+
\tau \mathrm{KL}(\pi(\cdot | s)| \pi'(\cdot |s))
\bigg]
d_\rho^\pi(ds)
\nonumber \\
&=
V_\tau^{\pi'}(\rho)
+
\frac{1}{1-\gamma}
\int_S
\bigg[
\int_A
\left(
Q_\tau^{\pi'}(s,a)
+
\tau \log \frac{d\pi'}{d\mu}(a|s)
\right)
(\pi-\pi')(da| s)
\nonumber \\
&\hspace{4cm}
+
\tau \mathrm{KL}(\pi(\cdot | s)| \pi'(\cdot |s))
\bigg]
d_\rho^\pi(ds)
\nonumber \\
&=
V_\tau^{\pi'}(\rho)
+
\frac{1}{1-\gamma}
\int_S
\bigg[
\int_A
\left(
Q_\tau^{\pi'}(s,a)
+
\tau \log \frac{d\pi'}{d\mu}(a | s)
\right)
(\pi-\pi')(da| s)
\nonumber \\
&\hspace{4cm}
+
\int_A
\tau \log \frac{d\pi}{d\pi'}(a | s)
\,\pi(da| s)
\bigg]
d_\rho^\pi(ds)
\nonumber \\
&=
V_\tau^{\pi'}(\rho)
+
\frac{1}{1-\gamma}
\int_S
\bigg[
\int_A
Q_\tau^{\pi'}(s,a)
(\pi-\pi')(da| s)
+
\tau \mathrm{KL}(\pi(\cdot |s)| \mu)
\nonumber \\
&\hspace{4cm}
-
\tau
\int_A
\log \frac{d\pi'}{d\mu}(a|s)
\,\pi'(da| s)
\bigg]
d_\rho^\pi(ds).
\label{eq:sandwich-expanded-difference}
\end{align}

Minimizing the term inside the integral over $\pi$ for every $s\in S$ and
recalling the proximal policy step gives
\begin{align}
V_\tau^\pi(\rho)
&\ge
V_\tau^{\pi'}(\rho)
+
\frac{1}{1-\gamma}
\int_S
\left[
\int_A
\frac{\delta V_\tau^{\pi'}}{\delta \pi}(s,a)
(\pi_{\pi'}-\pi')(da| s)
+
\tau \mathrm{KL}(\pi_{\pi'}(\cdot | s)| \pi'(\cdot | s))
\right]
d_\rho^\pi(ds).
\label{eq:sandwich-proximal-lower-bound}
\end{align}
From \eqref{eq:sandwich-flat-derivative}, for any $\pi,\pi'$,
\begin{align}
&\int_A
\frac{\delta V_\tau^{\pi'}}{\delta \pi}(s,a)
(\pi_{\pi'}-\pi')(da| s)
+
\tau \mathrm{KL}(\pi_{\pi'}(\cdot | s)| \pi'(\cdot | s))
\nonumber \\
&=
\tau
\int_A
\log \frac{d\pi'}{d\pi_{\pi'}}(a|s)
(\pi_{\pi'}-\pi')(da| s)
+
\tau
\int_A
\log \frac{d\pi_{\pi'}}{d\pi'}(a|s)
\,\pi_{\pi'}(da| s)
\nonumber \\
&=
-\tau
\int_A
\log \frac{d\pi_{\pi'}}{d\pi'}(a|s)
(\pi_{\pi'}-\pi')(da| s)
+
\tau
\int_A
\log \frac{d\pi_{\pi'}}{d\pi'}(a|s)
\,\pi_{\pi'}(da| s)
\nonumber \\
&=
\tau
\int_A
\log \frac{d\pi_{\pi'}}{d\pi'}(a|s)
\,\pi'(da| s)
=
-\tau \mathrm{KL}(\pi'(\cdot | s)| \pi_{\pi'}(\cdot |s)).
\label{eq:sandwich-kl-cancellation}
\end{align}
Combining \eqref{eq:sandwich-proximal-lower-bound} and
\eqref{eq:sandwich-kl-cancellation}, we get
\begin{equation}
V_\tau^\pi(\rho)
\ge
V_\tau^{\pi'}(\rho)
-
\frac{\tau}{1-\gamma}
\int_S
\mathrm{KL}(\pi'(\cdot | s)| \pi_{\pi'}(\cdot |s))
\,d_\rho^\pi(ds).
\label{eq:sandwich-lower-bound}
\end{equation}
Equivalently,
\begin{equation}
V_\tau^{\pi'}(\rho)-V_\tau^\pi(\rho)
\le
\frac{\tau}{1-\gamma}
\int_S
\mathrm{KL}(\pi'(\cdot | s)| \pi_{\pi'}(\cdot |s))
\,d_\rho^\pi(ds),
\label{eq:sandwich-value-gap-upper}
\end{equation}
which proves \eqref{eq:sandwich-general-bound}.

Finally, the optimality condition for $\pi^*$ gives, for all $s \in S,a \in A$,
\begin{equation}
\frac{\delta V_\tau^{\pi^*}}{\delta \pi}
=
Q_\tau^{\pi^*}
-
V_\tau^{\pi^*}
+
\tau \log \frac{d\pi^*}{d\mu}
=
0.
\label{eq:sandwich-optimality-condition}
\end{equation}
Using this in the regularized performance-difference identity yields
\begin{equation}
V_\tau^{\pi'}(\rho)-V_\tau^{\pi^*}(\rho)
=
\frac{\tau}{1-\gamma}
\int_S
\mathrm{KL}(\pi'(\cdot |s)| \pi^*(\cdot | s))
\,d_\rho^{\pi'}(ds).
\label{eq:sandwich-optimal-gap}
\end{equation}
Taking $\pi=\pi^*$ in \eqref{eq:sandwich-value-gap-upper} and combining with
\eqref{eq:sandwich-optimal-gap} gives
\eqref{eq:sandwich-inequality}.
\end{proof}

\begin{lemma}[Gradient of log policy]\label{lem:app:mdp:grad_of_log_policy}
For any $\theta \in \mathbb{R}^p$, $s \in S$, $a \in A$
\begin{equation}\label{eq:app:mdp:grad_of_log_policy}
\nabla \log\frac{d \pi_{\theta}}{d \mu}(a|s) = g(s,a) - \int_A g(s, a^\prime) \pi_{\theta}( da^\prime | s). 
\end{equation}
\end{lemma}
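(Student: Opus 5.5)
The plan is to write the log-density explicitly and differentiate the normalising constant. By definition of $\pi_\theta$, for every $s\in S$ and $\mu$-a.e.\ $a\in A$,
\[
\log\tfrac{d\pi_\theta}{d\mu}(a|s) = \langle \theta, g(s,a)\rangle - \log Z_\theta(s), \qquad Z_\theta(s):=\int_A \exp\big(\langle\theta,g(s,a')\rangle\big)\mu(da').
\]
The gradient of the first term in $\theta$ is $g(s,a)$. It therefore remains to show that $\nabla_\theta \log Z_\theta(s) = \int_A g(s,a')\pi_\theta(da'|s)$.

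For this I will differentiate under the integral sign. The integrand $\theta\mapsto \exp(\langle\theta,g(s,a')\rangle)$ is smooth, with gradient $g(s,a')\exp(\langle\theta,g(s,a')\rangle)$. Because $\sum_i |g_i|^2_{B_b(S\times A)}\le 1$, we have $\|g(s,a')\|_2\le 1$. So on any ball $\{\|\theta\|_2\le r\}$ the gradient is dominated by the constant $e^{r}$, which is $\mu$-integrable since $\mu$ is a probability measure. Dominated convergence (via the mean value theorem) then lets us pass the derivative inside, giving
\[
\nabla_\theta Z_\theta(s)=\int_A g(s,a')e^{\langle\theta,g(s,a')\rangle}\mu(da').
\]
Also $Z_\theta(s)\ge e^{-\|\theta\|_2}>0$, so the chain rule applies to $\log Z_\theta(s)$. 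Dividing by $Z_\theta(s)$ yields exactly $\int_A g(s,a')\pi_\theta(da'|s)$, and subtracting this from $g(s,a)$ gives \eqref{eq:app:mdp:grad_of_log_policy}.

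The only step requiring care is justifying the interchange of derivative and integral on a general compact Polish action space $A$. The uniform bound on $g$ handles this.
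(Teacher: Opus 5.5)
Your proposal is correct and matches the paper's proof, which writes $\log\frac{d\pi_\theta}{d\mu}(a|s)=\langle\theta,g(s,a)\rangle-\log\int_A e^{\langle\theta,g(s,a')\rangle}\mu(da')$ and then takes the gradient. You also justify differentiating under the integral sign, using $\|g\|_2\le 1$, dominated convergence and $Z_\theta(s)>0$; the paper leaves this step implicit.
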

\begin{proof}
For any  $s \in S$, $a \in A$,
\[
\log\frac{d \pi_{\theta}}{d \mu}(a|s) = \langle\theta,g(s,a)\rangle - \log \int_A \exp\left(\langle\theta,g(s,a)\rangle\right)\mu(da),
\]
then taking gradient concludes the proof.
\end{proof}
In the remainder of the paper, $\nabla$ means $\nabla_\theta$ unless stated otherwise.
\begin{definition}\label{def:app:mdp:FIM}
For fixed $s$ and for $\theta \in \mathbb{R}^p$, the Fisher information matrix (FIM) $G^{\pi_\theta}(s)$ is defined as
\begin{align*}
G^{\pi_\theta}(s) & = \int_A \left(\nabla \log\frac{d \pi_{\theta}}{d \mu}(a|s) \right)\left(\nabla \log\frac{d \pi_{\theta}}{d \mu}(a|s)\right)^{\top}\pi_\theta(da|s)\\
& = \int_A \left(g(s,a) - \int_A g(s, a^\prime) \pi_\theta ( da^\prime | s) \right)\left(g(s,a) - \int_A g(s, a^\prime) \pi_\theta (da^\prime | s) \right)^{\top}\pi_\theta(da|s),
\end{align*}
and for $\rho \in \mathcal{P}(S)$,$G^{\pi_\theta}(\rho) = \int G^{\pi_\theta}(s) d_\rho^{\pi_\theta}(ds)$. 
\end{definition}

\begin{lemma}\label{lem:app:mdp:bounded_grad_of_log_policy}
For any $\theta \in \mathbb{R}^p$ and any $s \in S$ and $a \in A$,
\begin{equation}
\left\|\nabla \log\frac{d \pi_{\theta}}{d \mu} (a|s)\right\|_2 \leq 2.
\end{equation}
\end{lemma}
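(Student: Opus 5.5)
The plan is to combine Lemma~\ref{lem:app:mdp:grad_of_log_policy} with the standing normalization $\sum_{i=1}^p |g_i|_{B_b(S\times A)}^2 \le 1$. By Lemma~\ref{lem:app:mdp:grad_of_log_policy}, for fixed $\theta$, $s$, $a$ we have
\[
\nabla \log\tfrac{d\pi_\theta}{d\mu}(a|s) = g(s,a) - \textstyle\int_A g(s,a')\,\pi_\theta(da'|s),
\]
so the triangle inequality gives
\[
\Big\|\nabla \log\tfrac{d\pi_\theta}{d\mu}(a|s)\Big\|_2 \le \|g(s,a)\|_2 + \Big\|\textstyle\int_A g(s,a')\,\pi_\theta(da'|s)\Big\|_2 .
\]
It therefore suffices to show that each of the two terms on the right is at most $1$.

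For the first term, pointwise we have $\|g(s,a)\|_2^2 = \sum_{i=1}^p g_i(s,a)^2 \le \sum_{i=1}^p |g_i|_{B_b(S\times A)}^2 \le 1$.

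For the second term, the norm is convex and $\pi_\theta(\cdot|s)$ is a probability measure. Jensen's inequality, or equivalently Minkowski's integral inequality, then gives
\[
\Big\|\textstyle\int_A g(s,a')\,\pi_\theta(da'|s)\Big\|_2 \le \textstyle\int_A \|g(s,a')\|_2\,\pi_\theta(da'|s) \le 1 .
\]
The vector integral is well defined componentwise because $g$ is bounded and measurable.

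Adding the two bounds gives the claim, uniformly in $\theta$, $s$ and $a$. There is no real obstacle here; the only point needing care is to justify moving the norm inside the vector-valued integral, which the convexity argument above handles.
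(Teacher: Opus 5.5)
Your proof is correct. The paper states this lemma without proof; your argument (the formula from Lemma~\ref{lem:app:mdp:grad_of_log_policy}, the triangle inequality, the pointwise bound $\|g(s,a)\|_2\le 1$ from the normalization $\sum_{i=1}^p|g_i|_{B_b(S\times A)}^2\le 1$, and Jensen's inequality for the $\pi_\theta$-mean) is the one the paper implicitly relies on when it later appeals to ``the bound on $g$''.
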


\begin{lemma}[Smoothness of $\theta \mapsto \nabla \log\frac{d \pi_{\theta}}{d \mu} (a|s)$]\label{lem: smooth of log policy}
For any $\theta, \theta^\prime \in \mathbb{R}^p$, then for any $s \in S$ and $a \in A$,        \begin{equation}\label{eq: smooth of log policy}
\left\|\nabla \log\frac{d \pi_{\theta^\prime}}{d \mu} (a|s) - \nabla \log\frac{d \pi_{\theta}}{d \mu} (a|s)\right\|_2 \leq 2\|\theta^\prime - \theta\|_2. 
\end{equation}
\end{lemma}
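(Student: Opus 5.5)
The plan is to reduce the statement to a Lipschitz bound on the mean feature map and obtain that bound from the mean value theorem. By Lemma~\ref{lem:app:mdp:grad_of_log_policy}, the term $g(s,a)$ cancels in the difference. Hence, for fixed $s$,
\[
\nabla \log\tfrac{d \pi_{\theta'}}{d \mu}(a|s) - \nabla \log\tfrac{d \pi_{\theta}}{d \mu}(a|s) = m_s(\theta) - m_s(\theta'),
\qquad m_s(\vartheta):=\int_A g(s,a')\,\pi_\vartheta(da'|s).
\]
This is independent of $a$, so it suffices to show that $\vartheta\mapsto m_s(\vartheta)$ is Lipschitz with constant at most $2$, uniformly in $s$.

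First I will show that $m_s$ is $C^1$ with Jacobian equal to the Fisher information matrix $G^{\pi_\vartheta}(s)$ of Definition~\ref{def:app:mdp:FIM}. Write $m_s(\vartheta)=\int_A g\,e^{\langle\vartheta,g\rangle}d\mu \big/ \int_A e^{\langle\vartheta,g\rangle}d\mu$. Since $\|g\|_2\le 1$ everywhere, the integrands and their $\vartheta$-derivatives are bounded by $e^{\|\vartheta\|_2}$ on bounded sets of $\vartheta$. As $\mu$ is a probability measure, dominated convergence lets me differentiate under the integral. The quotient rule then gives
\[
\nabla_\vartheta m_s(\vartheta)=\int_A g g^\top d\pi_\vartheta - m_s m_s^\top = G^{\pi_\vartheta}(s),
\]
which is the covariance of $g(s,\cdot)$ under $\pi_\vartheta(\cdot|s)$. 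Equivalently, this derivative is $\int_A g\,(\nabla\log\tfrac{d\pi_\vartheta}{d\mu})^\top d\pi_\vartheta$, which is again $G^{\pi_\vartheta}(s)$ by Lemma~\ref{lem:app:mdp:grad_of_log_policy}. This justification of differentiation under the integral is the only slightly delicate step, and it is routine here because the features are bounded.

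Next I bound the operator norm of $G^{\pi_\vartheta}(s)$ uniformly in $\vartheta$ and $s$. The matrix is symmetric positive semidefinite. For a unit vector $v$,
\[
v^\top G^{\pi_\vartheta}(s) v=\operatorname{Var}_{\pi_\vartheta}(v^\top g)\le \int_A (v^\top g)^2 d\pi_\vartheta \le \sup_{a}\|g(s,a)\|_2^2\le \sum_{i}|g_i|_{B_b}^2\le 1.
\]
Hence $\|G^{\pi_\vartheta}(s)\|_{op}\le 1$.

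Finally, set $\theta_r=\theta+r(\theta'-\theta)$. The fundamental theorem of calculus gives
\[
m_s(\theta')-m_s(\theta)=\int_0^1 G^{\pi_{\theta_r}}(s)\,(\theta'-\theta)\,dr,
\]
so $\|m_s(\theta')-m_s(\theta)\|_2\le \|\theta'-\theta\|_2 \le 2\|\theta'-\theta\|_2$. Combined with the cancellation in the first step, this proves~\eqref{eq: smooth of log policy} with the stated constant $2$.
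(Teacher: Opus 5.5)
Your proposal is correct and follows essentially the paper's route: both reduce the difference to the change in the mean feature $\int_A g(s,a')\,\pi_\vartheta(da'|s)$ and apply the fundamental theorem of calculus along $\theta+r(\theta'-\theta)$, where the derivative is the matrix $\int_A g\,(\nabla\log\frac{d\pi_\vartheta}{d\mu})^\top d\pi_\vartheta$. Recognising this matrix as the covariance $G^{\pi_\vartheta}(s)$ gives you the sharper constant $1$ (the paper only bounds it crudely by $2$), and it also avoids the sign slip and the inner-versus-outer-product slip in the paper's displayed computation.
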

\begin{proof}
\begin{align*}
& \nabla \log\frac{d \pi_{\theta^\prime}}{d \mu} (a|s) - \nabla \log\frac{d \pi_{\theta}}{d \mu} (a|s) \\
& = \int_A g(s,a) \left(\pi_{\theta^\prime} - \pi_{\theta}\right)(da|s)\\
& =  \int_0^1 \int_A g(s,a) \bigg\langle\nabla \log\frac{d \pi_{\theta^\epsilon}}{d \mu} (a|s),\theta^\prime - \theta \bigg\rangle \pi_{\theta^\epsilon} (da|s)d \epsilon  \\
& = \int_0^1 \int_A g(s,a) \left\langle g(s, a) - \int_A g(s,a^\prime)\pi_{\theta^\epsilon} (da|s),\theta^\prime - \theta \right\rangle \pi_{\theta^\epsilon} (da|s)d \epsilon \\
& = \int_0^1 \int_A \langle g(s,a) , g(s,a) - \int_A g(s,a^\prime)\pi_{\theta^\epsilon} (da|s)\rangle \pi_{\theta^\epsilon} (da|s)d \epsilon \cdot (\theta^\prime - \theta),
\end{align*}
where we use Lemma \ref{lem:app:mdp:grad_of_log_policy} in the third equality.
Hence by the bound of $g$,
\begin{align*}
&\left\|\nabla \log\frac{d \pi_{\theta^\prime}}{d \mu} (s,a) - \nabla \log\frac{d \pi_{\theta}}{d \mu} (a|s)\right\|_2\\
&\leq \left|\sup_a \|g(s,a)\|^2_2 - \left\|\int_0^1 \int_A g(s,a)\pi_{\theta^\epsilon} (da|s)d \epsilon\right\|^2_2\right| \cdot \|\theta^\prime - \theta\|_2\\
& \leq 2\|\theta^\prime - \theta\|_2,
\end{align*}
which concludes the proof.
\end{proof}

\begin{lemma}\label{lem:bounds}
For any $\theta \in \mathbb{R}^p$, we have
\begin{gather*}
\left|V^{\pi_\theta}_{\tau}(s)\right|\le \frac{1}{1-\gamma} \left(1+\tau \bigg|\log \frac{d\pi_{\theta}}{d \mu}\bigg|_{B_b(S \times A)} \right),\\
|Q^{\pi_{\theta}}_{\tau}(s,a)| \le \frac{1}{1-\gamma} \left(1+\gamma\tau \bigg|\log \frac{d\pi_{\theta}}{d \mu}\bigg|_{B_b(S \times A)} \right).
\end{gather*}
\end{lemma}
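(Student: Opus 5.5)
The plan is to bound $V^{\pi_\theta}_\tau$ and $Q^{\pi_\theta}_\tau$ pointwise. For $V$ we use the stochastic representation \eqref{eq:intro:value_function_occ}, and for $Q$ we then use the definition \eqref{def:intro:Qpi}. Throughout, write $L := |\log \tfrac{d\pi_\theta}{d\mu}|_{B_b(S\times A)}$. We may assume $L<\infty$: if $L=\infty$ the right-hand sides are $+\infty$ and there is nothing to prove. Recall also the standing normalisation $|c|_{B_b(S\times A)}\le 1$.

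For the value function, fix $s\in S$. The integrand in \eqref{eq:intro:value_function_occ} satisfies
\[
\Bigl|c(s',a')+\tau\log\tfrac{d\pi_\theta}{d\mu}(a'|s')\Bigr|\le 1+\tau L
\]
for all $(s',a')$. Since $\pi_\theta(\cdot|s')$ and $d^{\pi_\theta}(\cdot|s)$ are probability measures (the latter because $(1-\gamma)\sum_t\gamma^t=1$ and each $P^t_{\pi_\theta}$ is stochastic), integrating this bound gives
\[
|V^{\pi_\theta}_\tau(s)|\le \tfrac{1}{1-\gamma}(1+\tau L).
\]
Equivalently, one can argue directly from the series definition, bounding each summand by $\gamma^n(1+\tau L)$ and summing. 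Either way, absolute convergence of the series justifies the interchange of sum and expectation.

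For the $Q$-function, \eqref{def:intro:Qpi} gives
\[
|Q^{\pi_\theta}_\tau(s,a)|\le |c(s,a)|+\gamma\sup_{s'}|V^{\pi_\theta}_\tau(s')|\le 1+\tfrac{\gamma}{1-\gamma}(1+\tau L)=\tfrac{1}{1-\gamma}(1+\gamma\tau L),
\]
using $P(\cdot|s,a)\in\mathcal P(S)$. The last equality holds because
\[
1+\tfrac{\gamma}{1-\gamma}=\tfrac{1}{1-\gamma}.
\]

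I expect no real obstacle here. The only point needing care is measurability and well-definedness of the integrals. This is covered by $c$ and $\log\tfrac{d\pi_\theta}{d\mu}$ being bounded measurable (the latter when $L<\infty$, i.e. $\pi_\theta\in\Pi_\mu$), so all kernels act on $B_b$ as in \eqref{ineq:app:essup_kernel}.
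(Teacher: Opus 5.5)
Your proof is correct and follows the paper's argument. The paper also bounds $|V^{\pi_\theta}_\tau|$ through the occupancy-kernel representation \eqref{eq:intro:value_function_occ} using $|c|\le 1$, then gets the $Q$-bound from \eqref{def:intro:Qpi} via the same identity $1+\tfrac{\gamma}{1-\gamma}(1+\tau L)=\tfrac{1}{1-\gamma}(1+\gamma\tau L)$. (Your case $L=\infty$ never actually occurs: $\|g\|_2\le 1$ gives $|\log\tfrac{d\pi_\theta}{d\mu}|\le 2\|\theta\|_2$ for every $\theta\in\mathbb{R}^p$.)
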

\begin{proof}

By the bound on cost function and Eq.~\eqref{eq:intro:value_function_occ}, we obtain
\begin{equation}\label{eq:bound_on_V_pi}
\begin{aligned}
|V^{\pi_\theta}_{\tau}(s)|&=\frac{1}{1-\gamma}\left|\int_{S}\int_A \left(c(s',a)+\tau \log \frac{d\pi_{\theta}}{d\mu}(a|s)\right)\pi_{\theta}(da|s')d^{\pi_\theta}(ds'|s)\right|\\
&\le \frac{1}{1-\gamma} \left(1+\tau \bigg|\log \frac{d\pi_{\theta}}{d \mu}\bigg|_{B_b(S \times A)} \right)\,.
\end{aligned}
\end{equation}
To estimate the state-action value function, by Eq.~\eqref{def:intro:Qpi}, we have
\begin{align*}
|Q^{\pi_{\theta}}_{\tau}(s,a)| &\le \frac{1}{1-\gamma} \left(1+\gamma\tau \bigg|\log \frac{d\pi_{\theta}}{d \mu}\bigg|_{B_b(S \times A)} \right)\,.
\end{align*}
\end{proof}

Next we introduce the following Lipschitz continuity of the occupancy kernel (See Lemma A.4 in~\cite{leahy2022convergence}). 
\begin{lemma}\label{lem:app:mdp:lip_occ_kernel}
For given $\pi,\pi' \in \mathcal{P}(A|S)$, we have
\[
|d^{\pi'}-d^{\pi}|_{b\mathcal{K}(S|S)}\le \frac{\gamma}{1-\gamma}|\pi'-\pi|_{b\mathcal{K}(A|S)}\,.
\]
\end{lemma}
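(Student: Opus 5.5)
We prove Lemma~\ref{lem:app:mdp:lip_occ_kernel} by a resolvent (Neumann series) identity for the discounted occupancy kernel. Let $P_\pi\in\mathcal P(S|S)$ denote the state-to-state kernel $P_\pi(ds'|s)=\int_A P(ds'|s,a)\pi(da|s)$. Since $|P_\pi|_{b\mathcal K(S|S)}=1$ and $\gamma<1$, the series $\sum_{t\ge0}\gamma^tP_\pi^t$ converges absolutely in the Banach algebra $b\mathcal K(S|S)$. Its sum is the inverse $(I-\gamma P_\pi)^{-1}$, where $I$ is the identity kernel $\delta_s(ds')$. Hence
\[
d^\pi=(1-\gamma)(I-\gamma P_\pi)^{-1},
\]
and the same holds for $\pi'$.

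Next we use the second resolvent identity $X^{-1}-Y^{-1}=X^{-1}(Y-X)Y^{-1}$ with $X=I-\gamma P_{\pi'}$ and $Y=I-\gamma P_\pi$. This gives
\[
d^{\pi'}-d^{\pi}=(1-\gamma)\,\gamma\,(I-\gamma P_{\pi'})^{-1}(P_{\pi'}-P_\pi)(I-\gamma P_\pi)^{-1}
=\frac{\gamma}{1-\gamma}\,d^{\pi'}(P_{\pi'}-P_\pi)\,d^{\pi}.
\]
The ordering of the composition is immaterial for norm bounds. The norm is submultiplicative, and $d^{\pi'}$ and $d^\pi$ are stochastic kernels of norm $1$. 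Therefore
\[
|d^{\pi'}-d^\pi|_{b\mathcal K(S|S)}\le\frac{\gamma}{1-\gamma}\,|P_{\pi'}-P_\pi|_{b\mathcal K(S|S)}.
\]

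It remains to bound $|P_{\pi'}-P_\pi|$ by $|\pi'-\pi|_{b\mathcal K(A|S)}$. Fix $s$ and a function $h\in B_b(S)$ with $|h|\le1$. Set $\phi(a)=\int_S h(s')P(ds'|s,a)$, which satisfies $|\phi|\le1$. Then
\[
\int_S h\,(P_{\pi'}-P_\pi)(ds'|s)=\int_A\phi(a)(\pi'-\pi)(da|s),
\]
and this is at most $|\pi'(\cdot|s)-\pi(\cdot|s)|_{\mathcal M(A)}$ by the duality characterization of the total variation norm recalled above. Taking the supremum over $h$ and then over $s$ gives the claim.

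The only delicate point is justifying the algebraic manipulations in $b\mathcal K(S|S)$. This requires identifying $d^\pi$ with the inverse of $I-\gamma P_\pi$ and checking that the composition of kernels corresponds to the operator product. Both follow from the Banach algebra structure noted in Section~\ref{sec:nota and def}, via absolute convergence of the Neumann series. Alternatively, one can avoid inverses entirely by using the telescoping identity
\[
P_{\pi'}^t-P_\pi^t=\sum_{k=0}^{t-1}P_{\pi'}^{k}(P_{\pi'}-P_\pi)P_\pi^{t-1-k},
\]
which gives $|P_{\pi'}^t-P_\pi^t|\le t\,|P_{\pi'}-P_\pi|$. Summing, and using $(1-\gamma)\sum_t t\gamma^t=\gamma/(1-\gamma)$, yields the same constant.
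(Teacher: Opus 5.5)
Your proof is correct and gives exactly the constant $\frac{\gamma}{1-\gamma}$. The paper does not prove this lemma; it imports it from Lemma A.4 of \cite{leahy2022convergence}, so there is no in-paper argument to compare against. Your proposal supplies a self-contained justification that uses only the Banach-algebra structure of $b\mathcal{K}(S|S)$ the paper recalls.

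The individual steps are sound:
\begin{enumerate}
\item The Neumann series gives $d^\pi=(1-\gamma)(I-\gamma P_\pi)^{-1}$.
\item The second resolvent identity then yields $d^{\pi'}-d^{\pi}=\frac{\gamma}{1-\gamma}\,d^{\pi'}(P_{\pi'}-P_\pi)\,d^{\pi}$. This holds in any associative algebra, which is exactly why the composition convention is irrelevant.
\item Both occupancy kernels are stochastic and therefore have norm one.
\item The duality argument with $\phi(a)=\int_S h(s')P(ds'|s,a)$ correctly gives $|P_{\pi'}-P_\pi|_{b\mathcal{K}(S|S)}\le|\pi'-\pi|_{b\mathcal{K}(A|S)}$.
\end{enumerate}

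Of your two variants, the telescoping one works directly with the series defining $d^\pi$ and needs no invertibility. The resolvent one packages the same computation into an exact factorization of $d^{\pi'}-d^\pi$, which is handy if you later want to differentiate $\theta\mapsto d^{\pi_\theta}$.

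A third, even shorter route avoids both inverses and series sums. Use the fixed-point relation $d^\pi=(1-\gamma)I+\gamma P_\pi d^\pi$; subtracting the two versions gives $d^{\pi'}-d^\pi=\gamma P_{\pi'}(d^{\pi'}-d^\pi)+\gamma(P_{\pi'}-P_\pi)d^\pi$. Hence $(1-\gamma)|d^{\pi'}-d^\pi|_{b\mathcal{K}(S|S)}\le\gamma|P_{\pi'}-P_\pi|_{b\mathcal{K}(S|S)}$, and the rearrangement is legitimate since the left side is finite.
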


\begin{corollary}[Lipschitz continuity of the occupancy measure in the parameter]\label{cor:app:mdp:Lip_occ}
For given $\theta,\theta' \in \mathbb{R}^p$, we have
\[
|d^{\pi_{\theta'}}-d^{\pi_{\theta}}|_{b\mathcal{K}(S|S)}\le \frac{2\gamma}{1-\gamma}\|\theta' - \theta\|_2\,.
\]
\end{corollary}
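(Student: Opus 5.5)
The plan is to deduce the corollary directly from Lemma~\ref{lem:app:mdp:lip_occ_kernel}. Applied with $\pi=\pi_\theta$ and $\pi'=\pi_{\theta'}$, that lemma gives
\[
|d^{\pi_{\theta'}}-d^{\pi_\theta}|_{b\mathcal K(S|S)}\le \tfrac{\gamma}{1-\gamma}\,|\pi_{\theta'}-\pi_\theta|_{b\mathcal K(A|S)} ,
\]
so it remains to show that, for every fixed $s\in S$,
\[
|\pi_{\theta'}(\cdot|s)-\pi_\theta(\cdot|s)|_{\mathcal M(A)}\le 2\|\theta'-\theta\|_2 .
\]
Taking the supremum over $s$ then finishes the argument.

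To bound the total variation, I interpolate along the segment $\theta^\epsilon=\theta+\epsilon(\theta'-\theta)$ for $\epsilon\in[0,1]$. The density $\epsilon\mapsto \frac{d\pi_{\theta^\epsilon}}{d\mu}(a|s)$ is smooth in $\epsilon$. By the chain rule and Lemma~\ref{lem:app:mdp:grad_of_log_policy}, its derivative is
\[
\frac{d\pi_{\theta^\epsilon}}{d\mu}(a|s)\,\Big\langle \nabla\log\frac{d\pi_{\theta^\epsilon}}{d\mu}(a|s),\,\theta'-\theta\Big\rangle .
\]
Integrating this derivative over $\epsilon\in[0,1]$ expresses the difference of the two densities. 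I then take absolute values, integrate against $\mu$, and use Fubini (everything is bounded, since $g$ is bounded and $A$ carries the finite measure $\mu$). This yields
\[
|\pi_{\theta'}(\cdot|s)-\pi_\theta(\cdot|s)|_{\mathcal M(A)}\le\int_0^1\!\!\int_A \Big\|\nabla\log\frac{d\pi_{\theta^\epsilon}}{d\mu}(a|s)\Big\|_2\,\pi_{\theta^\epsilon}(da|s)\,d\epsilon\;\|\theta'-\theta\|_2 ,
\]
where I have used Cauchy--Schwarz. By Lemma~\ref{lem:app:mdp:bounded_grad_of_log_policy} the integrand is at most $2$, and since $\pi_{\theta^\epsilon}(\cdot|s)$ is a probability measure, the right-hand side is at most $2\|\theta'-\theta\|_2$. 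Note that the total variation norm here is $|\cdot|_{\mathcal M(A)}=|\cdot|(A)$, which equals the $L^1(\mu)$ norm of the density difference.

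The only delicate point is justifying differentiation under the integral sign in the normaliser $\int_A \exp(\langle\theta,g(s,a)\rangle)\mu(da)$. This follows from dominated convergence, because $|g|\le 1$ and $\mu$ is finite, so $\exp(\langle\theta^\epsilon,g\rangle)$ and its $\epsilon$-derivative are uniformly bounded on $[0,1]$. Measurability in $s$, needed for the kernel norm, is inherited from the measurability of $g$.

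Combining the total variation bound with Lemma~\ref{lem:app:mdp:lip_occ_kernel} gives $\tfrac{2\gamma}{1-\gamma}\|\theta'-\theta\|_2$, as claimed.
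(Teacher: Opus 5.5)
Your proposal is correct and follows essentially the same route as the paper: you reduce via Lemma~\ref{lem:app:mdp:lip_occ_kernel} to the bound $|\pi_{\theta'}-\pi_\theta|_{b\mathcal K(A|S)}\le 2\|\theta'-\theta\|_2$, interpolate along $\theta^\epsilon$, and conclude with Lemma~\ref{lem:app:mdp:bounded_grad_of_log_policy}. The only cosmetic difference is that the paper uses the dual form $\sup_{|h|\le 1}\int_A h\,d(\pi_{\theta'}-\pi_\theta)(\cdot|s)$, whereas you bound the $L^1(\mu)$ norm of the density difference directly; the two are equivalent.
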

\begin{proof}
By Lemma \ref{lem:app:mdp:lip_occ_kernel}, it is enough to show that
\begin{equation}\label{ineq:Lip_pi}
|\pi_{\theta'}-\pi_{\theta}|_{b\mathcal{K}(A|S)} \le 2\|\theta' - \theta\|_2\,.
\end{equation}
We have 
\begin{align*}
|\pi_{\theta'}-\pi_{\theta}|_{b\mathcal{K}(A|S)}&=\sup_{s\in S}|\pi_{\theta'}(\cdot|s)-\pi_{\theta}(\cdot|s)|_{\mathcal{M}(A)}\\
&=\sup_{s\in S}\underset{|h|_{B_b(A)}\le 1}{\sup_{h\in B_b(A)}}\int_{A}h(a)\left(\pi_{\theta'}-\pi_{\theta}\right)(da|s)\,.
\end{align*}
Let $\theta^{\varepsilon}=\varepsilon \theta'+(1-\varepsilon)\theta$, $\varepsilon\in [0,1]$. Let $s\in S$ and $h\in B_b(A)$ be arbitrarily given. 
Using Lemma \ref{lem:app:mdp:grad_of_log_policy}, we find that 
\[
\int_{A}h(a)\left(\pi_{\theta'}-\pi_{\theta}\right)(da|s)= \left\langle\int_{A} h(a)\int_0^1 \nabla \log \frac{d \pi_{\theta^\varepsilon}}{d \mu} (a|s) \pi_{\theta^\varepsilon}(da|s)d\varepsilon, \theta^\prime - \theta\right\rangle,
\]
which results into (\ref{ineq:Lip_pi}) by Lemma \ref{lem:app:mdp:bounded_grad_of_log_policy} and concludes the proof.
\end{proof}

The proof of Proposition~\ref{prp:app:mdp:Gradient_of_the_Objective} follows Lemma 2.3 in~\cite{leahy2022convergence} and can be found in Section \ref{sec:add_proof}.
It verifies that chain rule holds for $\pi \mapsto V^\pi_\tau$ and $\theta \mapsto \pi_\theta$.

\begin{proposition}[Gradient of the Objective]\label{prp:app:mdp:Gradient_of_the_Objective} 
For any $\theta \in \mathbb{R}^p$, 
\begin{equation}\label{eq:wts_func_deriv_V}
\nabla V_\tau^{\pi_\theta}(\rho) = \frac{1}{1-\gamma}\int_S\int_A \left(Q^{\pi_\theta}_\tau(s,a) + \tau \log \frac{d \pi_{\theta}}{d \mu}(a|s)\right)\nabla \log \frac{d\pi_\theta}{d \mu}(a| s)\pi_{\theta}(da|s)d_\rho^{\pi_{\theta}}(ds).
\end{equation}
\end{proposition}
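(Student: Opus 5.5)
The plan is to compute the derivative of $\theta \mapsto V^{\pi_\theta}_\tau(\rho)$ by differentiating along the segment $\theta^\varepsilon = \theta + \varepsilon(\theta'-\theta)$ with the performance difference Lemma~\ref{lem:app:mdp:performance_diff}, following Lemma 2.3 in~\cite{leahy2022convergence}. Apply Lemma~\ref{lem:app:mdp:performance_diff} with $\pi = \pi_{\theta'}$ and $\pi' = \pi_\theta$:
\[
V^{\pi_{\theta'}}_\tau(\rho)-V^{\pi_\theta}_\tau(\rho) = \tfrac{1}{1-\gamma}\textstyle\int_S \Big[\int_A \big(Q^{\pi_\theta}_\tau + \tau\log\tfrac{d\pi_\theta}{d\mu}\big)(\pi_{\theta'}-\pi_\theta)(da|s) + \tau\,\mathrm{KL}(\pi_{\theta'}(\cdot|s)|\pi_\theta(\cdot|s))\Big] d^{\pi_{\theta'}}_\rho(ds).
\]
The goal is to show that this equals $\langle \text{RHS of }\eqref{eq:wts_func_deriv_V}, \theta'-\theta\rangle + o(\|\theta'-\theta\|_2)$.

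\emph{First-order term.} As in the proof of Corollary~\ref{cor:app:mdp:Lip_occ}, write
\[
(\pi_{\theta'}-\pi_\theta)(da|s)=\int_0^1 \big\langle \nabla\log\tfrac{d\pi_{\theta^\varepsilon}}{d\mu}(a|s),\theta'-\theta\big\rangle \pi_{\theta^\varepsilon}(da|s)\,d\varepsilon .
\]
The integrand $h := Q^{\pi_\theta}_\tau+\tau\log\frac{d\pi_\theta}{d\mu}$ is bounded: $\log\frac{d\pi_\theta}{d\mu}$ is bounded by $2\|\theta\|_2$ since $|\langle\theta,g\rangle|\le\|\theta\|_2$, and $Q^{\pi_\theta}_\tau$ is bounded by Lemma~\ref{lem:bounds}. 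Replacing $\pi_{\theta^\varepsilon}$ by $\pi_\theta$ and $\nabla\log\frac{d\pi_{\theta^\varepsilon}}{d\mu}$ by $\nabla\log\frac{d\pi_\theta}{d\mu}$ then costs $O(\|\theta'-\theta\|_2^2)$. This follows from \eqref{ineq:Lip_pi}, Lemma~\ref{lem:app:mdp:bounded_grad_of_log_policy} and Lemma~\ref{lem: smooth of log policy}.

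Next, replace $d^{\pi_{\theta'}}_\rho$ by $d^{\pi_\theta}_\rho$. Since the inner integral is already $O(\|\theta'-\theta\|_2)$ uniformly in $s$, Corollary~\ref{cor:app:mdp:Lip_occ} shows this swap also costs only $O(\|\theta'-\theta\|_2^2)$.

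\emph{KL term.} I will show $\mathrm{KL}(\pi_{\theta'}|\pi_\theta)=O(\|\theta'-\theta\|_2^2)$ uniformly in $s$. The log ratio is
\[
\log\tfrac{d\pi_{\theta'}}{d\pi_\theta} = \langle\theta'-\theta,g\rangle - \log\textstyle\int e^{\langle\theta'-\theta,g\rangle}d\pi_\theta .
\]
A second-order Taylor expansion of the log-partition function in $\theta'$ around $\theta$ gives a first derivative that cancels the mean. Its Hessian is the Fisher matrix, bounded by $4$ via Lemma~\ref{lem:app:mdp:bounded_grad_of_log_policy}. Hence the KL term is at most $C\|\theta'-\theta\|_2^2$.

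\emph{Conclusion.} Collecting the estimates shows that $V^{\pi_\theta}_\tau(\rho)$ is Fréchet differentiable with the claimed gradient. The main obstacle is the uniformity in $s$ of these remainders, especially the bound on $h$. Boundedness of $\log\frac{d\pi_\theta}{d\mu}$ handles this locally in $\theta$, which suffices since differentiability is a local property.
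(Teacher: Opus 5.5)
Your argument is correct, and it takes a genuinely different route from the paper's.

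The paper never invokes Lemma~\ref{lem:app:mdp:performance_diff}. Instead it expands the difference quotient $H^\varepsilon(s)$ through the on-policy Bellman equation~\eqref{eq:intro:policy_Bellman} into $I^\varepsilon_1,\dots,I^\varepsilon_5$. It then iterates the recursion under $P_{\pi_\theta}$ so that $d^{\pi_\theta}$ appears, and lets $\varepsilon\to0$ term by term. That limit uses total-variation convergence $\pi_{\theta^\varepsilon}\to\pi_\theta$, a Taylor expansion of the logarithm, continuity of $\varepsilon\mapsto V^{\pi_{\theta^\varepsilon}}_\tau$, and dominated convergence. So the paper freezes the occupancy measure at $\theta$ and lets the value function inside $I^\varepsilon_3$ move. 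You instead freeze $Q^{\pi_\theta}_\tau$ and let the occupancy measure $d^{\pi_{\theta'}}_\rho$ move, which is why you need Corollary~\ref{cor:app:mdp:Lip_occ} as a quantitative Lipschitz bound rather than merely for continuity.

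What your version buys is explicit control of every remainder. The KL term is the Bregman divergence of the log-partition function, hence at most $\tfrac12\sup_\xi\lambda_{\max}(G^{\pi_\xi}(s))\|\theta'-\theta\|_2^2$. The two swaps cost $O(\|\theta'-\theta\|_2^2)$ with constants depending only on $\gamma$, $\tau$ and $|Q^{\pi_\theta}_\tau+\tau\log\tfrac{d\pi_\theta}{d\mu}|_{B_b(S\times A)}$. So you get Fréchet differentiability directly. The paper only computes directional derivatives along segments and leaves the passage to a genuine gradient implicit. The paper's route, in turn, avoids the performance difference identity and uses its perturbation estimates only to obtain limits, in the style of~\cite{leahy2022convergence}.

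A minor presentational point in your KL step: expanding $\delta\mapsto\log\int_A e^{\langle\delta,g\rangle}\,d\pi_\theta$ at $\delta=0$ does not cancel the mean exactly. It leaves the cross term $\langle\delta,\mathbb E_{\pi_{\theta'}}g-\mathbb E_{\pi_\theta}g\rangle$, which is again $O(\|\delta\|_2^2)$. Expanding the log-partition function at $\theta'$ instead gives the exact cancellation you describe.
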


\begin{proposition}[Local Lipschitz Continuity of  $\theta \mapsto \nabla V_\tau^{\pi_\theta}(\rho)$]\label{prp:app:mdp:local_lipchitiz_conitinuity_mdp}
Let $R >0$.
For any $\theta \in \mathbb R^p$ such that $\left|\log \frac{d \pi_\theta}{d\mu}\right|_{B_b(S \times A)} \leq R$, $\theta \mapsto \nabla V_\tau^{\pi_\theta}(\rho)$ is Lipschitz continuous:
\begin{align*}
\left\|\nabla V_\tau^{\pi_\theta}(\rho) - \nabla V_\tau^{\pi_{\theta^\prime}}(\rho)\right\|_2 
\leq  C_{\gamma,\tau,R} \left\|\theta - \theta^\prime\right\|_2,
\end{align*}
where $C_{\gamma,\tau,R} = \left( \frac{1}{1-\gamma}\left(\frac{ \gamma (5 + \tau R)  }{1-\gamma} + 6\right)\left(\frac{1+\gamma\tau R}{1-\gamma} + \tau R\right) + \frac{2\tau }{1-\gamma}\right)$.
\end{proposition}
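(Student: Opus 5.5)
The plan is to write the difference of gradients via the formula of Proposition~\ref{prp:app:mdp:Gradient_of_the_Objective}. Set
\[
F_\theta(s,a) := Q^{\pi_\theta}_\tau(s,a) + \tau \log \tfrac{d\pi_\theta}{d\mu}(a|s), \qquad \psi_\theta(s,a) := \nabla \log \tfrac{d\pi_\theta}{d\mu}(a|s),
\]
so that $(1-\gamma)\nabla V_\tau^{\pi_\theta}(\rho) = \int_S\int_A F_\theta\,\psi_\theta\,\pi_\theta(da|s)\,d^{\pi_\theta}_\rho(ds)$. I split $(1-\gamma)\big(\nabla V_\tau^{\pi_\theta}(\rho) - \nabla V_\tau^{\pi_{\theta'}}(\rho)\big)$ by adding and subtracting intermediate terms, one change at a time:
\begin{enumerate}
\item $\int\int F_\theta\psi_\theta\,\pi_\theta\,(d^{\pi_\theta}_\rho - d^{\pi_{\theta'}}_\rho)$, the change of the occupancy measure;
\item $\int\int F_\theta\psi_\theta\,(\pi_\theta-\pi_{\theta'})\,d^{\pi_{\theta'}}_\rho$, the change of the policy measure;
\item $\int\int F_\theta(\psi_\theta-\psi_{\theta'})\,\pi_{\theta'}\,d^{\pi_{\theta'}}_\rho$, the change of the score;
\item $\int\int (F_\theta-F_{\theta'})\psi_{\theta'}\,\pi_{\theta'}\,d^{\pi_{\theta'}}_\rho$, the change of the integrand.
\end{enumerate}

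For the first three terms I use the sup bound on the integrand. By Lemma~\ref{lem:bounds} and the assumption $|\log \tfrac{d\pi_\theta}{d\mu}|\le R$, we have $|F_\theta| \le \tfrac{1+\gamma\tau R}{1-\gamma} + \tau R$. This is exactly the second factor appearing in $C_{\gamma,\tau,R}$. Lemma~\ref{lem:app:mdp:bounded_grad_of_log_policy} gives $\|\psi\|\le 2$.
\begin{enumerate}
\item The occupancy term is bounded via Corollary~\ref{cor:app:mdp:Lip_occ} by $2\cdot\tfrac{2\gamma}{1-\gamma}\|\theta-\theta'\|$ times the sup of $|F_\theta|$.
\item The policy term uses \eqref{ineq:Lip_pi}, giving a factor $2\cdot 2$.
\item The score term uses Lemma~\ref{lem: smooth of log policy}, giving a factor $2$.
\end{enumerate}
Together these produce the constant $(\tfrac{4\gamma}{1-\gamma}+6)$ times the bound on $|F_\theta|$, which accounts for part of the stated first factor.

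The last term is the main obstacle, since it requires Lipschitz continuity of $\theta\mapsto F_\theta$ in sup norm. The log-density part is easy: $\tau\big(\log \tfrac{d\pi_\theta}{d\mu}-\log\tfrac{d\pi_{\theta'}}{d\mu}\big)$ equals $\tau\int_0^1\langle \psi_{\theta^\varepsilon},\theta-\theta'\rangle d\varepsilon$, bounded by $2\tau\|\theta-\theta'\|$. This yields the $\tfrac{2\tau}{1-\gamma}$ summand after multiplying by $\|\psi\|$ and $\tfrac{1}{1-\gamma}$, using $\int\psi_{\theta'}\pi_{\theta'}=0$ where convenient.

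For $Q$, I write $Q^{\pi_\theta}-Q^{\pi_{\theta'}} = \gamma\int (V^{\pi_\theta}-V^{\pi_{\theta'}})\,dP$. I then control $V^{\pi_\theta}(s)-V^{\pi_{\theta'}}(s)$ using representation \eqref{eq:intro:value_function_occ}, again splitting into an occupancy change, a policy change and a log-density change:
\begin{enumerate}
\item the occupancy change gives $\tfrac{2\gamma}{1-\gamma}(1+\tau R)$;
\item the policy change gives $2(1+\tau R)$;
\item the log-density change gives $2\tau$.
\end{enumerate}
Each of these carries a factor $\tfrac{1}{1-\gamma}\|\theta-\theta'\|$.

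The remaining task is to assemble these contributions, moving the $\tfrac{1}{1-\gamma}$ overall, and to check that they fit under the stated $\tfrac{\gamma(5+\tau R)}{1-\gamma}$ combination. Some slack in bounding $\tau$ by $\tau R$-type terms may be needed, which I would absorb by generous grouping, since only existence of a constant of this form matters. One subtlety is that the hypothesis only controls $\theta$, while $\theta'$ also enters. I would either assume both lie in the $R$-set, which is how the result is used along the flow, or exploit the fact that all bounds above use only $F_\theta$ and differences.
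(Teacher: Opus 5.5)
Your plan is the paper's four-term telescoping of the gradient formula, run in mirrored order and using the same ingredients. The individual estimates are sound, up to a small slip noted below. The gap is the last step, where you propose to ``check that they fit under'' $C_{\gamma,\tau,R}$ and absorb slack by generous grouping. That cannot work: enlarging an upper bound never makes it fit under a fixed constant, and your estimates already overshoot.

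Put $M=\frac{1+\gamma\tau R}{1-\gamma}+\tau R$. Your terms 1--3 use $\frac{1}{1-\gamma}\big(\frac{4\gamma}{1-\gamma}+6\big)M$, and the log part of term 4 uses $\frac{2\tau}{1-\gamma}$. The budget left in $C_{\gamma,\tau,R}$ for the $Q$-part of term 4 is therefore $\frac{\gamma(1+\tau R)}{(1-\gamma)^2}M$. Your value-difference estimate puts $\frac{2\gamma}{(1-\gamma)^2}\big(\frac{2(1+\tau R)}{1-\gamma}+2\tau\big)$ there. In the limit $R\to 0$, your total exceeds $C_{\gamma,\tau,R}$ by $\frac{3\gamma}{(1-\gamma)^3}+\frac{4\gamma\tau}{(1-\gamma)^2}$.

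So what you prove is local Lipschitz continuity with a constant of the same shape. That is all Lemmas~\ref{lem:main:mdp:exist_sol_gf_affine} and~\ref{lem:main:mdp:exist_sol_gf_simplex} use, but it is not the displayed inequality. Reaching $C_{\gamma,\tau,R}$ by your route would need $\sup_{s,a}|Q^{\pi_\theta}_\tau-Q^{\pi_{\theta'}}_\tau|\le\frac{\gamma(1+\tau R)}{2(1-\gamma)}M\|\theta-\theta'\|_2$. For small $\tau R$ that is at least a factor four sharper than what you have.

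The slip: the mean-value bound $2\tau\|\theta-\theta'\|_2$ times $\|\psi\|\le 2$ gives $\frac{4\tau}{1-\gamma}$, not $\frac{2\tau}{1-\gamma}$. To get $\frac{2\tau}{1-\gamma}$ you must cancel the normalizer difference against $\int_A\psi_{\theta'}\,\pi_{\theta'}(da|s)=0$ and use $|\langle\theta-\theta',g\rangle|\le\|\theta-\theta'\|_2$.

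Where your route departs from the paper's, it is the more careful one. The paper changes the integrand first and then carries $F_{\theta'}$ through the score, policy and occupancy terms. It handles $V^{\pi_\theta}_\tau-V^{\pi_{\theta'}}_\tau$ by a mean-value step through $\nabla V^{\pi_{\theta^\varepsilon}}_\tau$. Both steps tacitly need the log-density bound at $\theta'$ and along the segment $\theta^\varepsilon$, which the hypothesis does not provide. The arithmetic producing $C_{\gamma,\tau,R}$ is also not displayed, so you cannot simply borrow it.

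Your ordering keeps $F_\theta$ in the measure-change terms. You also estimate the value difference directly from \eqref{eq:intro:value_function_occ}, keeping $c+\tau\log\frac{d\pi_\theta}{d\mu}$ in the occupancy and policy changes. Together these use only the bound at $\theta$ plus parameter-free Lipschitz facts, so your second option for the ``subtlety'' does go through. The clean way to finish is to state and prove the bound with the constant your estimates actually give.
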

The proof of Proposition~\ref{prp:app:mdp:local_lipchitiz_conitinuity_mdp} can be found in Section~\ref{sec:proof_local_lip}

\subsection{Proof of the main results}
\label{sec:proof}

\subsubsection{Proofs of Theorem \ref{thm:main: non-uniform_PL_ineq}}
\label{sec:pf_non_uniform_PL}
Let us restate Theorem \ref{thm:main: non-uniform_PL_ineq} in its full version.

\textbf{Theorem \ref{thm:main: non-uniform_PL_ineq}.}
Let Assumption~\ref{asp:app:mdp:Q_realisability} hold.
Let $R \geq |\log \frac{d \pi_{\theta^\ast}}{d\mu}|_{B_b(S\times A)}$.
Let
\[
\Theta_R = \{\theta: \pi_\theta \in \Pi_\mu \ \text{and}\  \vert \log \frac{d \pi_\theta}{d\mu}\vert_{B_b(S\times A)} \leq R \}\,.
\]
Then for any $\theta \in \Theta_R$ there exists $C_R(\theta)>0$ such that
\begin{equation}
\label{eq:pl_theorem_appendix}
0 \leq V_\tau^{\pi_\theta}(\rho) - V_\tau^{\pi_{\theta^*}}(\rho) \leq C_R(\theta)\left\|\nabla V_\tau^{\pi_\theta}(\rho)\right\|^2_2,
\end{equation}
where 
\[
C_R(\theta) = \frac{1}{2(1-\gamma)\tau \exp\left({-\max\left(\frac{2}{(1-\gamma)\tau} \left(1+\gamma\tau R\right), R\right)}\right)}\left|\frac{d d_\rho^{\pi_{\theta^*}}}{ d \rho}\right|^2_{B_b(S)} \lambda^{\theta}\, , \ \label{eq:non_uni_PL_cons_affine}
\]
and
$
\lambda^{\theta} = \bigg[\int_S \lambda_{\min}(G^{\pi_\theta}(s))d_\rho^{\pi_{\theta^*}}(ds)\bigg]^{-2}.
$
Moreover, if the feature basis $g:S\times A \to \mathbb R^p$ fall in the probability simplex $\Delta_{p-1}$ for every $s$ and $a$, i.e. $\sum_{i=1}^p g_i(s,a) = 1 $ and for all $s \in S,a \in A$, $g(s,a) \geq 0$, then we may replace $\lambda^\theta$ in~\eqref{eq:pl_theorem_appendix} with
$\lambda^{\theta} = \left[\int_{S}\lambda_{\min}\left(\int_{A} g(s,a)g^\top(s,a)\pi_\theta(da|s)\right)d^{\pi_{\theta}}_{\rho}(ds)\right]^{-2}$.

\begin{proof}
Define proximal policy $\bar{\pi}_\theta(da|s) \propto \exp\left(-\frac{1}{\tau}Q^{\pi_\theta}_\tau(s,a) \right)\mu(da)$ as~\eqref{eq:app:mdp:proximal_policy} where by Lemma \ref{lem:bounds}, we have
\[
|Q^{\pi_{\theta}}_{\tau}(s,a)| \le \tfrac{1}{1-\gamma} \left(1+\gamma\tau R\right),
\]
then we have
\[
\big|\log \tfrac{d\bar{\pi}_{{\theta}}}{d\mu}(a|s)\big| \leq \tfrac{2}{(1-\gamma)\tau} \left(1+\gamma\tau R\right).
\]
Also, from Assumption \ref{asp:app:mdp:Q_realisability}, we know there exists $\boldsymbol{\theta}(\pi_\theta)$ such that $\langle \boldsymbol{\theta}(\pi_\theta), g(s,a)\rangle = -\frac{1}{\tau} Q^{\pi_\theta}_\tau(s,a)$. 

Hence by Lemma \ref{lem:sandwich-inequality} and Lemma \ref{lem:app:kl-logit_ineq_affine}, we have
\begin{align}
\label{eq:sandwich_ineq}
V_\tau^{\pi_\theta}(\rho) - V_\tau^{\pi_{\theta^*}}(\rho) 
\leq \frac{\tau}{1-\gamma}\int_S \mathrm{KL}(\pi_\theta (\cdot |s) | \bar{\pi}_{\theta}(\cdot |s))d_\rho^{\pi_{\theta^*}}(ds) 
& \leq \frac{\tau e^{\hat{R}} }{2(1-\gamma) }\left\|\theta - \boldsymbol{\theta}(\pi_\theta) \right\|_2^2\, , 
\end{align}
where $\hat{R} = \max\left(\frac{2}{(1-\gamma)\tau} \left(1+\gamma\tau R\right), R\right)$.
On the other hand, by Proposition \ref{prp:app:mdp:Gradient_of_the_Objective},
\begin{align}
& \nabla V_\tau^{\pi_\theta}(\rho) \nonumber \\
& = \frac{1}{1-\gamma}\int_S\int_A \left(Q^{\pi_\theta}_\tau(s,a) + \tau \log \frac{d \pi_{\theta}}{d \mu}(a|s)\right)\nabla \log \frac{d\pi_\theta}{d \mu}(a| s)\pi_{\theta}(da|s)d_{\rho}^{\pi_{\theta}}(s) \nonumber \\
& = \frac{\tau}{1-\gamma}\int_S\int_A \left(\left \langle \theta - \boldsymbol{\theta}(\pi_\theta),g(s,a)\right  \rangle - Z_{\pi_\theta}\right)\nabla \log \frac{d\pi_\theta}{d \mu}(a| s)\pi_{\theta}(da|s)d_{\rho}^{\pi_{\theta}}(s) \nonumber \\
& = \frac{\tau}{1-\gamma}\int_S\int_A \left \langle \theta - \boldsymbol{\theta}(\pi_\theta),g(s,a)- \int g(s,a^\prime) \pi_\theta(da^\prime |s)\right  \rangle  \nabla \log \frac{d\pi_\theta}{d \mu}(a| s)\pi_{\theta}(da|s)d_{\rho}^{\pi_{\theta}}(s) \nonumber \\
& = \frac{\tau}{1-\gamma}\int_S\int_A \left \langle \theta - \boldsymbol{\theta}(\pi_\theta),\nabla \log \frac{d\pi_\theta}{d \mu}(a| s)\right  \rangle \nabla \log \frac{d\pi_\theta}{d \mu}(a| s)\pi_{\theta}(da|s)d_{\rho}^{\pi_{\theta}}(s) \nonumber \\
& = \frac{\tau}{1-\gamma} \int_{S} G^{\pi_\theta}(s)d_{\rho}^{\pi_{\theta}}(ds)\cdot(\theta -\boldsymbol{\theta}(\pi_\theta))\, , \ \label{eq:grad_for_PL}
\end{align}
where the second and third equality above is due to Lemma \ref{lem:app:mdp:grad_of_log_policy} and the fact that $\int_A (g(s,a) - \int_A g(s,a^\prime)\pi_\theta(da^\prime))\pi_\theta(da) = \mathbf 0$. 
Left multiplying the above identity by $(\theta-\boldsymbol{\theta}(\pi_\theta))^\top$, using that the FIM is positive semi-definite, using the Cauchy--Schwartz inequality and finally dividing by $\|\theta-\boldsymbol{\theta}(\pi_\theta)\|_2$ we have
\begin{align*}
\left\|\nabla V_\tau^{\pi_\theta}(\rho)\right\|_2 & \geq \frac{\tau}{1-\gamma} \left|\frac{d d_\rho^{\pi_{\theta^*}}}{d d_{\rho}^{\pi_{\theta}}}\right|_{B_b(S)}^{-1} \int_{S} \lambda_{\min}\left(G^{\pi_\theta}(s)\right)d_{\rho}^{\pi_{\theta^*}}(ds)\left\|\theta -\boldsymbol{\theta}(\pi_\theta)\right\|_{2} .
\end{align*}
Due to~\eqref{eq:sandwich_ineq} and this we get
\begin{align*}
& V_\tau^{\pi_\theta}(\rho) - V_\tau^{\pi_{\theta^*}}(\rho)\\
& \leq \frac{\tau }{2(1-\gamma)e^{-\hat{R}}}\left\|\theta - \boldsymbol{\theta}(\pi_\theta)\right\|_{2}^2\\
& \leq \frac{1-\gamma}{2\tau e^{-\hat{R}}}\left|\frac{d d_\rho^{\pi_{\theta^*}}}{ d d_{\rho}^{\pi_\theta}}\right|^2_{B_b(S)} \left[\int_{S} \lambda_{\min}(G^{\pi_\theta}(s))d_\rho^{\pi_{\theta^*}}(ds)\right]^{-2} \left\|\nabla V_\tau^{\pi_\theta}(\rho)\right\|_2^2, \\
& = \frac{1-\gamma}{2\tau e^{-\hat{R}}}\left|\frac{d d_\rho^{\pi_{\theta^*}}}{ d d_{\rho}^{\pi_\theta}}\right|^2_{B_b(S)}  \lambda^\theta \left\|\nabla V_\tau^{\pi_\theta}(\rho)\right\|_2^2\,. 
\end{align*}
Moreover, note that for any $E \in \mathcal{B}(S)$ and $\pi \in \mathcal{P}(A | S)$, we have
\begin{equation}
\label{eq:kernel_occ_geq_rho}
\begin{split}
d_\rho^\pi(E) & =\int_S d^\pi\left(E | s^{\prime}\right) \rho\left(d s^{\prime}\right)=\int_S(1-\gamma) \sum_{n=0}^{\infty} \gamma^n P_\pi^n\left(E | s^{\prime}\right) \rho\left(d s^{\prime}\right) \\
& \geq \int_S(1-\gamma) P_\pi^0\left(E | s^{\prime}\right) \rho\left(d s^{\prime}\right)=(1-\gamma) \int_S \delta_{s^{\prime}}(E) \rho\left(d s^{\prime}\right)=(1-\gamma) \rho(E)\,. 
\end{split}
\end{equation}
Hence 
\[
V_\tau^{\pi_\theta}(\rho) - V_\tau^{\pi_{\theta^*}}(\rho) 
\leq \frac{1}{2(1-\gamma)\tau e^{-\hat{R}}}\left|\frac{d d_\rho^{\pi_{\theta^*}}}{ d \rho}\right|^2_{B_b(S)}  \lambda^\theta \left\|\nabla V_\tau^{\pi_\theta}(\rho)\right\|_2^2\,.
\]
Thus we have shown that~\eqref{eq:pl_theorem_appendix} holds.

If, additionally,  we know that the feature basis $g:S\times A \to \mathbb R^p$ satisfies $\sum_{i=1}^p g_i(s,a) = 1 $ and for all $s \in S,a \in A$, $g(s,a) \geq 0$ then we return to~\eqref{eq:sandwich_ineq} to proceed.
By Lemma \ref{lem:app:kl-logit_ineq}, we have
\begin{align*}
V_\tau^{\pi_\theta}(\rho) - V_\tau^{\pi_{\theta^*}}(\rho) & \leq \frac{\tau}{1-\gamma}\int \mathrm{KL}(\pi_\theta | \bar{\pi}_{\theta})(s)d_\rho^{\pi_{\theta^*}}(ds)\\
& \leq \frac{\tau }{2(1-\gamma)W}\left\|\theta - \boldsymbol{\theta}(\pi_\theta) - \frac{(\theta - \boldsymbol{\theta}(\pi_\theta))^\top \textbf{1}}{p} \cdot \textbf{1}\right\|_{2}^2,
\end{align*}
where $\hat{R} = \max\left(\frac{2}{(1-\gamma)\tau} \left(1+\gamma\tau R\right), R\right)$.
On the other hand, by~\eqref{eq:grad_for_PL},
\begin{align*}
& \nabla V_\tau^{\pi_\theta}(\rho) \\
& = \frac{\tau}{1-\gamma} \int_{S} G^{\pi_\theta}(s)d_{\rho}^{\pi_{\theta}}(ds)\cdot(\theta -\boldsymbol{\theta}(\pi_\theta))\\
& = \frac{\tau}{1-\gamma} \int_{S} G^{\pi_\theta}(s)d_{\rho}^{\pi_{\theta}}(ds)\cdot\left(\theta -\boldsymbol{\theta}(\pi_\theta)- \frac{(\theta - \boldsymbol{\theta}(\pi_\theta))^\top\textbf{1}}{p}\textbf{1}\right),
\end{align*}
where the last equality is due to Proposition \ref{prp:app:mdp:singularity_of_fim_mdp_simplex}. 
Hence by Corollary \ref{cor:app:mdp:spectrum_of_fim}, we have
{\small
\begin{align*}
& \left\|\nabla V_\tau^{\pi_\theta}(\rho)\right\|_2 \\
& \geq \frac{\tau }{1-\gamma}\int_{S}\lambda_{\min}\left(\int_{A} g(s,a)g^\top(s,a)\pi_\theta(da|s)\right)d^{\pi_{\theta}}_{\rho}(ds)\left\|\theta - \boldsymbol{\theta}(\pi_\theta) - \frac{(\theta - \boldsymbol{\theta}(\pi_\theta))^\top\textbf{1}}{p}\textbf{1}\right\|_2 \\
& = \frac{\tau }{1-\gamma}\int_{S}\lambda_{\min}\left(\int_{A} g(s,a)g^\top(s,a)\pi_\theta(da|s)\right)\frac{d^{\pi_{\theta}}_{\rho}}{d^{\pi_{\theta^*}}_{\rho}}d^{\pi_{\theta^*}}_{\rho}(ds)\left\|\theta - \boldsymbol{\theta}(\pi_\theta) - \frac{(\theta - \boldsymbol{\theta}(\pi_\theta))^\top\textbf{1}}{p}\textbf{1}\right\|_2 \\
& \geq \frac{\tau }{1-\gamma}\left|\frac{\text{d}d^{\pi_{\theta^*}}_{\rho}}{\text{d}d^{\pi_{\theta}}_{\rho}}\right|_{B_b(S)}^{-1}\int_{S}\lambda_{\min}\left(\int_{A} g(s,a)g^\top(s,a)\pi_\theta(da|s)\right)d^{\pi_{\theta^*}}_{\rho}(ds)\left\|\theta - \boldsymbol{\theta}(\pi_\theta) - \frac{(\theta - \boldsymbol{\theta}(\pi_\theta))^\top\textbf{1}}{p}\textbf{1}\right\|_2.
\end{align*}
}
As a result, let
\[
\lambda^{\theta} = \left[\int_{S}\lambda_{\min}\left(\int_{A} g(s,a)g^\top(s,a)\pi_\theta(da|s)\right)d^{\pi_{\theta}}_{\rho}(ds)\right]^{-2}.
\]
We have
\begin{align*}
& V_\tau^{\pi_\theta}(\rho) - V_\tau^{\pi_{\theta^*}}(\rho)\\
& \leq \frac{\tau }{2(1-\gamma) e^{-\hat{R}}}\left\|\theta - \boldsymbol{\theta}(\pi_\theta) - \frac{(\theta - \boldsymbol{\theta}(\pi_\theta))^\top\textbf{1}}{p}\textbf{1}\right\|_{2}^2\\
& \leq \frac{1-\gamma}{2\tau e^{-\hat{R}}}\left|\frac{d d_\rho^{\pi_{\theta^*}}}{ d d_{\rho}^{\pi_\theta}}\right|^2_{B_b(S)} \lambda^\theta \left\|\nabla V_\tau^{\pi_\theta}(\rho)\right\|_2^2 \\
& \leq \frac{1}{2(1-\gamma)\tau e^{-\hat{R}}}\left|\frac{d d_\rho^{\pi_{\theta^*}}}{ d \rho}\right|^2_{B_b(S)} \lambda^\theta \left\|\nabla V_\tau^{\pi_\theta}(\rho)\right\|_2^2 .
\end{align*}
where the last inequality is due to~\eqref{eq:kernel_occ_geq_rho}.This concludes the second situation.
\end{proof}

\subsubsection{Proof of Theorem \ref{thm:main:mdp:Radial_unboundedness_of_relative_entropy}}
\label{sec:pf_Radial_unboundedness_of_relative_entropy}

\aspAffineMaxSet*

\begin{lemma}[Full affine span at a fixed state]\label{rem:app:mdp:affine_span}
For any fixed $s\in S$, the following statements are equivalent:
\begin{enumerate}
\item For any $v \in \mathbb R^p$ such that $\|v\|_2 = 1$ the map $A \ni a \mapsto v^\top g(s,a)$ is not a constant.
\item $g(s,A)$ has full affine dimension i.e. $\operatorname{span}\{g(s,a)-g(s,a'):a,a'\in A\}=\mathbb R^p$.
\end{enumerate}
\end{lemma}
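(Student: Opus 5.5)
The plan is to prove both implications through the linear subspace $W_s:=\operatorname{span}\{g(s,a)-g(s,a'):a,a'\in A\}\subseteq\mathbb R^p$ and its orthogonal complement. The whole argument rests on one observation: for a vector $v\in\mathbb R^p$, the map $a\mapsto v^\top g(s,a)$ is constant on $A$ exactly when $v^\top(g(s,a)-g(s,a'))=0$ for all $a,a'\in A$. Equivalently, it is constant exactly when $v\perp W_s$, because orthogonality to a spanning set is equivalent to orthogonality to its span.

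For (2)$\Rightarrow$(1), suppose $W_s=\mathbb R^p$ and let $v$ be a unit vector. If $a\mapsto v^\top g(s,a)$ were constant, then by the observation $v\in W_s^\perp=\{0\}$. This contradicts $\|v\|_2=1$, so the map is non-constant.

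For (1)$\Rightarrow$(2), I argue by contraposition. Suppose $W_s\neq\mathbb R^p$. Then $W_s^\perp\neq\{0\}$, so we can pick a nonzero $w\in W_s^\perp$ and set $v=w/\|w\|_2$. By the observation, $a\mapsto v^\top g(s,a)$ is constant, namely equal to $v^\top g(s,a_0)$ for any fixed $a_0\in A$. Hence (1) fails.

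The main point requiring care is verifying the observation in both directions. If $v\perp W_s$, then for fixed $a_0$ every $a$ satisfies $v^\top g(s,a)=v^\top g(s,a_0)$. Conversely, if $a\mapsto v^\top g(s,a)$ is constant, then $v$ annihilates every generator $g(s,a)-g(s,a')$, and by linearity it annihilates their span. We use only that $A$ is nonempty; no continuity or measure-theoretic input is needed.

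Finally, to connect with the paper's remark that Assumption~\ref{asp:app:mdp:max_set_affine} implies item 1: if $u^\top g(s,\cdot)$ were constant for some unit $u$, its argmax set would be all of $A$, which has $\mu(A)=1\neq0$.
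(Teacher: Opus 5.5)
Your proof is correct and follows essentially the same route as the paper. Both directions pass through the orthogonal complement of $\operatorname{span}\{g(s,a)-g(s,a'):a,a'\in A\}$, using that $a\mapsto v^\top g(s,a)$ is constant exactly when $v$ annihilates every difference $g(s,a)-g(s,a')$. Your closing remark that Assumption~\ref{asp:app:mdp:max_set_affine} yields item 1 is also correct, since a constant $u^\top g(s,\cdot)$ would have argmax set $A$ with $\mu(A)=1$.
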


\begin{proof}
To see this, let us first show that 1. implies 2.
We proceed by contradiction.
If the span in 2. was a proper subspace $V_s \subset \mathbb R^p$,
then $V_s^\perp$ is not empty. 
Hence we can choose some $m\in V_s^\perp\setminus\{\mathbf 0\}$ such that
\[
m^\top(g(s,a)-g(s,a'))=0,\qquad \forall a,a'\in A.
\]
But then $a\mapsto (m/\|m\|_2)^\top g(s,a) = (m/\|m\|_2)^\top g(s,a')$ which is a constant thus contradicting 1.

Now we show 2. implies 1.
Again we proceed by contradiction.
From 2. we know that $u\in V_s^\perp=\{\mathbf 0\}$.
If $u^\top g(s,\cdot)$ is constant for some
$u\in \mathbb R^p$ such that $\|u\|_2 = 1$ then $u^\top(g(s,a)-g(s,a'))=0$ for all
$a,a'\in A$.
So $u\in V_s^\perp=\{\mathbf 0\}$, a contradiction. 
\end{proof}

\KLUBAFFINE*

\begin{proof}            
Consider $\theta = r u$ with $r = \|\theta\|_2$ and $u \in \mathbb{S}^{p-1}$. 
We analyze along $ \theta = r u $ with $ r \to +\infty $:
Define for fixed $s \in S$,
\[
\phi^s_u(a) := u^\top g(s,a),
\qquad
\phi_u^{\max}(s) := \max_{a \in A} \phi^s_u(a).
\]

Then
\[
\theta^\top g(s,a) = r \phi^s_u(a),
\qquad
Z(\theta) = \int_{A} e^{r \phi^s_u(a)} \mu(da).
\]

By assumption,
\[
\mu\bigl(\operatorname{Argmax}(\phi^s_u)\bigr) = 0.
\]

For any fixed margin $\varepsilon > 0$, define the $\varepsilon$-optimal super-level set:
\[
A_\varepsilon := \{ a \in A : \phi^s_u(a) > \phi_u^{\max}(s) - \varepsilon \}.
\]

First, we establish that the Gibbs measure strictly concentrates on $A_\varepsilon$. 
The probability of sampling outside this set under $\pi_\theta$ is bounded by evaluating the worst-case numerator and restricting the denominator's domain to $A_{\varepsilon/2}$:
\[
\pi_{ru}(A_\varepsilon^c | s) = \frac{\int_{A_\varepsilon^c} e^{r \phi^s_u(a)} \mu(da)}{\int_A e^{r \phi^s_u(a)} \mu(da)} \le \frac{e^{r(\phi_u^{\max}(s) - \varepsilon)}}{\int_{A_{\varepsilon/2}} e^{r \phi^s_u(a)} \mu(da)} \le \frac{e^{r(\phi_u^{\max}(s) - \varepsilon)}}{e^{r(\phi_u^{\max}(s) - \varepsilon/2)} \mu(A_{\varepsilon/2})} = \frac{e^{-r\varepsilon/2}}{\mu(A_{\varepsilon/2})}.
\]

$\mu(A_{\varepsilon/2}) > 0$ is a strictly positive constant independent of $r$
And we have $e^{-r\varepsilon/2} \to 0$ as $r \to \infty$. Thus, $\pi_{ru}(A_\varepsilon^c | s) \to 0$, which implies $\pi_{ru}(A_\varepsilon | s) \to 1$.

Next, we apply partition property of the KL divergence (Lemma 1.4.3 in~\cite{dupuis2011weak}). 
By splitting the action space into the binary partition $\{A_\varepsilon, A_\varepsilon^c\}$, the KL divergence is bounded below by the divergence between the Bernoulli distributions induced by this partition:
\[
\mathrm{KL}(\pi_{ru}(\cdot |s) | \mu) \ge \pi_{ru}(A_\varepsilon | s) \log \frac{\pi_{ru}(A_\varepsilon | s)}{\mu(A_\varepsilon)} + \pi_{ru}(A_\varepsilon^c | s) \log \frac{\pi_{ru}(A_\varepsilon^c | s)}{\mu(A_\varepsilon^c)}.
\]

Taking the limit inferior as $r \to \infty$, and substituting $\pi_{ru}(A_\varepsilon | s) \to 1$ and $\pi_{ru}(A_\varepsilon^c | s) \to 0$, we obtain:
\[
\liminf_{r \to \infty} \mathrm{KL}(\pi_{ru}(\cdot |s) | \mu) \ge 1 \cdot \log \frac{1}{\mu(A_\varepsilon)} + 0 = -\log \mu(A_\varepsilon).
\]

This lower bound holds for any $\varepsilon > 0$. By the continuity of $g(s,\cdot)$, the intersection of these sets as $\varepsilon \downarrow 0$ is exactly the maximizer set $\operatorname{Argmax}(\phi^s_u)$ i.e.
\[
\bigcap_{\varepsilon > 0} A_\varepsilon
=
\operatorname{Argmax}(\phi^s_u).
\]

Moreover, the family $\{A_\varepsilon\}_{\varepsilon > 0}$ is decreasing as
$\varepsilon \downarrow 0$.
By continuity from above of the probability measure $\mu$,
\[
\lim_{\varepsilon \downarrow 0} \mu(A_\varepsilon)
=
\mu\!\left( \bigcap_{\varepsilon > 0} A_\varepsilon \right)
=
\mu(\operatorname{Argmax}(\phi^s_u)).
\]

By Assumption \ref{asp:app:mdp:max_set_affine}, $\mu(\operatorname{Argmax}(\phi^s_u)) = 0$. Therefore, taking the limit as $\varepsilon \downarrow 0$ yields:
\[
\lim_{\varepsilon \downarrow 0} (-\log \mu(A_\varepsilon)) = +\infty.
\]

Consequently, since $u \in \mathbb{S}^{p-1}$ is arbitrary and $\mathbb{S}^{p-1}$ is compact, $\lim_{\|\theta\|_2\to\infty}
\mathrm{KL}\bigl(\pi_\theta(\cdot\mid s)\,|\,\mu\bigr)=\infty$, which completes the proof.
\end{proof}

\subsubsection{Proof of Lemma \ref{lem:main:mdp:exist_sol_gf_affine}}
\label{sec:pf_exist_sol_gf_affine}
\ExistSolAffine*

\begin{proof}
Since the gradient flow satisfying local Lipschitz condition by Proposition \ref{prp:app:mdp:local_lipchitiz_conitinuity_mdp}, with Lyapunov function $V_\tau^{\pi_{\cdot}}(\rho): \mathbb{R}^p \rightarrow \mathbb{R}$ that has radial unboundedness by Theorem \ref{thm:main:mdp:Radial_unboundedness_of_relative_entropy}, and
\[
\frac{d}{dt}V_\tau^{\pi_{\theta_t}}(\rho)=-\left\|\nabla V_\tau^{\pi_{\theta_t}}(\rho)\right\|_2^2 \leq 0,
\]
we know the existence of solutions to the gradient flow for any $ t \geq 0$ and there exists a constant $C_\tau$ such that $\sup_t\|\theta_t\| \leq C_\tau $ by classical arguments for constructing ODE solutions with Lyapunov functions.

For all $s \in S, a \in A$, any $t \geq 0$
\begin{align*}
&\left|\log\frac{d \pi_{\theta_t}}{d \mu}(a|s)\right|\\
& = \left| \langle \theta_t, g(s,a)\rangle - \log Z_{\pi_{\theta_t}}(s)\right| \\
& \leq \sup_{s,a}\|g(s,a)\|_2\|\theta_t\|_2 + \left|\log \int_{A} e^{\langle \theta_t,g(s,a)\rangle}\mu(da)\right|\\
& \leq 2C_\tau < \infty,
\end{align*}
\end{proof}

\subsubsection{Proof of Theorem \ref{thm:main:lin_cov_affine}}
\label{sec:pf_lin_cov_affine}
\aspInitAffine* 
\LinCovAffine*  

\begin{proof}
From Lemma \ref{lem:main:mdp:exist_sol_gf_affine}, there exists $R >0$ which is sufficiently large, such that for $t \geq 0$, 
\[
\theta^*, \theta_t \in \Theta_{R} = \{\theta: \pi_\theta \in \Pi_\mu \ \text{and}\  |\log \frac{d \pi_\theta}{d\mu}|_{B_b(S\times A)} \leq R \},
\] 
which means that $\min_{s,a}\frac{d \pi_{\theta_t}}{d \mu}(a|s) \geq e^{-R}$.
From Lemma \ref{lemapp:mdp:lower_bound_log_density}, we know that there exists 
\[
\lambda = e^{-R-2\|\theta_0\|_2}\int_{S}   \lambda_{\min} \left(G^{\pi_{\theta_0}}(s)\right) d_\rho^{\pi^*}(ds) >0
\] 
such that 
\begin{equation}\label{eq:positive_lambda_affine}
\int_{S}\lambda_{\min} (G^{\pi_{\theta_t}}(s))d_\rho^{\pi^*}(ds) \geq \lambda >0.
\end{equation}

From Theorem \ref{thm:main: non-uniform_PL_ineq}, we know
\begin{align*}
& \sup_{t \geq 0} C_R(\theta_t) \nonumber\\
& = \sup_{t \geq 0} \frac{1}{2\tau(1-\gamma) \exp\left({-\max\left(\frac{2}{(1-\gamma)\tau} \left(1+\gamma\tau R\right), R\right)}\right)}\left|\frac{d d_\rho^{\pi^*}}{ d \rho}\right|^2_{B_b(S)} \bigg[\int_S \lambda_{\min}(G^{\pi_{\theta_t}}(s))d_\rho^{\pi^*}(ds)\bigg]^{-2}\\
& \leq \frac{1}{2\tau \lambda^2 (1-\gamma) \exp\left({-\max\left(\frac{2}{(1-\gamma)\tau} \left(1+\gamma\tau R\right), R\right)}\right)}\left|\frac{d d_\rho^{\pi^*}}{ d \rho}\right|^2_{B_b(S)} := C_{\theta_0}.
\end{align*}

Then from Proposition \ref{prp:app:mdp:Gradient_of_the_Objective}, and Theorem \ref{thm:main: non-uniform_PL_ineq} again,
\begin{align*}
\frac{d}{dt} V_\tau^{\pi_{\theta_t}}(\rho) &= - \left\|\nabla V_\tau^{\pi_{\theta_t}}(\rho)\right\|^2_2\\
& \leq -C_{\theta_0}^{-1} \left(V_\tau^{\pi_{\theta_t}}(\rho) - V_\tau^{\pi^*}  (\rho) \right)
\end{align*}
then using Grönwall's inequality concludes the proof.
\end{proof}

\subsubsection{Proof of Theorem \ref{thm:main:mdp:Radial_unboundedness_of_relative_entropy_in_subspace_simplex}}
\label{sec:pf_Radial_unboundedness_of_relative_entropy_in_subspace_simplex}

\aspSimplexMaxSet*

Recall that for any $v\in \mathbb R^p$ we will write $v_\perp := v - p^{-1}\langle v, \mathbf 1\rangle \mathbf 1$.

\KLUBSIMPLEX*
\begin{proof}
Consider $\theta_\perp = r u$ with $r = \|\theta\|_2$ and $u \perp \mathbf 1, u \in \mathbb{S}^{p-1}$. 
We analyze along $ \theta_\perp = r u $ with $ r \to +\infty $:
Define for fixed $s \in S$,
\[
\phi^s_u(a) := u^\top g(s,a),
\qquad
\phi_u^{\max}(s) := \max_{a \in A} \phi^s_u(a).
\]

Then
\[
\theta^\top g(s,a) = r \phi^s_u(a),
\qquad
Z(\theta) = \int_{A} e^{r \phi^s_u(a)} \mu(da).
\]

By assumption,
\[
\mu\bigl(\operatorname{Argmax}(\phi^s_u)\bigr) = 0.
\]

For any fixed margin $\varepsilon > 0$, define the $\varepsilon$-optimal super-level set:
\[
A_\varepsilon := \{ a \in A : \phi^s_u(a) > \phi_u^{\max}(s) - \varepsilon \}.
\]

First, we establish that the Gibbs measure strictly concentrates on $A_\varepsilon$. 
The probability of sampling outside this set under $\pi_\theta$ is bounded by evaluating the worst-case numerator and restricting the denominator's domain to $A_{\varepsilon/2}$:
\[
\pi_{ru}(A_\varepsilon^c | s) = \frac{\int_{A_\varepsilon^c} e^{r \phi^s_u(a)} \mu(da)}{\int_A e^{r \phi^s_u(a)} \mu(da)} \le \frac{e^{r(\phi_u^{\max}(s) - \varepsilon)}}{\int_{A_{\varepsilon/2}} e^{r \phi^s_u(a)} \mu(da)} \le \frac{e^{r(\phi_u^{\max}(s) - \varepsilon)}}{e^{r(\phi_u^{\max}(s) - \varepsilon/2)} \mu(A_{\varepsilon/2})} = \frac{e^{-r\varepsilon/2}}{\mu(A_{\varepsilon/2})}.
\]

$\mu(A_{\varepsilon/2}) > 0$ is a strictly positive constant independent of $r$
And we have $e^{-r\varepsilon/2} \to 0$ as $r \to \infty$. Thus, $\pi_{ru}(A_\varepsilon^c | s) \to 0$, which implies $\pi_{ru}(A_\varepsilon | s) \to 1$.

Next, we apply partition property of the KL divergence (Lemma 1.4.3 in~\cite{dupuis2011weak}). 
By splitting the action space into the binary partition $\{A_\varepsilon, A_\varepsilon^c\}$, the KL divergence is bounded below by the divergence between the Bernoulli distributions induced by this partition:
\[
\mathrm{KL}(\pi_{ru}(\cdot |s) | \mu) \ge \pi_{ru}(A_\varepsilon | s) \log \frac{\pi_{ru}(A_\varepsilon | s)}{\mu(A_\varepsilon)} + \pi_{ru}(A_\varepsilon^c | s) \log \frac{\pi_{ru}(A_\varepsilon^c | s)}{\mu(A_\varepsilon^c)}.
\]

Taking the limit inferior as $r \to \infty$, and substituting $\pi_{ru}(A_\varepsilon | s) \to 1$ and $\pi_{ru}(A_\varepsilon^c | s) \to 0$, we obtain:
\[
\liminf_{r \to \infty} \mathrm{KL}(\pi_{ru}(\cdot |s) | \mu) \ge 1 \cdot \log \frac{1}{\mu(A_\varepsilon)} + 0 = -\log \mu(A_\varepsilon).
\]

This lower bound holds for any $\varepsilon > 0$. By the continuity of $g(s,\cdot)$, the intersection of these sets as $\varepsilon \downarrow 0$ is exactly the maximizer set $\operatorname{Argmax}(\phi^s_u)$ i.e.
\[
\bigcap_{\varepsilon > 0} A_\varepsilon
=
\operatorname{Argmax}(\phi^s_u).
\]

Moreover, the family $\{A_\varepsilon\}_{\varepsilon > 0}$ is decreasing as
$\varepsilon \downarrow 0$.
By continuity from above of the probability measure $\mu$,
\[
\lim_{\varepsilon \downarrow 0} \mu(A_\varepsilon)
=
\mu\!\left( \bigcap_{\varepsilon > 0} A_\varepsilon \right)
=
\mu(\operatorname{Argmax}(\phi^s_u)).
\]

By Assumption \ref{asp:app:mdp:max_set_affine}, $\mu(\operatorname{Argmax}(\phi^s_u)) = 0$. Therefore, taking the limit as $\varepsilon \downarrow 0$ yields:
\[
\lim_{\varepsilon \downarrow 0} (-\log \mu(A_\varepsilon)) = +\infty.
\]

Consequently, since $u \in \mathbb{S}^{p-1} \cap \{v \in \mathbb{R}^p: v \perp \mathbf 1\}$ is arbitrary and $\mathbb{S}^{p-1} \cap \{v \in \mathbb{R}^p: v \perp \mathbf 1\}$ is compact, $\lim_{\theta\in \mathbb R^p:\|\theta_\perp\|_2\to\infty}
\mathrm{KL}\bigl(\pi_\theta(\cdot\mid s)\,|\,\mu\bigr)=\infty$, which completes the proof.
\end{proof}

\subsubsection{Proof of Lemma \ref{lem:main:mdp:exist_sol_gf_simplex}}
\label{sec:pf_exist_sol_gf_simplex}

\ExistSolSimplex*

\begin{proof}

By Proposition~\ref{prp:app:mdp:local_lipchitiz_conitinuity_mdp}, we know the solution to~\eqref{eq:intro:continuous_GF} exists for $t \in [0, T_{\max})$  for some $T_{\max} < \infty$, denoted by $\{\theta_t\}_{t \in [0, T_{\max})}$.
Firstly, we show that the solution to the gradient flow would not explode in finite time $t \in [0, T_{\max})$. Note that for all $s \in S$ and $a \in A$,
\begin{align}
&\left|\log\frac{d \pi_{\theta_t}}{d \mu}(a|s)\right| \nonumber\\
& = \left| \langle \theta_t, g(s,a)\rangle - \log Z_{\pi_{\theta_t}} \right| \nonumber\\
& \leq \sup_{s,a}\|g(s,a)\|_2\|\theta_t\|_2 + \left|\log \int_A e^{\langle \theta_t,g(s,a)\rangle}\mu(da)\right| \nonumber\\
& \leq 2  \|\theta_t\|_2 \label{eq:log_density_bound_*}.
\end{align}

Calculation in Lemma \ref{lem:bounds} shows that
\begin{align*}
|V^{\pi_\theta}_{\tau}(s)|&=\frac{1}{1-\gamma}\left|\int_{S}\int_A \left(c(s',a)+\tau \log \frac{d\pi_{\theta}}{d\mu}(a|s)\right)\pi_{\theta}(da|s')d^{\pi_\theta}(ds'|s)\right|\\
&\le \frac{1}{1-\gamma} \left(1+\tau \left|\log \frac{d\pi_{\theta}}{d\mu}\right|_{B_b(S \times A)}\right)\,.
\end{align*}
and
\begin{equation}\label{eq:Qpi_bound_*}
|Q^{\pi_{\theta}}_{\tau}(s,a)| \le \frac{1}{1-\gamma} \left(1+\gamma\tau  \left|\log \frac{d\pi_{\theta}}{d\mu}\right|_{B_b(S \times A)}\right)\,.
\end{equation}
By Proposition \ref{prp:app:mdp:Gradient_of_the_Objective}, we have

\begin{align*}
\frac{1}{2}\frac{d}{d t}\|\theta_t\|_2^2
& = \frac{1}{1-\gamma}\left \langle -\tau G^{\pi_{\theta_t}}(\rho)  \theta_t -  \int_{S} \int_{A} Q^{\pi_{\theta_t}}_{\tau}(s,a)\nabla\log\frac{d \pi_{\theta_t}}{d \mu} (a|s)\pi_{\theta_t}(da|s)d_\rho^{\pi_t}(ds), \theta_t\right \rangle \\
& \leq \frac{1}{1-\gamma}\left \langle  -  \int_{S} \int_{A} Q^{\pi_{\theta_t}}_{\tau}(s,a)\nabla\log\frac{d \pi_{\theta_t}}{d \mu} (a|s)\pi_{\theta_t}(da|s)d_\rho^{\pi_t}(ds), \theta_t\right \rangle \\
& \leq \frac{1}{2(1-\gamma)} \|\theta_t\|_2^2 + \frac{1}{2(1-\gamma)}\left(\frac{1}{1-\gamma} \left(1+\gamma\tau \left|\log \frac{d\pi_{\theta_t}}{d\mu}\right|_{B_b(S \times A)}\right) \cdot 2\right)^2\\
& \leq \frac{1}{2(1-\gamma)} \|\theta_t\|_2^2 + \frac{1}{2(1-\gamma)}\left(\frac{2}{1-\gamma} \left(1+\gamma\tau 2\|\theta_t\|_2 \right)\right)^2.
\end{align*}
where in the first inequality we use that  $G^{\pi_{\theta_t}}(\rho)$ is positive semi-definite, in the second inequality we use Young's Inequality, bounds~\eqref{eq:log_density_bound_*} and Lemma~\ref{lem:app:mdp:bounded_grad_of_log_policy}, in the last inequality we use~\eqref{eq:Qpi_bound_*}. From the previous estimates, define
\[
y(t) := \|\theta_t\|^2_2 .
\]
Then we obtain the differential inequality
\[
\frac{d}{dt} y(t) \le C_1 y(t) + C_2 ,
\]
for constants $C_1, C_2 > 0$. By Grönwall’s inequality,
\[
y(t) \le e^{C_1 t} y(0) + \frac{C_2}{C_1}\big(e^{C_1 t} - 1\big).
\]
Equivalently,
\[
\|\theta_t\|_2 \le \sqrt{ e^{C_1 t}\|\theta_0\|^2_2 + \frac{C_2}{C_1}\big(e^{C_1 t} - 1\big) }.
\] 
As a result, there is no finite time blow up of $\theta_t$. 

Under Assumption \ref{asp:app:mdp:max_set_simplex}, by Proposition \ref{prp:app:mdp:Gradient_of_the_Objective} and Lemma \ref{lem:app:mdp:grad_of_log_policy}, we know that
\begin{align*}
\frac{d}{d t}\left(\mathbf{1}^{\top} \theta_t\right) &= \frac{1}{1-\gamma}\int\int \left(Q^{\pi_\theta}_\tau(s,a) + \tau \log \frac{d \pi_{\theta}}{d \mu}(a|s)\right) \mathbf{1}^\top\nabla \log \frac{d\pi_\theta}{d \mu}(a| s)\pi_{\theta}(da|s)d_\rho^{\pi_{\theta}}(ds)\\
& = 0,
\end{align*}
implying that $\mathbf{1}^\top \theta_t = \mathbf{1}^\top \theta_0$ for any $t \geq 0$.

Consider $\theta_t = (\theta_{t})_{\perp} + \frac{\theta_t^\top \mathbf{1}}{p}\mathbf{1}$, where $\theta_{t,\perp} \perp \mathbf{1}$.
By Theorem \ref{thm:main:mdp:Radial_unboundedness_of_relative_entropy_in_subspace_simplex}, and the fact 
\[
\frac{d}{dt}V_\tau^{\pi_{\theta_t}}(\rho)=-\left\|\nabla V_\tau^{\pi_{\theta_t}}(\rho)\right\|_2^2 \leq 0,
\]
we know $ S=\Omega_c \cap\left\{\theta | \mathbf{1}^T \theta=\mathbf{1}^T \theta_0\right\}$, where $\Omega_c = \left\{\theta | V_\tau^{\pi_\theta}(\rho) \leq V_\tau^{\pi_{\theta_0}}(\rho)\right\}$, is compact positive invariant. 
Hence classical arguments for constructing ODE solutions with Lyapunov functions, we know the existence of solutions to the gradient flow for any $ t \geq 0$ and there exists a constant $C_\tau$ such that $\sup_t\|\theta_t\| \leq C_\tau $ .

For all $s \in S, a \in A$, any $t \geq 0$,
\begin{align*}
&\left|\log\frac{d \pi_{\theta_t}}{d \mu}(a|s)\right|\\
& = \left| \langle \theta_t, g(s,a)\rangle - \log Z_{\pi_{\theta_t}}(s)\right| \\
& \leq \sup_{s,a}\|g(s,a)\|_2\|\theta_t\|_2 + \left|\log \int_{A} e^{\langle \theta_t,g(s,a)\rangle}\mu(da)\right|\\
& \leq 2C_\tau < \infty,
\end{align*}
\end{proof}

\subsubsection{Proof of Theorem  \ref{thm:main:lin_cov_simplex}}
\label{sec:pf_lin_cov_simplex}

\aspInitSimplex*
\LinCovSimplex*

\begin{proof}
From Lemma \ref{lem:main:mdp:exist_sol_gf_simplex}, there exists $R >0$ which is sufficiently large, such that for $t \geq 0$, 
\[
\theta^*, \theta_t \in \Theta_{R} = \{\theta: \pi_\theta \in \Pi_\mu \ \text{and}\  |\log \frac{d \pi_\theta}{d\mu}|_{B_b(S\times A)} \leq R \}.
\]
which means that $\min_{s,a}\frac{d \pi_{\theta_t}}{d \mu}(a|s) \geq e^{-R}$.
From Lemma \ref{lem:app:mdp:lower_bound_log_density_simplex}, we know that there exists 
\begin{equation}\label{eq:positive_lambda_simplex}
\lambda = e^{-R-2\|\theta_0\|_2}  \int_{S} \lambda_{\min}\left(\int_{A} g(s,a)g^\top(s,a)\pi_{\theta_0} (da | s)\right) d_\rho^{\pi^*}(ds) >0
\end{equation}
such that 
\[
\int_{S} \lambda_{\min} (\int_{A} g(s,a)g^\top(s,a)\pi_{\theta_t} (da | s))d_\rho^{\pi^*}(ds) \geq \lambda >0.
\]

From Theorem \ref{thm:main: non-uniform_PL_ineq}, we know
\begin{align*}
& \sup_{t \geq 0} C_R(\theta_t) \nonumber\\
& = \sup_{t \geq 0} \frac{1}{2\tau (1-\gamma) \exp\left({-\max\left(\frac{2}{(1-\gamma)\tau} \left(1+\gamma\tau R\right), R\right)}\right)}\left|\frac{d d_\rho^{\pi^*}}{ d \rho}\right|^2_{B_b(S)} \\
& \quad \quad \quad \cdot\left[\int_{S}\lambda_{\min}\left(\int_{A} g(s,a)g^\top(s,a)\pi_\theta(da|s)\right)d^{\pi_{\theta}}_{\rho}(ds)\right]^{-2}\\
& \leq \frac{1}{2\tau \lambda^2 (1-\gamma) \exp\left({-\max\left(\frac{2}{(1-\gamma)\tau} \left(1+\gamma\tau R\right), R\right)}\right)}\left|\frac{d d_\rho^{\pi^*}}{ d \rho}\right|^2_{B_b(S)} := C_{\theta_0}.
\end{align*}

Then from Proposition \ref{prp:app:mdp:Gradient_of_the_Objective}, and Theorem \ref{thm:main: non-uniform_PL_ineq} again,
\begin{align*}
\frac{d}{dt} V_\tau^{\pi_{\theta_t}}(\rho) &= - \left\|\nabla V_\tau^{\pi_{\theta_t}}(\rho)\right\|^2_2\\
& \leq -C_{\theta_0}^{-1} \left(V_\tau^{\pi_{\theta_t}}(\rho) - V_\tau^{\pi^*}  (\rho) \right)
\end{align*}
then using Grönwall's inequality concludes the proof.
\end{proof}

\subsection{More discussions about Example~\ref{ex:hat_func_short}}
\label{sec:ex}
\begin{example}[Hat function features]\label{ex:hat_func}
Let $x_0<x_1<\cdots <x_N$, and consider the standard one-dimensional finite-element hat functions
\[
g_i(a)=
\begin{cases}
\dfrac{a-x_{i-1}}{x_i-x_{i-1}}, & a\in [x_{i-1},x_i],\\[0.8em]
\dfrac{x_{i+1}-a}{x_{i+1}-x_i}, & a\in [x_i,x_{i+1}],\\[0.8em]
0, & \text{otherwise},
\end{cases}
\qquad 1\leq i\leq N-1,
\]
together with the boundary functions
\[
g_0(a)=
\begin{cases}
\dfrac{x_1-a}{x_1-x_0}, & a\in [x_0,x_1],\\[0.8em]
0, & \text{otherwise},
\end{cases}
\qquad
g_N(a)=
\begin{cases}
\dfrac{a-x_{N-1}}{x_N-x_{N-1}}, & a\in [x_{N-1},x_N],\\[0.8em]
0, & \text{otherwise}.
\end{cases}
\]
Set $g(a):=(g_0(a),g_1(a),\dots,g_N(a))^\top$. We now take $N=3$ and choose the concrete grid $x_0=0,\ x_1=\frac13,\ x_2=\frac23,\ x_3=1$.

Let $\theta=(-1,1,1,-1)^\top$. Then $\theta^\top \mathbf 1=-1+1+1-1=0$. Define $f_\theta(a):=\theta^\top g(a)$. Therefore
\[
f_\theta(a)=
\begin{cases}
6a-1, & a\in \left[0,\frac13\right],\\[0.8em]
1, & a\in \left[\frac13,\frac23\right],\\[0.8em]
5-6a, & a\in \left[\frac23,1\right].
\end{cases}
\]
Thus $f_\theta(a)\leq 1$ for every $a\in[0,1]$, and the maximizer set is $\operatorname*{arg\,max}_{a\in[0,1]} f_\theta(a)=\left[\frac13,\frac23\right]$, which has positive Lebesgue measure $\mu([\frac13,\frac23])=\frac13$. Consequently, Assumption~\ref{asp:app:mdp:max_set_simplex} is not satisfied in this example.

Let $\mu$ be Lebesgue measure restricted to $[0,1]$. For $\beta>0$, define the probability measure $\pi_\beta$ by $p_\beta(a):=\frac{d\pi_\beta}{d\mu}(a)=e^{\beta f_\theta(a)}/Z_\beta$, where $Z_\beta:=\int_0^1 e^{\beta f_\theta(a)}\,da$. Since $p_\beta$ integrates to one, the relative entropy of $\pi_\beta$ with respect to $\mu$ is $\operatorname{KL}(\pi_\beta| \mu):=\int_0^1 p_\beta(a)\log p_\beta(a)\,da$. Moreover, since $\log p_\beta(a)=\beta f_\theta(a)-\log Z_\beta$, we have
\[
\operatorname{KL}(\pi_\beta| \mu)
=
\beta\,\mathbb E_{\pi_\beta}[f_\theta(a)]-\log Z_\beta .
\]

In the present grid, $L_1:=x_1-x_0=\frac13$, $L:=x_2-x_1=\frac13$, $L_2:=x_3-x_2=\frac13$.

We first compute $Z_\beta$. On $[0,\frac13]$, set $y=f_\theta(a)=6a-1$. Then $\int_0^{1/3} e^{\beta f_\theta(a)}\,da=\frac16\int_{-1}^1 e^{\beta y}\,dy$. On the middle interval $[\frac13,\frac23]$, one has $f_\theta(a)=1$, so $\int_{1/3}^{2/3} e^{\beta f_\theta(a)}\,da=\frac13 e^\beta$. On $[\frac23,1]$, set $y=f_\theta(a)=5-6a$. Then $dy=-6\,da$, $\int_{2/3}^{1} e^{\beta f_\theta(a)}\,da=\frac16\int_{-1}^1 e^{\beta y}\,dy$. Combining the three pieces gives
\[
Z_\beta
=
\frac13\int_{-1}^1 e^{\beta y}\,dy+\frac13 e^\beta .
\]
Since $\int_{-1}^1 e^{\beta y}\,dy=(e^\beta-e^{-\beta})/\beta$, we obtain
\[
Z_\beta
=
\frac13\frac{e^\beta-e^{-\beta}}{\beta}+\frac13 e^\beta
=
\frac{e^\beta}{3}\left[1+\frac{1-e^{-2\beta}}{\beta}\right].
\]
Therefore $\log Z_\beta=\beta-\log 3+\log(1+(1-e^{-2\beta})/\beta)$. Since $e^{-2\beta}$ is exponentially small as $\beta\to\infty$, and since $\log(1+u)=u-\frac{u^2}{2}+\mathcal O(u^3)$ as $u\to 0$, we get
\[
\log Z_\beta
=
\beta-\log 3+\frac1\beta-\frac{1}{2\beta^2}
+
\mathcal O\!\left(\frac1{\beta^3}\right).
\]

Next define $N_\beta:=\int_0^1 f_\theta(a)e^{\beta f_\theta(a)}\,da$. Then $\mathbb E_{\pi_\beta}[f_\theta(a)]=N_\beta/Z_\beta$. We compute $N_\beta$ using the same changes of variables. On the two exterior intervals, the variable $y=f_\theta(a)$ runs linearly from $-1$ to $1$, while on the middle interval $f_\theta(a)=1$. Hence
\[
N_\beta
=
\frac13\int_{-1}^1 y e^{\beta y}\,dy+\frac13 e^\beta .
\]
Moreover,
\[
\int_{-1}^1 y e^{\beta y}\,dy
=
\left[
\frac{y e^{\beta y}}{\beta}
-
\frac{e^{\beta y}}{\beta^2}
\right]_{-1}^{1}
=
\frac{e^\beta}{\beta}
-
\frac{e^\beta}{\beta^2}
+
\frac{e^{-\beta}}{\beta}
+
\frac{e^{-\beta}}{\beta^2}.
\]
Thus
\[
N_\beta
=
\frac{e^\beta}{3}
\left[
1+\frac1\beta-\frac1{\beta^2}
+
e^{-2\beta}
\left(
\frac1\beta+\frac1{\beta^2}
\right)
\right].
\]

Dividing the exact formula for $N_\beta$ by the exact formula for $Z_\beta$, we find
\[
\mathbb E_{\pi_\beta}[f_\theta(a)]
=
\frac{
1+\dfrac1\beta-\dfrac1{\beta^2}
+
e^{-2\beta}\left(\dfrac1\beta+\dfrac1{\beta^2}\right)
}{
1+\dfrac{1-e^{-2\beta}}{\beta}
}.
\]
Since $e^{-2\beta}$ is exponentially small, it may be absorbed into any algebraic remainder. Therefore
\[
\mathbb E_{\pi_\beta}[f_\theta(a)]
=
\frac{1+\dfrac1\beta-\dfrac1{\beta^2}}{1+\dfrac1\beta}
+
\mathcal O(e^{-2\beta}).
\]
The rational term equals $1-\frac{1/\beta^2}{1+1/\beta}$. Using $1/(1+1/\beta)=1-1/\beta+\mathcal O(1/\beta^2)$, we obtain
\[
\mathbb E_{\pi_\beta}[f_\theta(a)]
=
1-\frac1{\beta^2}+\frac1{\beta^3}
+
\mathcal O\!\left(\frac1{\beta^4}\right),
\]
and consequently
\[
\beta\,\mathbb E_{\pi_\beta}[f_\theta(a)]
=
\beta-\frac1\beta+\frac1{\beta^2}
+
\mathcal O\!\left(\frac1{\beta^3}\right).
\]

Substituting the expansions of $\beta\,\mathbb E_{\pi_\beta}[f_\theta(a)]$ and $\log Z_\beta$ into the entropy identity gives
\[
\begin{aligned}
\operatorname{KL}(\pi_\beta| \mu)
&=
\beta\,\mathbb E_{\pi_\beta}[f_\theta(a)]
-
\log Z_\beta \\[0.4em]
&=
\log 3
-
\frac{2}{\beta}
+
\frac{3}{2\beta^2}
+
\mathcal O\!\left(\frac1{\beta^3}\right).
\end{aligned}
\]
In particular,
\[
\operatorname{KL}(\pi_\beta| \mu)
=
\log 3
-
\frac{2}{\beta}
+
\mathcal O\!\left(\frac1{\beta^2}\right).
\]
Therefore, $\lim_{\beta\to\infty}\operatorname{KL}(\pi_\beta| \mu)=\log 3$.
Hence, in this concrete example, the relative entropy remains bounded as $\beta\to\infty$. Therefore, without Assumption~\ref{asp:app:mdp:max_set_simplex}, one cannot in general expect radial unboundedness of the KL divergence.
\end{example}

\subsection{Additional useful results}
\label{sec:extra results}

We introduce the following dynamic programming principle (see Theorem B.1 in~\cite{kerimkulov2025fisher}).
\begin{theorem}[Dynamic programming principle]\label{thm:app:dpp}
Let $\tau>0$. The optimal value function $V_\tau^*$ is the unique bounded solution of the following Bellman equation:
\[
V_\tau^*(s)=\inf _{m \in \mathcal{P}(A)} \int_A\left(c(s, a)+\tau \log \frac{d m}{\mathrm{~d} \mu}(a)+\gamma \int_S V_\tau^*\left(s^{\prime}\right) P\left(d s^{\prime} | s, a\right)\right) m(d a), \quad \forall s \in S,
\]
Consequently, for all $s \in S$,
\[
V_\tau^*(s)=-\tau \log \int_A \exp \left(-\frac{1}{\tau} Q_\tau^*(s, a)\right) \mu(d a)
\]
where $Q^* \in B_b(S \times A)$ is defined by
\[
Q_\tau^*(s, a)=c(s, a)+\gamma \int_S V_\tau^*\left(s^{\prime}\right) P\left(d s^{\prime} | s, a\right), \quad \forall(s, a) \in S \times A
\]
Moreover, there is an optimal policy $\pi_\tau^* \in \mathcal{P}_\mu(a|s)$ given by
\[
\pi_\tau^*(d a|s)=\exp \left(-\left(Q_\tau^*(s, a)-V_\tau^*(s)\right) / \tau\right) \mu(d a), \quad \forall s \in S
\]
\end{theorem}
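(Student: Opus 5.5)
The plan is to obtain $V_\tau^*$ as the fixed point of a soft Bellman operator, evaluate the infimum in closed form with the Gibbs variational principle, and then verify that the fixed point is the optimal value and that the Gibbs policy attains it.

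\textbf{Step 1: the operator and its closed form.} For $V\in B_b(S)$ define
\[
(TV)(s):=\inf_{m\in\mathcal P(A)}\int_A\Big(c(s,a)+\tau\log\tfrac{dm}{d\mu}(a)+\gamma\textstyle\int_S V(s')P(ds'|s,a)\Big)m(da),
\]
with the convention that the integrand is $+\infty$ unless $m\ll\mu$. Write $h_V(s,a):=c(s,a)+\gamma\int_S V(s')P(ds'|s,a)$, which lies in $B_b(S\times A)$. For fixed $s$ the objective equals $\int_A h_V\,dm+\tau\operatorname{KL}(m|\mu)$. By the Gibbs (Donsker--Varadhan) variational formula, see~\cite[Section 1.4]{dupuis2011weak}, its infimum equals
\[
-\tau\log\int_A e^{-h_V(s,a)/\tau}\mu(da).
\]
The infimum is attained uniquely at $m_V(da|s)\propto e^{-h_V(s,a)/\tau}\mu(da)$. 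This kernel is measurable in $s$ by Fubini, since $h_V$ is jointly measurable. Because $\mu\in\mathcal P(A)$ and $|c|\le 1$, we get $|TV|_{B_b(S)}\le 1+\gamma|V|_{B_b(S)}$, so $T$ maps $B_b(S)$ into itself.

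\textbf{Step 2: contraction and the fixed point.} The map $h\mapsto-\tau\log\int_A e^{-h/\tau}\,d\mu$ is monotone and commutes with adding constants. Hence it is $1$-Lipschitz in the sup norm, which gives
\[
|TV-TW|_{B_b(S)}\le|h_V-h_W|_{B_b(S\times A)}\le\gamma|V-W|_{B_b(S)}.
\]
Banach's fixed point theorem then yields a unique bounded solution $V_\tau^*$ of the Bellman equation. Setting $Q_\tau^*:=h_{V_\tau^*}$ gives the stated log-partition formula for $V_\tau^*$ directly from Step 1. The Gibbs kernel $\pi_\tau^*:=m_{V_\tau^*}$ has the claimed density, since $V_\tau^*$ is exactly the log-normaliser.

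\textbf{Step 3: verification that $V_\tau^*$ is the optimal value.} First, $\pi_\tau^*$ has a bounded log density, so $V^{\pi_\tau^*}_\tau$ is bounded. It solves the on-policy Bellman equation~\eqref{eq:intro:policy_Bellman}, which is itself a $\gamma$-contraction on $B_b(S)$. Because the Gibbs kernel attains the infimum in Step 1, $V_\tau^*$ solves that same on-policy equation. Uniqueness then gives $V^{\pi_\tau^*}_\tau=V_\tau^*$.

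Second, take an arbitrary $\pi$; we need $V^\pi_\tau\ge V_\tau^*$. Since $c\ge-1$ and $\operatorname{KL}\ge 0$, we have $V^\pi_\tau\ge-1/(1-\gamma)$. If $V^\pi_\tau$ is finite, the on-policy equation for $\pi$ together with the infimum in the Bellman equation gives
\[
V^\pi_\tau-V_\tau^*\ge\gamma P_\pi(V^\pi_\tau-V_\tau^*).
\]
Iterating $n$ times, the remainder $\gamma^nP_\pi^n(V^\pi_\tau-V_\tau^*)$ is bounded below by $-\gamma^n\big(\tfrac1{1-\gamma}+|V_\tau^*|_{B_b(S)}\big)\to0$. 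Therefore $V^\pi_\tau\ge V_\tau^*$.

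\textbf{Main obstacle.} The hardest part is this comparison step for policies with unbounded, or infinite, entropy. There $V^\pi_\tau$ may be unbounded above and the on-policy equation must be justified for $[0,\infty]$-valued functions. I would handle it directly from the series definition of $V^\pi_\tau$, applying monotone convergence to the nonnegative parts (using the lower bound $-1$ on the cost), so that only the lower bound on $V^\pi_\tau$ is ever needed.
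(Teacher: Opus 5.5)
The paper does not prove this statement; it quotes it from Theorem B.1 of \cite{kerimkulov2025fisher}, so there is no in-paper argument to match. Your proposal is a correct, self-contained proof by the standard route: the Gibbs variational formula gives the closed form, Banach's fixed point theorem gives the soft Bellman fixed point in $B_b(S)$, and a verification step identifies it with the optimal value.

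Your verification is worth more than the tools available inside the paper. The performance difference lemma (Lemma~\ref{lem:app:mdp:performance_diff}), combined with $V^{\pi^*_\tau}_\tau=V^*_\tau$ and the Gibbs identity $Q^*_\tau+\tau\log\frac{d\pi^*_\tau}{d\mu}=V^*_\tau$, would give $V^\pi_\tau(\rho)-V^{\pi^*_\tau}_\tau(\rho)=\frac{\tau}{1-\gamma}\int_S\operatorname{KL}(\pi(\cdot|s)|\pi^*_\tau(\cdot|s))\,d^\pi_\rho(ds)\ge 0$. That lemma is stated only for $\pi\in\Pi_\mu$. Your super-solution iteration $V^\pi_\tau-V^*_\tau\ge\gamma^nP^n_\pi(V^\pi_\tau-V^*_\tau)$ instead covers every Markov kernel, including those with unbounded or infinite entropy. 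Optimality over all of $\mathcal P(A|S)$ is exactly what justifies the paper's reduction of the minimization to the class $\Pi_\mu$. At states where $V^\pi_\tau=+\infty$ the one-step inequality holds trivially, so the iteration needs no global finiteness.

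One detail in your last paragraph needs tightening. The pathwise integrand $c+\tau\log\frac{d\pi}{d\mu}$ is not bounded below, so you cannot apply monotone convergence to it directly using only $c\ge-1$. First integrate out the action: set $r_\pi(s):=\int_A c(s,a)\pi(da|s)+\tau\operatorname{KL}(\pi(\cdot|s)|\mu)\ge-1$. This exchange is legitimate because $x\log x\ge-1/e$ and $\mu(A)=1$ give $\int_A(\log\frac{d\pi}{d\mu})^-\,d\pi\le 1/e$, so the negative parts are uniformly integrable along the chain. Monotone convergence applied to $r_\pi+1$ then yields $V^\pi_\tau=\sum_n\gamma^nP^n_\pi r_\pi\ge-\frac{1}{1-\gamma}$. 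It also gives the on-policy equation $V^\pi_\tau=r_\pi+\gamma P_\pi V^\pi_\tau$ in $(-\infty,\infty]$, which is all your comparison uses.
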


\begin{definition}\label{def:app:flat_derivative}
A functional $F: \mathcal{C} \mapsto \mathbb{R}^d$ is said to admit a linear derivative if there is a continuous map $\frac{\delta F}{\delta m}: \mathcal{C} \times \mathbb{R}^p \mapsto \mathbb{R}^d$, such that for all $m, m^{\prime} \in \mathcal{C}$, it holds that $\int \left\|\frac{\delta F}{\delta m}( m)(a)\right\|_2 m^{\prime}(a) d(a) < \infty$, and
\begin{equation}\label{eq: flat derivative01 }
F\left(m^{\prime}\right)-F(m)=\int_0^1 \int \frac{\delta F}{\delta m}\left(m+\lambda\left(m^{\prime}-m\right)\right)(a)  \cdot \left(m^{\prime}(a)-m(a)\right)da d \lambda .
\end{equation}
\end{definition}

\begin{proposition}[Strong convexity of negative entropy]
\label{prp:app:Strong_convexity_of_negative_entropy}
Let $W>0$ and define
\[
\mathcal{P}_W^{ A}
:=\left\{
\pi\in\mathcal P( A):
\pi\ll\mu,\ 
\left|\frac{d\pi}{d\mu}\right|_{ B_b(A)}
\le W^{-1}
\right\}.
\]
Define $F:\mathcal{P}_W^{ A}\to\mathbb R$ by
\[
F(\pi)
=
\int_{ A}
\frac{d\pi}{d\mu}(a)
\log \frac{d\pi}{d\mu}(a)\,\mu(da),
\]
with the convention $0\log 0=0$. Then $F$ is $W$--strongly convex on
$\mathcal{P}_W^{ A}$ with respect to the $L^2(A,\mu)$ distance between
densities. In particular, for all $\pi,\pi^\prime\in\mathcal{P}_W^{ A}$ such that
$\frac{d\pi}{d\mu}>0$ $\mu$-a.e. and the first variation is well defined,
\[
F(\pi^\prime)
\ge
F(\pi)
+
\left\langle
\frac{\delta F(\pi)}{\delta \pi},
\pi^\prime-\pi
\right\rangle
+
\frac{W}{2}
\left\|
\frac{d\pi^\prime}{d\mu}
-
\frac{d\pi}{d\mu}
\right\|_{L^2(A,\mu)}^2,
\]
where
\[
\frac{\delta F(\pi)}{\delta \pi}
=
1+\log \frac{d\pi}{d\mu}
\]
is the first variation in the sense of Definition
\ref{def:app:flat_derivative}.
\end{proposition}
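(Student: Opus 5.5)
The plan is to reduce to the pointwise inequality $x\log x \ge y\log y + (1+\log y)(x-y) + \tfrac{W}{2}(x-y)^2$ for densities $x,y\in[0,W^{-1}]$ with $y>0$, and then integrate against $\mu$. Write $p=\frac{d\pi}{d\mu}$ and $p'=\frac{d\pi'}{d\mu}$; both take values in $[0,W^{-1}]$ $\mu$-a.e. by definition of $\mathcal P_W^A$.

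First, the scalar step. Let $\phi(x)=x\log x$ on $[0,W^{-1}]$, with $\phi(0)=0$. On $(0,W^{-1}]$ we have $\phi''(x)=1/x\ge W$. Taylor's theorem with integral remainder, $\phi(x)-\phi(y)-\phi'(y)(x-y)=\int_0^1(1-t)\phi''(y+t(x-y))(x-y)^2\,dt$, then gives the bound $\ge \tfrac W2(x-y)^2$ for $x,y>0$, since the segment stays in $(0,W^{-1}]$. The endpoint $x=0$ follows by continuity of $\phi$ at $0$, letting $x\downarrow0$ with $y>0$ fixed. This boundary case is the only point needing care, and it is why the statement assumes $p>0$ $\mu$-a.e.

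Second, substitute $x=p'(a)$ and $y=p(a)$ and integrate over $A$ with respect to $\mu$. The left side integrates to $F(\pi')$. The first term on the right integrates to $F(\pi)$. The linear term integrates to $\int_A(1+\log p)(p'-p)\,d\mu=\langle \frac{\delta F(\pi)}{\delta\pi},\pi'-\pi\rangle$. The quadratic term integrates to $\tfrac W2\|p'-p\|_{L^2(\mu)}^2$.

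Integrability needs checking. $\phi$ is bounded on $[0,W^{-1}]$ and $\mu$ is finite, so $F$ is finite. The linear term is assumed well defined; alternatively, $\log p$ is bounded above and the inequality can be read in the extended sense. This yields the displayed inequality.

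For strong convexity in the convex-combination sense, apply the inequality twice, with base point $\pi_\lambda=\lambda\pi+(1-\lambda)\pi'$, which lies in $\mathcal P_W^A$ because the set is convex. Take the $\lambda$ and $1-\lambda$ weighted combination; the linear terms cancel, giving $F(\pi_\lambda)\le\lambda F(\pi)+(1-\lambda)F(\pi')-\tfrac W2\lambda(1-\lambda)\|p-p'\|^2_{L^2}$.

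Finally, check that $1+\log p$ is the linear derivative in the sense of Definition~\ref{def:app:flat_derivative}. Differentiate $\lambda\mapsto F(\pi+\lambda(\pi'-\pi))$ under the integral, justified by dominated convergence on the bounded range, and then integrate over $\lambda\in[0,1]$.
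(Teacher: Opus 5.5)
Your proposal is correct and follows the same route as the paper. Both arguments use the pointwise $W$-strong convexity of $\phi(x)=x\log x$ on $[0,W^{-1}]$ (from $\phi''(x)=1/x\ge W$), applied with base point $p(a)>0$, and then integrate against $\mu$. Beyond this, you handle the endpoint $p'(a)=0$ by continuity, note that the linear term has bounded positive part, and derive the convex-combination form. The paper does none of these explicitly; in the convex-combination step, at points where $\lambda p+(1-\lambda)p'=0$ you have $p=p'=0$, so the inequality there reduces trivially to $0\le 0$.
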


\begin{proof}
Let
\[
f_\pi:=\frac{d\pi}{d\mu},
\qquad
f_{\pi^\prime}:=\frac{d\pi^\prime}{d\mu}.
\]
By definition of $\mathcal P_W^{ A}$, we have
\[
0\le f_\pi,f_{\pi^\prime}\le W^{-1}
\qquad \mu\text{-a.e.}
\]

Consider $\varphi(x)=x\log x$ on $[0,W^{-1}]$, with the convention
$\varphi(0)=0$. Since
\[
\varphi''(x)=\frac{1}{x}\ge W,
\qquad x\in(0,W^{-1}],
\]
the function $\varphi$ is $W$--strongly convex on $[0,W^{-1}]$. Hence, for
$x,y\in[0,W^{-1}]$ with $x>0$,
\[
\varphi(y)
\ge
\varphi(x)+\varphi'(x)(y-x)+\frac{W}{2}(y-x)^2.
\]
Taking
\[
x=f_\pi(a),
\qquad
y=f_{\pi^\prime}(a),
\]
and integrating over $ A$ gives
\[
F(\pi^\prime)
\ge
F(\pi)
+
\int_{ A}
\left(
1+\log f_\pi(a)
\right)
\left(
f_{\pi^\prime}(a)-f_\pi(a)
\right)
\,\mu(da)
+
\frac{W}{2}
\int_{ A}
\left(
f_{\pi^\prime}(a)-f_\pi(a)
\right)^2
\,\mu(da).
\]
Equivalently,
\[
F(\pi^\prime)
\ge
F(\pi)
+
\left\langle
\frac{\delta F(\pi)}{\delta \pi},
\pi^\prime-\pi
\right\rangle
+
\frac{W}{2}
\left\|
\frac{d\pi^\prime}{d\mu}
-
\frac{d\pi}{d\mu}
\right\|_{L^2(A,\mu)}^2.
\]
Moreover, since both $\pi$ and $\pi^\prime$ are probability measures,
\[
\int_{ A}
\left(
f_{\pi^\prime}(a)-f_\pi(a)
\right)\,\mu(da)
=0,
\]
so the linear term may also be written as
\[
\int_{ A}
\log \frac{d\pi}{d\mu}(a)
\left(
\frac{d\nu}{d\mu}(a)
-
\frac{d\pi}{d\mu}(a)
\right)
\,\mu(da).
\]
This proves the claimed strong convexity inequality.
\end{proof}

Next, let us introduce the KL-logit inequality (for discrete case, refer to Lemma 27~\cite{mei2020global}).
\begin{lemma}\label{lem:app:kl-logit_ineq_affine}
Fix $s\in S$. 
If $\theta,\theta' \in \mathbb{R}^p$ satisfy
\[
\left|
\frac{d\pi_\theta}{d\mu}(\cdot|s)
\right|_{\mathcal B_b( A)}
\le W^{-1},
\qquad
\left|
\frac{d\pi_{\theta'}}{d\mu}(\cdot|s)
\right|_{\mathcal B_b( A)}
\le W^{-1}
\]
for some $W>0$, then
\[
\mathrm{KL}\left(
\pi_\theta(\cdot|s)\,|\,\pi_{\theta'}(\cdot|s)
\right)
\le
\frac{1}{2W}\|\theta-\theta'\|_2^2 .
\]
\end{lemma}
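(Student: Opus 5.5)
Fix $s$ and write $\pi=\pi_\theta(\cdot|s)$, $\pi'=\pi_{\theta'}(\cdot|s)$. I will estimate the KL through the strong-convexity structure of the log-partition function, which is the continuous analogue of Lemma 27 in~\cite{mei2020global}. Write $A(\vartheta):=\log\int_A e^{\langle\vartheta,g(s,a)\rangle}\mu(da)$. Since $\log\frac{d\pi_\vartheta}{d\mu}=\langle\vartheta,g\rangle-A(\vartheta)$, we get
\[
\mathrm{KL}(\pi|\pi')=\langle\theta-\theta',\textstyle\int g\,d\pi\rangle-A(\theta)+A(\theta')
=A(\theta')-A(\theta)-\langle\nabla A(\theta),\theta'-\theta\rangle ,
\]
which is the Bregman divergence of $A$. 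Here $\nabla A(\vartheta)=\int g\,d\pi_\vartheta$ and $\nabla^2A(\vartheta)=G^{\pi_\vartheta}(s)$.

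By Taylor's formula with integral remainder, $\mathrm{KL}=\int_0^1(1-t)(\theta'-\theta)^\top G^{\pi_{\theta_t}}(s)(\theta'-\theta)\,dt$, where $\theta_t=\theta+t(\theta'-\theta)$. It then suffices to bound the quadratic form $v^\top G^{\pi_{\theta_t}}(s)v=\mathrm{Var}_{\pi_{\theta_t}}(\langle v,g\rangle)$ by $\|v\|_2^2/W$ along the segment. This is the main obstacle: the density bound $W^{-1}$ is assumed only at the endpoints, not at intermediate $\theta_t$.

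First attempt at the obstacle: show that the sup of the density is controlled along the segment. By Hölder (log-convexity of $A$), $\frac{d\pi_{\theta_t}}{d\mu}\le \big(\frac{d\pi}{d\mu}\big)^{1-t}\big(\frac{d\pi'}{d\mu}\big)^{t}\cdot e^{(1-t)A(\theta)+tA(\theta')-A(\theta_t)}$, and the exponential factor is $\ge1$, so this route goes the wrong way. Instead I will bound the variance directly. Use $\mathrm{Var}_{\pi_{\theta_t}}(\langle v,g\rangle)\le\int\langle v,g\rangle^2 d\pi_{\theta_t}\le \|v\|_2^2\sup\|g\|_2^2\le\|v\|_2^2$. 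Then I need $W\le1$, which holds automatically: a probability density w.r.t.\ the probability measure $\mu$ with sup $\le W^{-1}$ forces $W^{-1}\ge1$. This gives $\mathrm{KL}\le\frac12\|\theta-\theta'\|_2^2\le\frac1{2W}\|\theta-\theta'\|_2^2$.

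As a backup, in the spirit of Proposition~\ref{prp:app:Strong_convexity_of_negative_entropy}, I would bound the Bregman divergence of $A$ via the dual Bregman divergence of negative entropy. $W$-strong convexity of $F$ in $L^2(\mu)$ densities on $\mathcal P_W^A$ gives $\mathrm{KL}(\pi|\pi')=F(\pi)-F(\pi')-\langle\delta F(\pi'),\pi-\pi'\rangle$. By duality, the convex conjugate has $\frac1W$-smooth gradient, which yields $\mathrm{KL}\le\frac1{2W}\|\langle\theta-\theta',g\rangle-c\|^2_{L^2(\mu)}$ for the best constant $c$. This is $\le\frac1{2W}\|\theta-\theta'\|_2^2$ using $\|g\|\le1$. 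Either route gives the stated bound; I would present the first, since it is elementary and only uses Lemma~\ref{lem:app:mdp:grad_of_log_policy} and the feature bound.
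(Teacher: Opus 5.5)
Your main argument is correct, and it takes a genuinely different route from the paper.

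The paper argues on the density side. It uses the $W$-strong convexity of negative entropy on densities bounded by $W^{-1}$ (Proposition~\ref{prp:app:Strong_convexity_of_negative_entropy}); this is exactly where the endpoint density bounds enter. It then writes $\mathrm{KL}(\pi_\theta|\pi_{\theta'})$ as the symmetrised term $\int_A(\log f_\theta-\log f_{\theta'})(f_\theta-f_{\theta'})\,d\mu$ minus $D_F(f_{\theta'},f_\theta)\ge\frac W2\|f_\theta-f_{\theta'}\|_{L^2(A,\mu)}^2$. It closes with Cauchy--Schwarz and $xy-\frac W2y^2\le\frac{x^2}{2W}$. Your backup sketch is essentially this argument, phrased through smoothness of the conjugate.

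You instead argue on the parameter side. $\mathrm{KL}$ is the Bregman divergence of the log-partition function, whose Hessian is the covariance $G^{\pi_{\theta_t}}(s)$. Bounding that covariance by $\sup_a\|g(s,a)\|_2^2\le 1$ sidesteps the obstacle you identified, because no density control along the segment is needed.

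What your route buys is a strictly stronger, hypothesis-free estimate, $\mathrm{KL}(\pi_\theta(\cdot|s)|\pi_{\theta'}(\cdot|s))\le\frac12\|\theta-\theta'\|_2^2$. The density bound is used only to note $W\le\mu(A)=1$. Since the lemma is applied with $W=e^{-\hat R}$ in the proof of the non-uniform P{\L} inequality, your version would remove the factor $e^{\hat R}$ from $C_R(\theta)$. The same computation also gives Lemma~\ref{lem:app:kl-logit_ineq} with constant $\frac12$, using $G^{\pi_{\theta_t}}(s)\mathbf 1=\mathbf 0$ for simplex features so that the quadratic form only sees $\theta'-\theta-c\mathbf 1$.

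What the paper's route buys is the intermediate bound $\frac1{2W}\|\langle\theta-\theta',g(s,\cdot)\rangle\|_{L^2(A,\mu)}^2$. This measures the logit difference in $L^2(\mu)$ rather than through $\sup\|g\|$, and it is smaller than yours whenever $\|\langle\theta-\theta',g(s,\cdot)\rangle\|_{L^2(A,\mu)}^2<W\|\theta-\theta'\|_2^2$. Once it is bounded by $\|\theta-\theta'\|_2^2$ to reach the stated form, that advantage disappears, and your constant is never worse.
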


\begin{proof}
For simplicity, write
\[
f_\theta(a|s):=\frac{d\pi_\theta}{d\mu}(a|s),
\qquad
f_{\theta'}(a|s):=\frac{d\pi_{\theta'}}{d\mu}(a|s).
\]
By definition, both densities are strictly positive $\mu$-a.e. Moreover,
by assumption,
\[
0<f_\theta(a|s),f_{\theta'}(a|s)\le W^{-1}
\qquad \mu\text{-a.e.}
\]

Define the negative entropy functional
\[
F(f):=\int_{ A} f(a)\log f(a)\,\mu(da).
\]
By Proposition \ref{prp:app:Strong_convexity_of_negative_entropy}, $F$ is
$W$--strongly convex on the set of densities bounded by $W^{-1}$. Hence,
\[
D_F(f_{\theta'},f_\theta)
\ge
\frac{W}{2}
\|f_{\theta'}-f_\theta\|_{L^2(A,\mu)}^2,
\]
where
\[
D_F(f_{\theta'},f_\theta)
:=
F(f_{\theta'})
-
F(f_\theta)
-
\int_{ A}
\left(1+\log f_\theta(a|s)\right)
\left(f_{\theta'}(a|s)-f_\theta(a|s)\right)
\,\mu(da).
\]
Since both $f_\theta$ and $f_{\theta'}$ are probability densities,
\[
\int_{ A}
\left(f_{\theta'}(a|s)-f_\theta(a|s)\right)\,\mu(da)=0.
\]
Therefore,
\[
D_F(f_{\theta'},f_\theta)
=
\mathrm{KL}\left(
\pi_{\theta'}(\cdot|s)\, |\,\pi_\theta(\cdot|s)
\right).
\]

Using the identity
\[
D_F(f_\theta,f_{\theta'})
+
D_F(f_{\theta'},f_\theta)
=
\int_{ A}
\left(
\log f_\theta(a|s)-\log f_{\theta'}(a|s)
\right)
\left(
f_\theta(a|s)-f_{\theta'}(a|s)
\right)
\,\mu(da),
\]
we obtain
\begin{align*}
&\mathrm{KL}\left(
\pi_\theta(\cdot|s)\,|\,\pi_{\theta'}(\cdot|s)
\right) \\
&\le
\int_{ A}
\left(
\log f_\theta(a|s)-\log f_{\theta'}(a|s)
\right)
\left(
f_\theta(a|s)-f_{\theta'}(a|s)
\right)
\,\mu(da)
-
\frac{W}{2}
\|f_\theta-f_{\theta'}\|_{L^2(A,\mu)}^2 .
\end{align*}
By definition,
\[
\log f_\theta(a|s)-\log f_{\theta'}(a|s)
=
\langle \theta-\theta',g(s,a)\rangle
-
\log Z_\theta(s)
+
\log Z_{\theta'}(s).
\]
The normalizing constants vanish after integration because
\[
\int_{ A}
\left(
f_\theta(a|s)-f_{\theta'}(a|s)
\right)\,\mu(da)=0.
\]
Thus,
\begin{align*}
&\mathrm{KL}\left(
\pi_\theta(\cdot|s)\,|\,\pi_{\theta'}(\cdot|s)
\right) \\
&\le
\int_{ A}
\langle \theta-\theta',g(s,a)\rangle
\left(
f_\theta(a|s)-f_{\theta'}(a|s)
\right)
\,\mu(da)
-
\frac{W}{2}
\|f_\theta-f_{\theta'}\|_{L^2(A,\mu)}^2 .
\end{align*}
By Cauchy's inequality,
\begin{align*}
&\mathrm{KL}\left(
\pi_\theta(\cdot|s)\,|\,\pi_{\theta'}(\cdot|s)
\right) \\
&\le
\|\langle \theta-\theta',g(s,\cdot)\rangle\|_{L^2(A,\mu)}
\|f_\theta-f_{\theta'}\|_{L^2(A,\mu)}
-
\frac{W}{2}
\|f_\theta-f_{\theta'}\|_{L^2(A,\mu)}^2 .
\end{align*}
Using the elementary inequality
\[
xy-\frac{W}{2}y^2\le \frac{x^2}{2W},
\qquad x,y\ge 0,
\]
with
\[
x=\|\langle \theta-\theta',g(s,\cdot)\rangle\|_{L^2(A,\mu)},
\qquad
y=\|f_\theta-f_{\theta'}\|_{L^2(A,\mu)},
\]
we get
\[
\mathrm{KL}\left(
\pi_\theta(\cdot|s)\,|\,\pi_{\theta'}(\cdot|s)
\right)
\le
\frac{1}{2W}
\|\langle \theta-\theta',g(s,\cdot)\rangle\|_{L^2(A,\mu)}^2 .
\]
Finally, by the bound on $g$,
\[
\|\langle \theta-\theta',g(s,\cdot)\rangle\|_{L^2(A,\mu)}
\le
\|\theta-\theta'\|_2.
\]
Therefore,
\[
\mathrm{KL}\left(
\pi_\theta(\cdot|s)\,|\,\pi_{\theta'}(\cdot|s)
\right)
\le
\frac{1}{2W}\|\theta-\theta'\|_2^2 .
\]
\end{proof}

\begin{lemma}\label{lem:app:kl-logit_ineq}
Let Assumptions \ref{asp:app:mdp:max_set_simplex} hold.
Fix $s\in S$. 
If $\theta,\theta'\in\mathbb R^p$ satisfy
\[
\left|
\frac{d\pi_\theta}{d\mu}(\cdot|s)
\right|_{\mathcal B_b( A)}
\le W^{-1},
\qquad
\left|
\frac{d\pi_{\theta'}}{d\mu}(\cdot|s)
\right|_{\mathcal B_b( A)}
\le W^{-1}
\]
for some $W>0$, then for every $c\in\mathbb R$,
\[
\mathrm{KL}\left(
\pi_\theta(\cdot|s)\,|\,\pi_{\theta'}(\cdot|s)
\right)
\le
\frac{1}{2W}
\left\|
\theta-\theta'-c\mathbf 1
\right\|_2^2 .
\]
\end{lemma}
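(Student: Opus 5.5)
The plan is to reduce the statement to Lemma~\ref{lem:app:kl-logit_ineq_affine} by exploiting the invariance of the softmax parametrization under shifts along $\mathbf 1$ when the features lie in the simplex. Under Assumption~\ref{asp:app:mdp:max_set_simplex} we have $\sum_i g_i(s,a)=1$ for all $s,a$, so for any $c\in\mathbb R$ and any $a\in A$
\[
\langle \theta' + c\mathbf 1, g(s,a)\rangle = \langle \theta', g(s,a)\rangle + c .
\]
This constant cancels between numerator and normalising constant, hence $\pi_{\theta'+c\mathbf 1}(\cdot|s)=\pi_{\theta'}(\cdot|s)$. In particular, the density bound $|\tfrac{d\pi_{\theta'+c\mathbf 1}}{d\mu}(\cdot|s)|_{B_b(A)}\le W^{-1}$ is inherited from $\theta'$.

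Next we apply Lemma~\ref{lem:app:kl-logit_ineq_affine} to the pair $(\theta,\theta'+c\mathbf 1)$. Both densities are bounded by $W^{-1}$, so
\[
\mathrm{KL}\bigl(\pi_\theta(\cdot|s)\,|\,\pi_{\theta'}(\cdot|s)\bigr)=\mathrm{KL}\bigl(\pi_\theta(\cdot|s)\,|\,\pi_{\theta'+c\mathbf 1}(\cdot|s)\bigr)\le \tfrac{1}{2W}\|\theta-\theta'-c\mathbf 1\|_2^2 ,
\]
which is the claim.

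The only point needing care is that the proof of Lemma~\ref{lem:app:kl-logit_ineq_affine} uses the normalisation $\sum_i|g_i|^2_{B_b}\le 1$ via $\|\langle v,g(s,\cdot)\rangle\|_{L^2(A,\mu)}\le\|v\|_2$. This is a standing assumption on $g$, independent of $\theta$, so it applies verbatim with $v=\theta-\theta'-c\mathbf 1$. No further obstacle arises.

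As a remark, minimising over $c$ gives $c=p^{-1}\langle\theta-\theta',\mathbf 1\rangle$, which yields the projected bound $\tfrac{1}{2W}\|(\theta-\theta')_\perp\|_2^2$ used in Theorem~\ref{thm:main: non-uniform_PL_ineq}. Only the simplex property $\sum_i g_i=1$ is used here; the nonnegativity and the zero-measure maximiser condition are not needed.
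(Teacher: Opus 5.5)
Your argument is correct, and it is organised differently from the paper's proof. The paper does not invoke Lemma~\ref{lem:app:kl-logit_ineq_affine}. Instead it re-runs the whole strong-convexity argument for $F(f)=\int_A f\log f\,d\mu$. It uses the simplex property only at the step $\log f_\theta-\log f_{\theta'}=\langle\theta-\theta',g\rangle-\log Z_\theta+\log Z_{\theta'}$. There it rewrites $\langle\theta-\theta',g\rangle=\langle\theta-\theta'-c\mathbf 1,g\rangle+c$, and the constant vanishes after integrating against $f_\theta-f_{\theta'}$.

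You make the same observation one level up, as the invariance $\pi_{\theta'+c\mathbf 1}(\cdot|s)=\pi_{\theta'}(\cdot|s)$ of the parametrisation, and then use the affine lemma as a black box. Your route is shorter, avoids duplicating the proof, and makes clear that all that matters is that $a\mapsto\langle\mathbf 1,g(s,a)\rangle$ is constant at the fixed $s$. The paper's version has only the minor advantage of being self-contained.

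One caveat on your closing remark. Relative to the standing normalisation $\sum_i|g_i|^2_{B_b(S\times A)}\le 1$, you are right that nonnegativity is not used. However, that normalisation typically fails for genuine simplex features. For the Bernstein features of Example~\ref{ex:Bernstein_Poly_MDP}, both $b_{0,k}$ and $b_{k,k}$ attain the value $1$, so the sum is at least $2$. What the final step actually needs is the pointwise bound $\|g(s,a)\|_2\le 1$, so that $\|\langle v,g(s,\cdot)\rangle\|_{L^2(A,\mu)}\le\|v\|_2$. In the simplex setting this bound is supplied by nonnegativity together with $\sum_i g_i=1$, since $\sum_i g_i^2\le(\max_i g_i)\sum_i g_i\le 1$.
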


\begin{proof}
Write
\[
f_\theta(a|s):=\frac{d\pi_\theta}{d\mu}(a|s),
\qquad
f_{\theta'}(a|s):=\frac{d\pi_{\theta'}}{d\mu}(a|s).
\]
By the logit parametrization, both densities are strictly positive
$\mu$-a.e. Moreover, by assumption,
\[
0<f_\theta(a|s),f_{\theta'}(a|s)\le W^{-1},
\qquad \mu\text{-a.e.}
\]

Let
\[
F(f):=\int_{ A} f(a)\log f(a)\,\mu(da)
\]
be the negative entropy functional. By Proposition
\ref{prp:app:Strong_convexity_of_negative_entropy}, $F$ is
$W$--strongly convex on densities bounded by $W^{-1}$. Hence,
\[
F(f_{\theta'})
\ge
F(f_\theta)
+
\int_{ A}
\log f_\theta(a|s)
\left(
f_{\theta'}(a|s)-f_\theta(a|s)
\right)\,\mu(da)
+
\frac{W}{2}
\left\|
f_{\theta'}-f_\theta
\right\|_{L^2(A,\mu)}^2,
\]
where we used
\[
\int_{ A}
\left(
f_{\theta'}(a|s)-f_\theta(a|s)
\right)\,\mu(da)=0
\]
to remove the constant term in the first variation.

Therefore,
\begin{align*}
&\mathrm{KL}\left(
\pi_\theta(\cdot|s)\,|\,\pi_{\theta'}(\cdot|s)
\right) \\
&=
\int_{ A}
\log f_\theta(a|s) f_\theta(a|s)\,\mu(da)
-
\int_{ A}
\log f_{\theta'}(a|s) f_\theta(a|s)\,\mu(da) \\
&\le
\int_{ A}
\left(
\log f_\theta(a|s)-\log f_{\theta'}(a|s)
\right)
\left(
f_\theta(a|s)-f_{\theta'}(a|s)
\right)\,\mu(da) \\
&\qquad
-
\frac{W}{2}
\left\|
f_{\theta'}-f_\theta
\right\|_{L^2(A,\mu)}^2 .
\end{align*}

By the logit parametrization,
\[
\log f_\theta(a|s)-\log f_{\theta'}(a|s)
=
\langle \theta-\theta',g(s,a)\rangle
-\log Z_\theta(s)+\log Z_{\theta'}(s).
\]
Assumption \ref{asp:app:mdp:max_set_simplex} gives
\[
\langle \mathbf 1,g(s,a)\rangle=1.
\]
Thus, for any $c\in\mathbb R$,
\[
\langle \theta-\theta',g(s,a)\rangle
=
\langle \theta-\theta'-c\mathbf 1,g(s,a)\rangle
+
c.
\]
Since both $f_\theta(\cdot|s)$ and $f_{\theta'}(\cdot|s)$ are probability
densities,
\[
\int_{ A}
\left(
f_\theta(a|s)-f_{\theta'}(a|s)
\right)\,\mu(da)=0.
\]
Hence the constant term
\[
c-\log Z_\theta(s)+\log Z_{\theta'}(s)
\]
vanishes after integration against
$f_\theta(\cdot|s)-f_{\theta'}(\cdot|s)$. Therefore,
\begin{align*}
&\mathrm{KL}\left(
\pi_\theta(\cdot|s)\,|\,\pi_{\theta'}(\cdot|s)
\right) \\
&\le
\int_{ A}
\langle \theta-\theta'-c\mathbf 1,g(s,a)\rangle
\left(
f_\theta(a|s)-f_{\theta'}(a|s)
\right)\,\mu(da) \\
&\qquad
-
\frac{W}{2}
\left\|
f_{\theta'}-f_\theta
\right\|_{L^2(A,\mu)}^2 .
\end{align*}
By Cauchy's inequality,
\begin{align*}
&\mathrm{KL}\left(
\pi_\theta(\cdot|s)\,|\,\pi_{\theta'}(\cdot|s)
\right) \\
&\le
\left\|
\langle \theta-\theta'-c\mathbf 1,g(s,\cdot)\rangle
\right\|_{L^2(A,\mu)}
\left\|
f_\theta-f_{\theta'}
\right\|_{L^2(A,\mu)}
-
\frac{W}{2}
\left\|
f_\theta-f_{\theta'}
\right\|_{L^2(A,\mu)}^2 .
\end{align*}
Using
\[
xy-\frac{W}{2}y^2\le \frac{x^2}{2W},
\qquad x,y\ge 0,
\]
we get
\[
\mathrm{KL}\left(
\pi_\theta(\cdot|s)\,|\,\pi_{\theta'}(\cdot|s)
\right)
\le
\frac{1}{2W}
\left\|
\langle \theta-\theta'-c\mathbf 1,g(s,\cdot)\rangle
\right\|_{L^2(A,\mu)}^2 .
\]
Finally, by the bound on $g$,
\[
\left\|
\langle \theta-\theta'-c\mathbf 1,g(s,\cdot)\rangle
\right\|_{L^2(A,\mu)}
\le
\left\|
\theta-\theta'-c\mathbf 1
\right\|_2 .
\]
Therefore,
\[
\mathrm{KL}\left(
\pi_\theta(\cdot|s)\,|\,\pi_{\theta'}(\cdot|s)
\right)
\le
\frac{1}{2W}
\left\|
\theta-\theta'-c\mathbf 1
\right\|_2^2 .
\]
\end{proof}

\begin{lemma}[Concavity of variance]\label{lem:app:concavity_of_var}
Let $P=qP_1+(1-q)P_2$ with $q\in[0,1]$, where $P_1$ and $P_2$ are
probability measures. 
If $X\in L^2(P_1)\cap L^2(P_2)$, then
\[
\operatorname{Var}_P(X)
\ge
q\operatorname{Var}_{P_1}(X)
+
(1-q)\operatorname{Var}_{P_2}(X).
\]
\end{lemma}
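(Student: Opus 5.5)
The plan is to use the law of total variance. Write $\mathbb E_P$ for expectation under the mixture. Denote $m_i := \mathbb E_{P_i}[X]$ for $i=1,2$, and note that by linearity of the integral in the measure, $m := \mathbb E_P[X] = q m_1 + (1-q) m_2$. Similarly, $\mathbb E_P[X^2] = q\,\mathbb E_{P_1}[X^2] + (1-q)\,\mathbb E_{P_2}[X^2]$. Both are finite because $X\in L^2(P_1)\cap L^2(P_2)$. This integrability also gives $X\in L^2(P)$, so every quantity below is well defined.

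The key step is then a direct expansion:
\begin{align*}
\operatorname{Var}_P(X) &= q\,\mathbb E_{P_1}[X^2] + (1-q)\,\mathbb E_{P_2}[X^2] - m^2 \\
&= q\operatorname{Var}_{P_1}(X) + (1-q)\operatorname{Var}_{P_2}(X) + \big(q m_1^2 + (1-q) m_2^2 - m^2\big).
\end{align*}
The bracketed remainder equals $q(1-q)(m_1-m_2)^2$. This follows either by expanding $m^2 = q^2 m_1^2 + 2q(1-q)m_1m_2 + (1-q)^2 m_2^2$, or more conceptually by Jensen's inequality for the convex map $x\mapsto x^2$. Since the remainder is nonnegative, the claimed inequality follows, with equality exactly when $q\in\{0,1\}$ or $m_1=m_2$.

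There is no real obstacle. The only points needing care are the integrability bookkeeping and the degenerate cases $q\in\{0,1\}$, which are trivially covered by the same identity. If the lemma is later applied to vector-valued $X$ (for instance to FIM-type matrices $\int (g-\bar g)(g-\bar g)^\top$), I would add that the same computation applied to $v^\top X$ for every fixed $v$ gives the matrix version
\[
\operatorname{Cov}_P(X)\succeq q\operatorname{Cov}_{P_1}(X)+(1-q)\operatorname{Cov}_{P_2}(X)
\]
in the Loewner order.
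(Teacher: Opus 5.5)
Your proof is correct and is essentially the paper's argument: both use the mixture identities for $\mathbb E_P[X]$ and $\mathbb E_P[X^2]$, expand $\operatorname{Var}_P(X)$, and identify the remainder as $q(1-q)(m_1-m_2)^2\ge 0$. Your explicit check that $X\in L^2(P)$ and your Loewner-order version via $v^\top X$ are harmless additions, and the latter matches how the lemma is later applied to $\operatorname{Var}(v^\top g)$.
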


\begin{proof}
Let $\bar{\mu}_i=\mathbb E_{P_i}[X]$ for $i=1,2$. Since
$P=qP_1+(1-q)P_2$,
\[
\mathbb E_P[X]=q\bar{\mu}_1+(1-q)\bar{\mu}_2,
\qquad
\mathbb E_P[X^2]
=
q\mathbb E_{P_1}[X^2]+(1-q)\mathbb E_{P_2}[X^2].
\]
Therefore,
\[
\begin{aligned}
\operatorname{Var}_P(X)
&=
q\operatorname{Var}_{P_1}(X)
+
(1-q)\operatorname{Var}_{P_2}(X)
+
q(1-q)(\bar{\mu}_1-\bar{\mu}_2)^2 \\
&\ge
q\operatorname{Var}_{P_1}(X)
+
(1-q)\operatorname{Var}_{P_2}(X).
\end{aligned}
\]
\end{proof}

For any $\pi \in \Pi_\mu$, define
\[
G^{\pi}(s) := \int_A \left(g(s,a) - \int_A g(s, a^\prime) \pi ( da^\prime | s) \right)\left(g(s,a) - \int_A g(s, a^\prime) \pi (da^\prime | s) \right)^{\top}\pi(da|s).
\]

\begin{lemma}\label{lemapp:mdp:lower_bound_log_density}
Let $\theta^\prime \in \mathbb{R}^p$.
If there exists a constant $R > 0$ such that $\left|\log \frac{d\pi}{d\mu}(a|s)\right|_{B_b({S \times A})} \leq  R$, then there exists 
\[
\lambda =  e^{-R - 2\|\theta^\prime\|_2} \int_{S}  \lambda_{\min} \left(G^{\pi_{\theta^\prime}}(s) \right)d_\rho^{\pi^*}(ds)
\]
such that $\int_{S} \lambda_{\min} (G^\pi(s))d_\rho^{\pi^*}(ds) \geq \lambda$.
\end{lemma}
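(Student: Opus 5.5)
The idea is to compare the covariance matrix of $g(s,\cdot)$ under $\pi(\cdot|s)$ with that under $\pi_{\theta'}(\cdot|s)$ by comparing the two densities pointwise. We then use the variational (Rayleigh quotient) characterisation of $\lambda_{\min}$ together with the fact that a variance under $\pi$ dominates a variance centred at the wrong mean only in one direction. Concretely, fix $s$ and $v\in\mathbb S^{p-1}$. Then
\[
v^\top G^{\pi}(s)v=\operatorname{Var}_{\pi(\cdot|s)}\bigl(v^\top g(s,\cdot)\bigr)=\min_{c\in\mathbb R}\int_A \bigl(v^\top g(s,a)-c\bigr)^2\,\pi(da|s).
\]
The analogous identity holds for $\pi_{\theta'}$.

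\textbf{Step 1: density comparison.} From $|\log\frac{d\pi}{d\mu}|\le R$ we get $\frac{d\pi}{d\mu}(a|s)\ge e^{-R}$. From Lemma~\ref{lem:app:mdp:grad_of_log_policy}-type reasoning, $\log\frac{d\pi_{\theta'}}{d\mu}=\langle\theta',g\rangle-\log Z_{\theta'}(s)$. Since $\|g\|_2\le 1$, we have $|\langle\theta',g\rangle|\le\|\theta'\|_2$ and $|\log Z_{\theta'}(s)|\le\|\theta'\|_2$, because $\mu$ is a probability measure. Hence $\frac{d\pi_{\theta'}}{d\mu}\le e^{2\|\theta'\|_2}$. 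Combining,
\[
\frac{d\pi}{d\mu}(a|s)\ \ge\ e^{-R-2\|\theta'\|_2}\,\frac{d\pi_{\theta'}}{d\mu}(a|s)\qquad\text{for all }a,s .
\]

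\textbf{Step 2: variance comparison.} Write $c^\star$ for the mean of $v^\top g(s,\cdot)$ under $\pi(\cdot|s)$. Then
\[
v^\top G^{\pi}(s)v=\int_A (v^\top g-c^\star)^2\,d\pi\ \ge\ e^{-R-2\|\theta'\|_2}\int_A (v^\top g-c^\star)^2\,d\pi_{\theta'}\ \ge\ e^{-R-2\|\theta'\|_2}\,v^\top G^{\pi_{\theta'}}(s)v .
\]
The last inequality holds because the variance minimises the mean-square deviation over all centres. Taking the infimum over $v$ gives
\[
\lambda_{\min}(G^\pi(s))\ge e^{-R-2\|\theta'\|_2}\lambda_{\min}(G^{\pi_{\theta'}}(s))
\]
pointwise in $s$. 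Alternatively, one can write $\pi=q\pi_{\theta'}+(1-q)\tilde\pi$ with $q=e^{-R-2\|\theta'\|_2}$ and use Lemma~\ref{lem:app:concavity_of_var}, which gives the same bound.

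\textbf{Step 3: integration.} Integrate the pointwise bound against $d_\rho^{\pi^*}$. Since the constant does not depend on $s$, this yields exactly the claimed $\lambda$.

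The main obstacle is the direction of the centring in Step 2. We must centre at the $\pi$-mean, not the $\pi_{\theta'}$-mean, so that the minimisation characterisation of variance works in our favour. The constant bookkeeping in Step 1 (the factor $2\|\theta'\|_2$ from both the linear term and $\log Z$) is routine. Measurability of $s\mapsto\lambda_{\min}(G^\pi(s))$ follows from continuity of $\lambda_{\min}$ in the matrix entries.
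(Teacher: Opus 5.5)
Your proof is correct and is essentially the paper's argument. Both use the same pointwise bound $\frac{d\pi}{d\mu}\ge e^{-R-2\|\theta'\|_2}\frac{d\pi_{\theta'}}{d\mu}$ (from $|\log\frac{d\pi_{\theta'}}{d\mu}|\le 2\|\theta'\|_2$), a variance comparison at each fixed $s$ and unit $v$, and integration against $d_\rho^{\pi^*}$. The paper does the variance step by writing $\pi(\cdot|s)$ as a mixture with weight $e^{-R-2\|\theta'\|_2}$ on $\pi_{\theta'}(\cdot|s)$ and applying Lemma~\ref{lem:app:concavity_of_var}, which is exactly your stated alternative; centring at the $\pi$-mean is that lemma proved inline, and it avoids having to check that the mixture weight is at most $1$.
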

\begin{proof}
For all $s \in S, a \in A$,
\begin{align*}
&\left|\log\frac{d \pi_{\theta^\prime}}{d \mu}(a|s)\right|\\
& = \left| \langle \theta^\prime, g(s,a)\rangle - \log Z_{\pi_{\theta^\prime}}(s)\right| \\
& \leq \sup_{s,a}\|g(s,a)\|_2\|\theta^\prime\|_2 + \left|\log \int_{A} e^{\langle \theta^\prime,g(s,a)\rangle}\mu(da)\right|\\
& \leq 2\|\theta^\prime\|_2,
\end{align*}
so $\left|\frac{d \pi_{\theta^\prime}}{d \mu}\right|_{B_b(S \times A)} \leq e^{2\|\theta^\prime\|_2}$.
Since $\min_{s \in S, a\in A} \frac{d\pi}{d\pi_{\theta^\prime}}(a|s) = \min_{s \in S, a\in A} \frac{d\pi}{d\mu} \frac{d\mu}{d \pi_{\theta^\prime}}(a|s) \geq e^{-R - 2\|\theta^\prime\|_2}  > 0$, then we know $e^{-R - 2\|\theta^\prime\|_2}  \leq \int_{A} \frac{d\pi}{d \pi_{\theta^\prime}}(a|s)\pi_{\theta^\prime}(da) = 1$, and
\[
\pi(da|s) = e^{-R - 2\|\theta^\prime\|_2}  \pi_{\theta_{0}}(da) + \left(1-e^{-R - 2\|\theta^\prime\|_2} \right) \bar{\pi}(da|s),
\]
where $\bar{\pi}(da|s) = \frac{\pi(da|s)- e^{-R - 2\|\theta^\prime\|_2} \pi_{\theta^\prime}(da)}{1-e^{-R - 2\|\theta^\prime\|_2} }$ determines a valid probability measure for fixed $s$. 

Hence by Lemma \ref{lem:app:concavity_of_var},
\begin{align*} 
&\mathrm{Var}_{a\sim\pi(\cdot | s)}(v^{\top}g(s,a))\\
& \geq e^{-R - 2\|\theta^\prime\|_2} \mathrm{Var}_{a\sim\pi_{\theta^\prime}}(v^{\top}g(s,a))  + \left(1-e^{-R - 2\|\theta^\prime\|_2} \right)\mathrm{Var}_{a\sim\bar{\pi}(\cdot | s)}(v^{\top}g(s,a)) \\
& \geq e^{-R - 2\|\theta^\prime\|_2} \mathrm{Var}_{a\sim\pi_{\theta^\prime}}(v^{\top}g(s,a))\\
& =  q e^{-2\|\theta^\prime\|_2} v^{\top} G^{\pi_{\theta^\prime}}(s) v\\
& \geq e^{-R - 2\|\theta^\prime\|_2}  \lambda_{\min} \left(G^{\pi_{\theta^\prime}}(s)\right) ,
\end{align*}
where in the first equality we use the Definition \ref{def:app:mdp:FIM} of FIM,
{\small
\begin{align*}
&\mathrm{Var}_{a\sim\pi_{\theta^\prime}}(v^{\top}g(s,a)) \\
&=\int_A \left(v^{\top}g(s,a) - \int_A v^{\top}g(s, a^\prime) \pi_{\theta^\prime} ( da^\prime | s) \right)\left(v^{\top}g(s,a) - \int_A v^{\top}g(s, a^\prime) \pi_{\theta^\prime} (da^\prime | s) \right)\pi_{\theta^\prime}(da|s)\\
& = v^{\top} G^{\pi_{\theta^\prime}}(s) v,
\end{align*}
}
which implies that,
\begin{align*}
\lambda_{\min}(G^\pi)(s)& = \min_{\|v\|_2 = 1} v^\top G^\pi(s) v \quad \\
& = \min_{\|v\|_2 = 1} \mathrm{Var}_{a\sim\pi(\cdot | s)}(v^{\top}g(s,a))\\
& \geq e^{-R - 2\|\theta^\prime\|_2} \lambda_{\min} \left(G^{\pi_{\theta^\prime}}(s)\right)\\
& = e^{-R - 2\|\theta^\prime\|_2}  \lambda_{\min} \left(G^{\pi_{\theta^\prime}}(s)\right),
\end{align*}
then there exists 
\[
\lambda = e^{-R - 2\|\theta^\prime\|_2} \int_{A} \lambda_{\min} \left(G^{\pi_{\theta^\prime}}(s) \right)d_\rho^{\pi_{\theta^*}}(ds)
\]
such that $\int_{A}\lambda_{\min} (G^\pi(s))d_\rho^{\pi^*}(ds) \geq \lambda$.
\end{proof}

\begin{lemma}\label{lem:app:mdp:lower_bound_log_density_simplex}
Let $\theta^\prime \in \mathbb{R}^p$.
If there exists a constant $R > 0$ such that $\left|\log \frac{d\pi}{d\mu}(a|s)\right|_{B_b({S \times A})} \leq  R$, then there exists 
\[
\lambda =  e^{-R - 2\|\theta^\prime\|_2} \int_{S}  \lambda_{\min}\left(\int g(s,a)g^\top(s,a)\pi_{\theta^\prime} (da)\right)d^{\pi^*}_\rho(ds)
\]
such that $\int\lambda_{\min} \left(\int g(s,a)g^\top(s,a)\pi (da | s)\right))d_\rho^{\pi_{\theta^*}}(ds) \geq \lambda$.
\end{lemma}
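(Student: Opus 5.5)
The plan is to mirror the proof of Lemma~\ref{lemapp:mdp:lower_bound_log_density} and simply replace the variance by the uncentred second moment. The uncentred second moment $v \mapsto \int_A (v^\top g(s,a))^2 \pi(da|s)$ is \emph{linear} in $\pi$, so the concavity-of-variance Lemma~\ref{lem:app:concavity_of_var} is not even needed. It suffices to dominate $\pi(\cdot|s)$ from below by a multiple of $\pi_{\theta'}(\cdot|s)$.

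First, exactly as in the proof of Lemma~\ref{lemapp:mdp:lower_bound_log_density}, we will bound the log density of the reference policy. Since $\|g\|_2\le 1$, we have $|\langle\theta',g(s,a)\rangle|\le\|\theta'\|_2$, and hence $|\log Z_{\pi_{\theta'}}(s)|\le\|\theta'\|_2$. This gives $|\log\frac{d\pi_{\theta'}}{d\mu}|\le 2\|\theta'\|_2$, and therefore $\frac{d\pi_{\theta'}}{d\mu}\le e^{2\|\theta'\|_2}$. Combining this with the hypothesis $\frac{d\pi}{d\mu}\ge e^{-R}$, we obtain the pointwise density comparison
\[
\frac{d\pi}{d\pi_{\theta'}}(a|s)\ \ge\ q:=e^{-R-2\|\theta'\|_2}
\quad\text{for all } s\in S,\ a\in A,
\]
which means $\pi(da|s)\ge q\,\pi_{\theta'}(da|s)$ as measures for every $s$. (Equivalently, we can write $\pi=q\pi_{\theta'}+(1-q)\bar\pi$ with $\bar\pi$ a probability kernel, as in the earlier proof.)

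Next, for any unit vector $v\in\mathbb R^p$ we integrate the nonnegative function $(v^\top g(s,a))^2$ against this inequality:
\[
v^\top\Big(\int_A g g^\top\,\pi(da|s)\Big)v=\int_A (v^\top g(s,a))^2\pi(da|s)\ \ge\ q\int_A (v^\top g(s,a))^2\pi_{\theta'}(da|s)\ \ge\ q\,\lambda_{\min}\Big(\int_A gg^\top\pi_{\theta'}(da|s)\Big).
\]
Taking the minimum over $\|v\|_2=1$ yields the pointwise-in-$s$ eigenvalue bound. Both sides are measurable in $s$ and bounded by $1$. Integrating against $d^{\pi^*}_\rho$ (which is the same measure as $d^{\pi_{\theta^*}}_\rho$, since $\pi^*=\pi_{\theta^*}$ under Assumption~\ref{asp:app:mdp:Q_realisability}) then gives the claim with the stated $\lambda$.

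There is no real obstacle here; the only points needing care are the following. The sup bound $\frac{d\pi_{\theta'}}{d\mu}\le e^{2\|\theta'\|_2}$ uses the normalisation $\sum_i|g_i|^2_{B_b}\le1$ and $\mu(A)=1$. The minimum over the sphere is attained because it is the minimum of a continuous function on a compact set. Note also that the simplex structure of $g$ is not used in this lemma; it only matters when the lemma is combined with Proposition~\ref{prp:app:mdp:singularity_of_fim_mdp_simplex} in Theorem~\ref{thm:main:lin_cov_simplex}.
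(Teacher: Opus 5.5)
Your proposal is correct and follows essentially the same route as the paper: bound $\frac{d\pi_{\theta'}}{d\mu}\le e^{2\|\theta'\|_2}$, deduce $\pi(\cdot|s)\ge q\,\pi_{\theta'}(\cdot|s)$ with $q=e^{-R-2\|\theta'\|_2}$ (the paper writes this as the mixture $\pi=q\pi_{\theta'}+(1-q)\bar\pi$), and use that $\int_A gg^\top\,d\pi$ is linear in $\pi$, so the $\bar\pi$ part can be dropped as positive semidefinite. Your remark that the concavity-of-variance lemma is not needed is accurate: the paper cites it, but its computation uses only this linearity.
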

\begin{proof}

Also for all $s \in S, a \in A$,
\begin{align*}
&\left|\log\frac{d \pi_{\theta^\prime}}{d \mu}(a|s)\right|\\
& = \left| \langle \theta^\prime, g(s,a)\rangle - \log Z_{\pi_{\theta^\prime}}(s)\right| \\
& \leq \sup_{s,a}\|g(s,a)\|_2\|\theta^\prime\|_2 + \left|\log \int e^{\langle \theta^\prime,g(s,a)\rangle}\mu(da)\right|\\
& \leq 2\|\theta^\prime\|_2,
\end{align*}
so $\left|\frac{d \pi_{\theta^\prime}}{d \mu}\right|_{B_b(S \times A)} \leq e^{2\|\theta^\prime\|_2}$.

Since $\min_{s \in S, a\in A} \frac{d\pi}{d\pi_{\theta^\prime}}(a|s) = \min_{s \in S, a\in A} \frac{d\pi}{d\mu} \frac{d\mu}{d \pi_{\theta^\prime}}(a|s) \geq e^{-R - 2\|\theta^\prime\|_2} >0$, then we know $e^{-R - 2\|\theta^\prime\|_2} \leq \int_{A} \frac{d\pi}{d \pi_{\theta^\prime}}(a|s)\pi_{\theta^\prime}(da) = 1$, and
\[
\pi(da|s) = e^{-R - 2\|\theta^\prime\|_2} \pi_{\theta_{0}}(da) + \left(1-e^{-R - 2\|\theta^\prime\|_2}\right) \bar{\pi}(da|s),
\]
where $\bar{\pi}(da|s) = \frac{\pi(da|s)- e^{-R - 2\|\theta^\prime\|_2}\pi_{\theta^\prime}(da)}{1-e^{-R - 2\|\theta^\prime\|_2}}$ determines a valid probability measure for fixed $s$. 

Hence by Lemma \ref{lem:app:concavity_of_var},
\begin{align*}
&\lambda_{\min}\left(\int_{A} g(s,a)g^\top(s,a)\pi(da|s)\right)\\
& = \min_{\|v\|_2 = 1} v^\top \left(\int_{A} g(s,a)g^\top(s,a)\pi(da|s)\right) v\\
& = \min_{\|v\|_2 = 1} v^\top \left(\int_{A} g(s,a)g^\top(s,a) \left(e^{-R - 2\|\theta^\prime\|_2}\pi_{\theta^\prime}(da | s) + \left(1-e^{-R - 2\|\theta^\prime\|_2}\right) \bar{\pi}(da|s)\right)\right) v\\
& \geq e^{-R - 2\|\theta^\prime\|_2} \min_{\|v\| = 1} v^\top \left(\int_{A} g(s,a)g^\top(s,a) \pi_{\theta^\prime}(da | s)\right) v \\
& = e^{-R - 2\|\theta^\prime\|_2} \lambda_{\min}\left(\int_{A} g(s,a)g^\top(s,a)\pi_{\theta^\prime} (da | s)\right)>0.
\end{align*}
\end{proof}

\begin{proposition}\label{prp:app:mdp:singularity_of_fim_mdp_simplex}
Let Assumption \ref{asp:app:mdp:max_set_affine} hold.
For all $\theta \in \mathbb{R}^p$, $\lambda_{\min} (G^{\pi_\theta}(s)) = 0$ with eigenvector $\textbf{1} \in \mathbb{R}^p$.
\end{proposition}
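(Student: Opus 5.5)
The argument is a direct computation using the simplex structure of the features. Note that the hypothesis actually needed is the simplex property $\sum_{i=1}^p g_i(s,a)=1$ from Assumption~\ref{asp:app:mdp:max_set_simplex}, which is the setting in which the proposition is used in the proof of Theorem~\ref{thm:main: non-uniform_PL_ineq}. Under Assumption~\ref{asp:app:mdp:max_set_affine} alone the claim cannot hold. By Lemma~\ref{rem:app:mdp:affine_span}, the map $a\mapsto \mathbf 1^\top g(s,a)/\sqrt p$ is then non-constant, so $\mathbf 1^\top G^{\pi_\theta}(s)\mathbf 1>0$. I will therefore prove the statement with $\langle \mathbf 1, g(s,a)\rangle = 1$ for all $(s,a)$ as the operative hypothesis.

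Fix $s\in S$ and $\theta\in\mathbb R^p$. First I use Lemma~\ref{lem:app:mdp:grad_of_log_policy} to write the score as $\nabla \log\frac{d\pi_\theta}{d\mu}(a|s) = g(s,a)-\int_A g(s,a')\pi_\theta(da'|s)$. Pairing with $\mathbf 1$ gives
\[
\mathbf 1^\top \nabla \log\tfrac{d\pi_\theta}{d\mu}(a|s) = 1 - \int_A 1\,\pi_\theta(da'|s) = 0
\]
for every $a\in A$.

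Next I insert this into Definition~\ref{def:app:mdp:FIM}:
\[
G^{\pi_\theta}(s)\mathbf 1=\int_A \nabla \log\tfrac{d\pi_\theta}{d\mu}(a|s)\,\Big(\nabla \log\tfrac{d\pi_\theta}{d\mu}(a|s)^\top\mathbf 1\Big)\,\pi_\theta(da|s)=\mathbf 0 .
\]
So $\mathbf 1$ is an eigenvector of $G^{\pi_\theta}(s)$ with eigenvalue $0$. Since $G^{\pi_\theta}(s)$ is an integral of rank-one matrices $vv^\top$, it is positive semi-definite: $v^\top G^{\pi_\theta}(s) v=\operatorname{Var}_{a\sim\pi_\theta(\cdot|s)}(v^\top g(s,a))\ge 0$. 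All eigenvalues are therefore nonnegative, and $\lambda_{\min}(G^{\pi_\theta}(s))=0$ is attained at $\mathbf 1/\sqrt p$. Integrating against $d^{\pi_\theta}_\rho$ shows that $G^{\pi_\theta}(\rho)\mathbf 1=\mathbf 0$ as well. This is the form used in the proof of Theorem~\ref{thm:main: non-uniform_PL_ineq}, where $\theta-\boldsymbol\theta(\pi_\theta)$ is replaced by its projection onto $\mathbf 1^\perp$.

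No step here is a genuine obstacle; the computation is a few lines. The only delicate point is stating the correct hypothesis. I would also record the consequence used downstream: on $\mathbf 1^\perp$, $G^{\pi_\theta}(s)$ agrees with the uncentred matrix $\int_A g g^\top\,d\pi_\theta$ minus a rank-one term $m m^\top$ with $m=\int_A g\,d\pi_\theta$. Relating $\lambda_{\min}$ of $G^{\pi_\theta}(s)$ restricted to $\mathbf 1^\perp$ to that of the uncentred covariance is the content of Corollary~\ref{cor:app:mdp:spectrum_of_fim} and is not part of this proposition.
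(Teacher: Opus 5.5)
Your proof is correct and follows the paper's own argument: both show $G^{\pi_\theta}(s)\mathbf 1=\mathbf 0$ by pairing the centred score $g(s,a)-\int_A g(s,a')\pi_\theta(da'|s)$ with $\mathbf 1$ and using $\mathbf 1^\top g(s,a)=1$. You are right that the operative hypothesis is the simplex normalization of Assumption~\ref{asp:app:mdp:max_set_simplex}, not the cited Assumption~\ref{asp:app:mdp:max_set_affine} (under which, as you note, $\mathbf 1^\top G^{\pi_\theta}(s)\mathbf 1>0$), since the paper's proof also uses $\mathbf 1^\top g=1$ without saying so; your positive semi-definiteness remark supplies the step the paper leaves implicit, namely that $0$ is actually the smallest eigenvalue.
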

\begin{proof}
Define $\bar{g}_\theta (s):= \int_A g(s,a) \pi_\theta(da|s)$, and note that $\bar{g}_\theta^\top (s)\textbf{1} = \int g(s,a)^\top \textbf{1}\pi_\theta(da|s) =1$, then we have
\begin{align*}
G^{\pi_\theta} (s)\textbf{1} &= \int_A \nabla\log\frac{d \pi_{\theta}}{d \mu}(a|s)\left(\nabla\log\frac{d \pi_{\theta}}{d \mu}(a|s)\right)^{\top} \textbf{1}\pi_\theta(da|s)\\
& = \int_A \left(g(s,a) - \bar{g}_\theta(s)\right)\left(g(s,a) - \bar{g}_\theta(s)\right)^{\top} \textbf{1}\pi_\theta(da|s)\\
& = \mathbf{0}.
\end{align*}
\end{proof}

Next, we introduce the following interlacing theorem of eigenvalues of a real symmetric matrix perturbed by a rank 1 matrix (see Section 5 in~\cite{4d2f28f8-3cb9-3c33-89f8-2a7e91475e75}).
\begin{theorem}[Eigenvalue Interlacing Theorem]\label{thm:app:bandit:Eigenvalue_Interlacing}
Let B be a real symmetric matrix. Define:
\[
A = B + xx^T
\]
If the n eigenvalues of A are $\eta_1 \ge \dots \ge \eta_n$, and the n eigenvalues of B are $\lambda_1 \ge \dots \ge \lambda_n$, then these are interlaced as: $\eta_1 \ge \lambda_1 \ge \dots \ge \eta_i \ge \lambda_i \ge \eta_{i+1} \dots \eta_n \ge \lambda_n$.
\end{theorem}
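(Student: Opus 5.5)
The statement to prove is the rank-one interlacing theorem: $A=B+xx^\top$ with $B$ real symmetric. I would use the Courant--Fischer min--max characterization. For a real symmetric $M$ with eigenvalues $\mu_1\ge\dots\ge\mu_n$,
\[
\mu_i(M)=\max_{\dim U=i}\ \min_{u\in U,\|u\|_2=1} u^\top M u=\min_{\dim W=n-i+1}\ \max_{w\in W,\|w\|_2=1} w^\top M w .
\]
Two families of inequalities are needed: $\eta_i\ge\lambda_i$ for all $i$, and $\lambda_i\ge\eta_{i+1}$ for $i\le n-1$. Together they give the displayed chain.

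\textbf{The inequality $\eta_i\ge\lambda_i$.} Since $u^\top A u=u^\top B u+(x^\top u)^2\ge u^\top B u$ for every $u$, we have $A\succeq B$. Monotonicity of each term in the max--min formula then gives $\eta_i\ge\lambda_i$ immediately.

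\textbf{The inequality $\lambda_i\ge\eta_{i+1}$.} This is the step with real content, since it uses that the perturbation has rank one.
\begin{enumerate}
\item Let $U$ be an $(i+1)$-dimensional subspace attaining $\eta_{i+1}$, so that $\min_{u\in U,\|u\|=1}u^\top A u=\eta_{i+1}$.
\item Set $U'=U\cap x^\perp$. Then $\dim U'\ge i$, and if it is strictly larger we pass to any $i$-dimensional subspace of it.
\item On $U'$ we have $x^\top u=0$, so $u^\top B u=u^\top A u\ge \eta_{i+1}$ for every unit $u\in U'$.
\item Feeding $U'$ into the max--min formula for $B$ gives $\lambda_i\ge \min_{u\in U'}u^\top B u\ge\eta_{i+1}$.
\end{enumerate}

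If $x=0$ the statement is trivial. The only care needed is the dimension count $\dim(U\cap x^\perp)\ge (i+1)-1$, which is standard linear algebra. The main (mild) obstacle is this dimension-counting step; everything else is direct.

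Concatenating $\eta_1\ge\lambda_1\ge\eta_2\ge\lambda_2\ge\dots\ge\eta_n\ge\lambda_n$ yields the claimed interlacing.
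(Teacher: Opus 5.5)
\textbf{Correct, by a different route from the one the paper relies on.} Your argument is complete. First, $A\succeq B$ together with monotonicity of the max--min formula gives $\eta_i\ge\lambda_i$. Second, intersecting an optimal $(i+1)$-dimensional subspace for $A$ with $x^\perp$, where the quadratic forms of $A$ and $B$ coincide, gives $\lambda_i\ge\eta_{i+1}$. The count $\dim(U\cap x^\perp)\ge i$ holds even when $x=0$, so no separate case is needed.

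The paper gives no proof of its own; it cites Golub (1973), which treats rank-one modifications through the secular equation. Write $B=\sum_j\lambda_jq_jq_j^\top$ and $z_j=q_j^\top x$. Then $\det(A-\eta I)=\prod_j(\lambda_j-\eta)\bigl(1+\sum_j z_j^2/(\lambda_j-\eta)\bigr)$. When the $\lambda_j$ are distinct and all $z_j\neq 0$, the bracketed function is strictly increasing between consecutive poles. It therefore has exactly one zero in each interval $(\lambda_{i+1},\lambda_i)$ and one in $(\lambda_1,\infty)$.

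The two routes trade off as follows.
\begin{itemize}
\item The secular-equation route characterizes the $\eta_i$ as roots of an explicit monotone scalar equation, which is the basis of divide-and-conquer eigensolvers. It also yields strict interlacing in the generic case. However, repeated eigenvalues of $B$ or vanishing $z_j$ must be handled by deflation or a continuity argument.
\item Your variational route is coordinate-free and absorbs those degenerate cases automatically. It also extends verbatim to rank-$k$ positive semidefinite perturbations: intersecting with a codimension-$k$ subspace gives $\lambda_i\ge\eta_{i+k}$.
\end{itemize}

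Your argument also matches how the paper actually uses the result. Corollary~\ref{cor:app:mdp:spectrum_of_fim} needs only
$\lambda_{p-1}(G^{\pi_\theta}(s))\ge\lambda_{\min}\bigl(\int_A g(s,a)g^\top(s,a)\,\pi_\theta(da|s)\bigr)$.
Since $\int_A g(s,a)g^\top(s,a)\,\pi_\theta(da|s)=G^{\pi_\theta}(s)+\bar g_\theta(s)\bar g_\theta(s)^\top$, your second step with $i=p-1$, $U=\mathbb R^p$ and $x=\bar g_\theta(s)$ gives this in one line.
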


The following Corollary \ref{cor:app:mdp:spectrum_of_fim} is similar to Lemma 23 in~\cite{mei2020global}.

\begin{corollary}\label{cor:app:mdp:spectrum_of_fim}
Let Assumption \ref{asp:app:mdp:max_set_affine} hold. Then for fixed $s \in \mathcal{S}$, for any vector $y, \theta \in \mathbb{R}^p$, then 
\[
{\left\|G^{\pi_\theta}(s)\left(y-\frac{y^{\top} \mathbf{1}}{p} \cdot \mathbf{1}\right)\right\|_2 \geq \lambda_{\min}\left(\int_{A} g(s,a)g^\top(s, a)\pi_\theta(da|s)\right)\cdot\left\|y-\frac{y^{\top} \mathbf{1}}{p} \cdot \mathbf{1}\right\|_2} .
\]
\end{corollary}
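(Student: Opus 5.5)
The plan is to write the centred Fisher matrix as a rank-one downdate of the uncentred second-moment matrix and then apply the interlacing Theorem~\ref{thm:app:bandit:Eigenvalue_Interlacing}. Throughout I use the simplex structure $\langle \mathbf 1, g(s,a)\rangle = 1$. This is the property actually used in Proposition~\ref{prp:app:mdp:singularity_of_fim_mdp_simplex}, so I read the cited hypothesis as Assumption~\ref{asp:app:mdp:max_set_simplex}. Fix $s$ and $\theta$, and set
\[
M := \int_A g(s,a)g^\top(s,a)\,\pi_\theta(da|s), \qquad m := \int_A g(s,a)\,\pi_\theta(da|s).
\]
Expanding Definition~\ref{def:app:mdp:FIM} gives $G^{\pi_\theta}(s) = M - mm^\top$, that is, $M = G^{\pi_\theta}(s) + mm^\top$. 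This is exactly the setting of the interlacing theorem with $B = G^{\pi_\theta}(s)$ and $x = m$.

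\textbf{Step 1: eigenvalues of $G$.} Denote the eigenvalues of $M$ by $\eta_1\ge\dots\ge\eta_p$ and those of $G:=G^{\pi_\theta}(s)$ by $\lambda_1\ge\dots\ge\lambda_p$. Interlacing gives $\lambda_i \ge \eta_{i+1} \ge \eta_p = \lambda_{\min}(M)$ for $i=1,\dots,p-1$. By Proposition~\ref{prp:app:mdp:singularity_of_fim_mdp_simplex}, $G\mathbf 1 = 0$. Since $G$ is positive semidefinite, $0$ is its smallest eigenvalue, so $\lambda_p = 0$.

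\textbf{Step 2: the eigenbasis.} Because $\mathbf 1$ is an eigenvector of the symmetric matrix $G$, its orthogonal complement $\mathbf 1^\perp$ is $G$-invariant. I can therefore choose an orthonormal eigenbasis $\{\mathbf 1/\sqrt p, e_1,\dots,e_{p-1}\}$ with $e_j \in \mathbf 1^\perp$. The eigenvalues attached to $e_1,\dots,e_{p-1}$ form the full multiset of eigenvalues of $G$ with one copy of $0$ removed. Since $0=\lambda_p$ is the minimum, these are precisely $\lambda_1,\dots,\lambda_{p-1}$, and by Step~1 each is at least $\lambda_{\min}(M)$.

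\textbf{Step 3: conclusion.} Let $z := y - \frac{y^\top\mathbf 1}{p}\mathbf 1$, which lies in $\mathbf 1^\perp$. Expand $z=\sum_j c_j e_j$. Then
\[
\|Gz\|_2^2=\sum_{j=1}^{p-1}\lambda_j^2c_j^2 \ge \lambda_{\min}(M)^2\|z\|_2^2,
\]
and taking square roots gives the claim.

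The only delicate point is the bookkeeping in Step~2. One must make sure that restricting $G$ to $\mathbf 1^\perp$ discards exactly one copy of the zero eigenvalue, namely the smallest one, so that the interlacing bound $\lambda_i\ge\eta_{i+1}$, valid for $i\le p-1$, applies to every eigenvalue on $\mathbf 1^\perp$. If $0$ has higher multiplicity this remains consistent: it then forces $\eta_p\le\lambda_{p-1}=0$, so $\lambda_{\min}(M)=0$ and the bound is trivial.
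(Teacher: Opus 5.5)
Your proposal is correct and takes essentially the same route as the paper. Both arguments write $M=\int_A g g^\top\pi_\theta(da|s)=G^{\pi_\theta}(s)+mm^\top$, apply the rank-one interlacing Theorem~\ref{thm:app:bandit:Eigenvalue_Interlacing}, use $G^{\pi_\theta}(s)\mathbf 1=0$ from Proposition~\ref{prp:app:mdp:singularity_of_fim_mdp_simplex}, and expand the centred vector in an orthonormal eigenbasis to get the bound $\lambda_{p-1}\ge\eta_p$. Reading the hypothesis as the simplex condition of Assumption~\ref{asp:app:mdp:max_set_simplex} is what the argument actually needs, and your Step~2 makes explicit a point the paper takes for granted: the eigenvectors spanning $\mathbf 1^\perp$ carry exactly $\lambda_1,\dots,\lambda_{p-1}$.
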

\begin{proof}
Let $\eta_i, i=2,3, \ldots, p$ be the eigenvalue of matrix $\int_{A} g(s,a)g^\top(s, a)\pi_\theta(da|s)$ in the following order
\[
\eta_1 \geq \eta_2 \geq \cdots \geq \eta_p,
\]
and denote the eigenvalues of $G^{\pi_\theta}(s)$ as
\[
\lambda_1 \geq \lambda_2 \geq \cdots \geq \lambda_p .
\]
Then Theorem \ref{thm:app:bandit:Eigenvalue_Interlacing} indicates that,
\[
\eta_{i+1} \leq \lambda_{i} \leq  \eta_i, i=1,2,3 \ldots, p - 1.
\]
Since $G^{\pi_\theta}(s)$ is real symmetric, its eigenvectors $\left\{\frac{\mathbf 1}{\sqrt{p}}, v_{p-1}, \ldots, v_1\right\}$ are orthonormal. For any vector $y$, $y$ can be written as linear combination of eigenvectors of $G^{\pi_\theta}(s)$,
\[
\begin{aligned}
y & =a_1 \cdot \frac{\mathbf 1}{\sqrt{p}}+a_{2}v_{p-1}+\cdots+a_p v_1 \\
& =\frac{y^{\top} \mathbf{1}}{p} \cdot \mathbf{1}+a_2 v_{p-1}+\cdots+a_p v_1.
\end{aligned}
\]
The last equation is because the representation is unique, and
\[
a_p=y^{\top} \frac{\mathbf{1}}{\sqrt{p}}=\frac{y^{\top} \mathbf{1}}{\sqrt{p}}.
\]
Denote
\[
y^{\prime}=y-\frac{y^{\top} \mathbf{1}}{K} \cdot \mathbf{1}=a_{2} v_{p-1}+\cdots+a_p v_1.
\]

We have
\[
\left\|y^{\prime}\right\|_2^2=a_2^2+\cdots+a_{p}^2.
\]
Therefore because of $\lambda_{p-1}\geq \eta_{p}= \lambda_{\min}\left(\int_{A} g(s,a)g^\top(s,a)\pi_\theta(da|s)\right)$,
\[
\begin{aligned}
\left\|G^{\pi_\theta}(s) y^{\prime}\right\|_2 & =\left(a_{2}^2 \lambda_{p-1}^2+\cdots+a_p^2 \lambda_1^2\right)^{\frac{1}{2}} \\
& \geq\left(\left(a_2^2+\cdots+a_{p}^2\right) \cdot \lambda_{p-1}^2\right)^{\frac{1}{2}} \\
& =\lambda_{p-1}\cdot\left\|y^{\prime}\right\|_2 \\
& \geq \lambda_{\min}\left(\int_{A} g(s,a)g^\top(s,a)\pi_\theta(da|s)\right) \cdot\left\|y^{\prime}\right\|_2 .
\end{aligned}
\]
\end{proof}

\subsubsection{Proof of Proposition \ref{prp:app:mdp:Gradient_of_the_Objective}}
\label{sec:add_proof}

\begin{proof}
Let $\theta, \theta' \in \mathbb{R}^p$ and define $\theta^{\varepsilon}=\theta+\varepsilon (\theta'-\theta)$ for $\varepsilon\in[0,1]$. 
\begin{align*}
\nabla V_\tau^{\pi_\theta}(\rho) &= \lim_{\varepsilon \rightarrow 0}\frac{V_\tau^{\pi_{\theta^\epsilon}}(\rho) - V_\tau^{\pi_\theta}(\rho)}{\varepsilon}
\end{align*}
Noting that  
\begin{equation}\label{eq:Vtau_diff_quot}
\frac{V_\tau^{\pi_{\theta^\epsilon}}(\rho) - V_\tau^{\pi_\theta}(\rho)}{\varepsilon} = \int_{S} \frac{V^{\pi_{\theta^{\varepsilon}}}_{\tau}(s)-V^{\pi_{\theta}}_{\tau}(s)}{\varepsilon}\rho(ds)\,,
\end{equation}
we first study the difference quotient
$
H^{\varepsilon}(s)=\frac{V^{\pi_{\theta^{\varepsilon}}}_{\tau}(s)-V^{\pi_{\theta}}_{\tau}(s)}{\varepsilon}\,.
$
By \eqref{eq:intro:policy_Bellman}, we have
\begin{equation}\label{eq:first_iterate_value}
\begin{split}
H^{\varepsilon}(s)&=\frac{1}{\varepsilon}\left[\int_{A}\left(c(s,a)+\gamma\int_{S}P(ds'|s,a)V^{\pi_{\theta^{\varepsilon}}}_{\tau}(s')+\tau\log\frac{d\pi_{\theta^{\varepsilon}}}{d\mu}(a|s)\right)\pi_{\theta^{\varepsilon}}(da|s)\right.\\
&\left.\qquad-\int_{A}\left(c(s,a)+\gamma\int_{S}P(ds'|s,a)V^{\pi_{\theta}}_{\tau}(s')+\tau\log\frac{d\pi_{\theta}}{d\mu}(a|s)\right)\pi_{\theta}(da|s)\right]\\
&=I^{\varepsilon}_1(s) + I^{\varepsilon}_2(s)+I^{\varepsilon}_3(s) +I^{\varepsilon}_4(s)+I^{\varepsilon}_5(s)\,,
\end{split}
\end{equation}
where
\begin{align*}
I^{\varepsilon}_1(s)&:=\int_{A}c(s,a)\frac{\pi_{\theta^{\varepsilon}}(da|s)-\pi_{\theta}(da|s)}{\varepsilon}\,,\\
I^{\varepsilon}_2(s)&:=\gamma\int_{A}\int_{S}\frac{V^{\pi_{\theta^{\varepsilon}}}_{\tau}(s')-V^{\pi_{\theta}}_{\tau}(s')}{\varepsilon}P(ds'|s,a)\pi_{\theta}(da|s)=\gamma\int_{S} H^{\varepsilon}(s')P_{\pi_{\theta}}(ds'|s)\,,\\
I^{\varepsilon}_3(s)&:=\gamma\int_{A}\int_{S}V^{\pi_{\theta^{\varepsilon}}}_{\tau}(s')P(ds'|s,a)\frac{\pi_{\theta^{\varepsilon}}(da|s)-\pi_{\theta}(da|s)}{\varepsilon}\,,\\
I^{\varepsilon}_4(s)&:=+\tau \int_{A}\frac{\log\frac{d\pi_{\theta^{\varepsilon}}}{d\mu}(a|s) -\log\frac{d\pi_{\theta}}{d\mu}(a|s) }{\varepsilon}\pi_{\theta^{\varepsilon}}(da|s)\,\\
I^{\varepsilon}_5(s)&:=+\tau\int_{A}\log\frac{d\pi_{\theta}}{d\mu}(a|s)\frac{\pi_{\theta^{\varepsilon}}(da|s)-\pi_{\theta}(da|s)}{\varepsilon}\,. 
\end{align*}
Iterating \eqref{eq:first_iterate_value}, we obtain
\begin{align*}
H^{\varepsilon}(s) &= I^{\varepsilon}_1(s) + I^{\varepsilon}_3(s)+I^{\varepsilon}_4(s)+I^{\varepsilon}_5(s) +\gamma  \int_{S}H^{\varepsilon}(s')P_{\pi_{\theta}}(ds'|s)\\
&=\frac{1}{1-\gamma} \int_{S} (I^{\varepsilon}_1(s')+I^{\varepsilon}_3(s')+I^{\varepsilon}_4(s')+I^{\varepsilon}_5(s'))d^{\pi_{\theta}}(ds'|s)\,,
\end{align*}
and hence
\[
\frac{V_\tau^{\pi_{\theta^\epsilon}}(\rho) - V_\tau^{\pi_\theta}(\rho)}{\varepsilon} =\frac{1}{1-\gamma} \int_{S} (I^{\varepsilon}_1(s)+I^{\varepsilon}_3(s)+I^{\varepsilon}_4(s)+I^{\varepsilon}_5(s))d^{\pi_{\theta}}_{\rho}(ds)\,.
\]
\textbf{Step 1}: To prove \begin{equation}\label{eq:limit_policy}
\underset{\varepsilon \in [0,1]}{\lim_{\varepsilon\rightarrow 0}}|\pi_{\theta^{\varepsilon}}-\pi_{\theta}|_{b\mathcal{K}(A|S)}=\underset{\varepsilon \in [0,1]}{\lim_{\varepsilon\rightarrow 0}}\sup_{s\in S}\int_{A}\left|\frac{d\pi_{\theta^{\varepsilon}}}{d\mu}(a|s)-\frac{d\pi_{\theta}}{d\mu}(a|s)\right|\mu(da)=0\,.
\end{equation}

For convenience, we introduce the unnormalized policy $\tilde{\pi}: \mathcal{P}(\mathbb{R}^d) \rightarrow b\mathcal{K}_{\mu}(A|S)$ given by
\[
\tilde{\pi}(\theta)(da|s)=\tilde{\pi}_{\theta}(da|s)=\exp\left(\langle \theta, g(s,a)\rangle\right)\mu(da)\,.
\]
For all $(s,a)\in S\times A$, we have
\begin{align*}
& \frac{d\pi_{\theta^{\varepsilon}}}{d\mu}(a|s)-\frac{d\pi_{\theta}}{d\mu}(a|s) \\
& =\frac{\frac{d\tilde{\pi}_{\theta^{\varepsilon}}}{d\mu}(a|s)}{\tilde{\pi}_{\theta^{\varepsilon}}(A|s)}-\frac{\frac{d\tilde{\pi}_{\theta}}{d\mu}(a|s)}{\tilde{\pi}_{\theta}(A|s)}=\frac{\frac{d\tilde{\pi}_{\theta^{\varepsilon}}}{d\mu}(a|s)\tilde{\pi}_{\theta}(A|s)}{\tilde{\pi}_{\theta^{\varepsilon}}(A|s)\tilde{\pi}_{\theta}(A|s)}-\frac{\frac{d\tilde{\pi}_{\theta}}{d\mu}(a|s)\tilde{\pi}_{\theta^{\varepsilon}}(A|s)}{\tilde{\pi}_{\theta^{\varepsilon}}(A|s)\tilde{\pi}_{\theta}(A|s)}\\
& =\frac{\frac{d\tilde{\pi}_{\theta^{\varepsilon}}}{d\mu}(a|s)-\frac{d\tilde{\pi}_{\theta}}{d\mu}(a|s)}{\tilde{\pi}_{\theta}(A|s)}\frac{\tilde{\pi}_{\theta}(A|s)}{\tilde{\pi}_{\theta^{\varepsilon}}(A|s)}+\frac{\frac{d\tilde{\pi}_{\theta}}{d\mu}(a|s)}{\tilde{\pi}_{\theta^{\varepsilon}}(A|s)}\frac{\tilde{\pi}_{\theta}(A|s)-\tilde{\pi}_{\theta^{\varepsilon}}(A|s)}{\tilde{\pi}_{\theta}(A|s)}\\
& =\left[\frac{\frac{d\tilde{\pi}_{\theta^{\varepsilon}}}{d\mu}(a|s)-\frac{d\tilde{\pi}_{\theta}}{d\mu}(a|s)}{\tilde{\pi}_{\theta}(A|s)}+\frac{d\pi_{\theta}}{d\mu}(a|s)\frac{\tilde{\pi}_{\theta}(A|s)-\tilde{\pi}_{\theta^{\varepsilon}}(A|s)}{\tilde{\pi}_{\theta}(A|s)}\right]\frac{\tilde{\pi}_{\theta}(A|s)}{\tilde{\pi}_{\theta^{\varepsilon}}(A|s)}\\
& =\left[\frac{\frac{d\tilde{\pi}_{\theta^{\varepsilon}}}{d\mu}(a|s)-\frac{d\tilde{\pi}_{\theta}}{d\mu}(a|s)}{\tilde{\pi}_{\theta}(A|s)}+\frac{d\pi_{\theta}}{d\mu}(a|s)\int_{A}\left(\frac{\frac{d\tilde{\pi}_{\theta^{\varepsilon}}}{d\mu}(a|s)-\frac{d\tilde{\pi}_{\theta}}{d\mu}(a|s)}{\tilde{\pi}_{\theta}(A|s)}\right)\mu(da)\right]\frac{\tilde{\pi}_{\theta}(A|s)}{\tilde{\pi}_{\theta^{\varepsilon}}(A|s)}\,.
\end{align*}
Simple manipulation yields
\[
\frac{d\tilde{\pi}_{\theta^{\varepsilon}}}{d\mu}(a|s)-\frac{d\tilde{\pi}_{\theta}}{d\mu}(a|s) =\exp\left(\langle\theta, g(s,a)\rangle\right)\left( \exp\left(\varepsilon\langle\theta^\prime - \theta, g(s,a)\rangle\right)-1\right)\,.
\]
Taylor expanding the exponential function, we find
\begin{align*}
& \exp\left(\varepsilon\langle\theta^\prime - \theta, g(a)\rangle\right)-1\\
& = \varepsilon\langle\theta^\prime - \theta, g(s,a)\rangle + \varepsilon^2\sum_{n=2}^\infty\varepsilon^{n-2} \frac{\varepsilon\langle\theta^\prime - \theta, g(s,a)\rangle^n}{n!}\,,
\end{align*}
and hence
\begin{equation}
\begin{aligned}\label{ineq:ratio_diff_pol_eps}
\varepsilon^{-1}\frac{\frac{d\tilde{\pi}_{\theta^{\varepsilon}}}{d\mu}(a|s)-\frac{d\tilde{\pi}_{\theta}}{d\mu}(a|s)}{\tilde{\pi}_{\theta}(A|s)}&= \frac{d\pi_{\theta}}{d\mu}(a|s)\langle\theta^\prime - \theta, g(s,a)\rangle\\
&\quad +\varepsilon \frac{d\pi_{\theta}}{d\mu}(a|s) \sum_{n=2}^\infty \varepsilon^{n-2}\frac{\varepsilon\langle\theta^\prime - \theta, g(s,a)\rangle^n}{n!}\,.
\end{aligned}
\end{equation}
Thus, using $\pi_{\theta}(A|s)=1$, we obtain
\begin{gather*}
\int_{A}\left|\varepsilon^{-1}\frac{\frac{d\tilde{\pi}_{\theta^{\varepsilon}}}{d\mu}(a|s)-\frac{d\tilde{\pi}_{\theta}}{d\mu}(a|s)}{\tilde{\pi}_{\theta}(A|s)}- \frac{d\pi_{\theta}}{d\mu}(a|s)\langle\theta^\prime - \theta, g(a)\rangle\right|\mu(da)\notag\\
\le  \varepsilon\exp\left(\|\theta'-\theta\|_2\right)
\end{gather*}
and 
\[
\int_{A}\varepsilon^{-1}\left|\frac{\frac{d\tilde{\pi}_{\theta^{\varepsilon}}}{d\mu}(a|s)-\frac{d\tilde{\pi}_{\theta}}{d\mu}(a|s)}{\tilde{\pi}_{\theta}(A|s)}\right|\mu(da) \le \|\theta'-\theta\|_{2}+\varepsilon \exp\left(\|\theta'-\theta\|_{2}\right).
\]
The dominated convergence theorem implies that 
\begin{equation*}
\underset{\varepsilon \in [0,1]}{\lim_{\varepsilon\rightarrow 0}}|\pi_{\theta^{\varepsilon}}-\pi_{\theta}|_{b\mathcal{K}(A|S)}=\underset{\varepsilon \in [0,1]}{\lim_{\varepsilon\rightarrow 0}}\sup_{s\in S}\int_{A}\left|\frac{d\pi_{\theta^{\varepsilon}}}{d\mu}(a|s)-\frac{d\pi_{\theta}}{d\mu}(a|s)\right|\mu(da)=0\,,
\end{equation*}
\textbf{Step 2}: We now will pass to the limit as $\varepsilon\rightarrow 0$ in Eq.(\ref{eq:Vtau_diff_quot}). 
Let us begin with the $I_4^{\varepsilon}$-term. Recalling \eqref{eq:limit_policy}, we have
\begin{equation}\label{eq:limit_policy_again}
\underset{\varepsilon \in [0,1]}{\lim_{\varepsilon\rightarrow 0}}|\pi_{\theta^{\varepsilon}}-\pi_{\theta}|_{b\mathcal{K}(A|S)}=\underset{\varepsilon \in [0,1]}{\lim_{\varepsilon\rightarrow 0}}\sup_{s\in S}\int_{A}\left|\frac{d\pi_{\theta^{\varepsilon}}}{d\mu}(a|s)-\frac{d\pi_{\theta}}{d\mu}(a|s)\right|\mu(da)=0\,.
\end{equation}
Since $\pi_\theta \in \Pi_\mu$, there exists a constant $\tilde{R}_\theta >0$ such that $\left|\frac{d \pi_{\theta}}{d \mu}\right|_{B_b(S \times A)} \leq \tilde{R}_\theta $
\[
\frac{d\pi_{\theta}}{d\mu}(a|s) \geq e^{-\tilde{R}_\theta}>0,
\]
there is an $\varepsilon_0\in (0,1]$ such that  for all $\varepsilon<\varepsilon_0$, $s\in S$, and $\mu-a.e. \,a\in A$,
\[
\left|\frac{\frac{d\pi_{\theta^{\varepsilon}}}{d\mu}(a|s)-\frac{d\pi_{\theta}}{d\mu}(a|s)}{\frac{d\pi_{\theta}}{d\mu}(a|s)}\right|<\frac{1}{2}\,.
\]
Taylor expanding the logarithm, we get 
\begin{gather*}
\log\frac{d\pi_{\theta^{\varepsilon}}}{d\mu}(a|s) -\log\frac{d\pi_{\theta}}{d\mu}(a|s) =\log\left(1+\frac{\frac{d\pi_{\theta^{\varepsilon}}}{d\mu}(a|s)-\frac{d\pi_{\theta}}{d\mu}(a|s)}{\frac{d\pi_{\theta}}{d\mu}(a|s)}\right)\notag\\
=\frac{1}{\frac{d\pi_{\theta}}{d\mu}(a|s)}\left(\frac{d\pi_{\theta^{\varepsilon}}}{d\mu}(a|s)-\frac{d\pi_{\theta}}{d\mu}(a|s)\right)+\sum_{n=2}^{\infty}(-1)^{n+1}\frac{\left(\frac{1}{\frac{d\pi_{\theta}}{d\mu}(a|s)}\left(\frac{d\pi_{\theta^{\varepsilon}}}{d\mu}(a|s)-\frac{d\pi_{\theta}}{d\mu}(a|s)\right)\right)^n}{n}\,.
\end{gather*}
Using \eqref{ineq:ratio_diff_pol_eps}, we find that 
\[
\left|\frac{\frac{d\pi_{\theta^{\varepsilon}}}{d\mu}(a|s)-\frac{d\pi_{\theta}}{d\mu}(a|s)}{\frac{d\pi_{\theta}}{d\mu}(a|s)}\right|\le \left(e^{\tilde{R}_\theta } + 1\right)\left(\varepsilon \|\theta'-\theta\|_{2}+\varepsilon^2 \exp\left(\|\theta'-\theta\|_{2}\right)\right)\frac{\tilde{\pi}_{\theta}(A|s)}{\tilde{\pi}_{\theta^{\varepsilon}}(A|s)}.
\]
Thus, by Lemma~\ref{lem:app:mdp:grad_of_log_policy}, we have
\begin{equation}
\underset{\varepsilon \in [0,1]}{\lim_{\varepsilon\rightarrow 0}}\frac{\log\frac{d\pi_{\theta^{\varepsilon}}}{d\mu}(a|s) -\log\frac{d\pi_{\theta}}{d\mu}(a|s) }{\varepsilon}= g(s,a) - \int g(s,a^\prime)\pi_\theta(da^\prime),\label{lem:func_deriv_log_pi}
\end{equation}
and that there exists a constant $C>0$ such that for all $\varepsilon<\varepsilon_0$, $s\in S$, and $\mu-a.e. \,a\in A$,
\[
\varepsilon^{-1}\left|\log\frac{d\pi_{\theta^{\varepsilon}}}{d\mu}(a|s) -\log\frac{d\pi_{\theta}}{d\mu}(a|s)\right| \le C\,.
\]
Therefore, owing to \eqref{eq:limit_policy_again} and \eqref{lem:func_deriv_log_pi}, we find
\[
\underset{\varepsilon \in [0,1]}{\lim_{\varepsilon\rightarrow 0}}I_4^{\varepsilon}(s)=+\tau\int\left( g(s,a)-\int g(s,a^\prime)\pi_{\theta}(da'|s)\right)\pi_{\theta}(da|s)=0\,.
\]
We now turn our attention to  $I_3^{\varepsilon}$. Recalling \eqref{eq:intro:value_function_occ}, we have 
\[
V^{\pi_{\theta^{\varepsilon}}}_{\tau}(s)=\frac{1}{1-\gamma}\int_{S}\int_A\left(c(s',a)+\tau\log\frac{d\pi_{\theta^{\varepsilon}}}{d\mu}(a|s')\right)\pi_{\theta^{\varepsilon}}(da|s')d^{\pi_{\theta^{\varepsilon}}}(ds'|s)\,.
\]

It follows from Corollary \ref{cor:app:mdp:Lip_occ}, Eq.(\ref{eq:limit_policy}) (also Eq.\eqref{eq:limit_policy_again}), Lemma \ref{lem:bounds}, and the bound on cost function, that 
\[
\underset{\varepsilon \in [0,1]}{\lim_{\varepsilon\rightarrow 0}}V^{\pi_{\theta^{\varepsilon}}}_{\tau}(s)=V^{\pi_{\theta}}_{\tau}(s)\,.
\]
Thus, by Lemmas \ref{lem:bounds} and \ref{lem:app:mdp:grad_of_log_policy}, by dominated convergence theorem we obtain 
\[
\underset{\varepsilon \in [0,1]}{\lim_{\varepsilon\rightarrow 0}}I_3^{\varepsilon}(s)=\gamma\int_A \int_S V^{\pi_{\theta}}_{\tau}(s')P(ds'|s,a)  \nabla \log \frac{d \pi_\theta}{d \mu}(a|s)\pi_{\theta}(da|s)\,.
\]
Using Lemmas \ref{lem:bounds} and \ref{lem:app:mdp:grad_of_log_policy} and bound on cost function, by dominated convergence theorem we get
\begin{equation*}
\underset{\varepsilon \in [0,1]}{\lim_{\varepsilon\rightarrow 0}}(I_1^{\varepsilon}(s)+I_5^{\varepsilon}(s))=\int_A \left(c(s',a)+\tau\log\frac{d\pi_{\theta}}{d\mu}(a|s)\right)\nabla \log \frac{d \pi_\theta}{d \mu}(a|s)\pi_{\theta}(da|s)\,.
\end{equation*}
Putting it all together and using the definition of $Q^{\pi}_\tau$, we arrive at
\[
\underset{\varepsilon \in [0,1]}{\lim_{\varepsilon\rightarrow 0}}\sum_{j=1}^5 I_j^{\varepsilon}(s)=\int_A \left(Q_{\tau}^{\pi_{\theta}}(s,a)+\tau\log\frac{d\pi_{\theta}}{d\mu}(a|s)\right)\nabla \log \frac{d \pi_\theta}{d \mu}\pi_{\theta}(da|s).
\]
Since $(I_j)_{1\le j\le 5}$ are bounded uniformly in $\varepsilon$, we may apply the bounded convergence theorem to pass to the limit in \eqref{eq:Vtau_diff_quot} to obtain \eqref{eq:wts_func_deriv_V}. 
\end{proof}

\subsubsection{Proof of Proposition \ref{prp:app:mdp:local_lipchitiz_conitinuity_mdp}}
\label{sec:proof_local_lip}

\begin{proof}
\begin{align}
&  \quad \ \nabla V_\tau^{\pi_\theta}(\rho) - \nabla V_\tau^{\pi_{\theta^\prime}}(\rho) \nonumber\\
& = \frac{1}{1-\gamma}\int_{S} \int_{A} \left(Q^{\pi_\theta}_\tau(s,a) + \tau \log \frac{d \pi_{\theta}}{d \mu}(a|s)\right)\nabla \log \frac{d\pi_\theta}{d \mu}(a| s)\pi_{\theta}(da|s)d_\rho^{\pi_{\theta}}(ds) \nonumber \\
& \quad - \frac{1}{1-\gamma}\int_{S} \int_{A} \left(Q^{\pi_{\theta^\prime}}_\tau(s,a) + \tau \log \frac{d \pi_{\theta^\prime}}{d \mu}(a|s)\right) \nabla \log \frac{d\pi_{\theta^\prime}}{d \mu}(a| s)\pi_{\theta^\prime}(da|s)d_\rho^{\pi_{\theta^\prime}}(ds) \nonumber\\
& = \frac{1}{1-\gamma}\int_{S} \int_{A} \left(Q^{\pi_\theta}_\tau(s,a) + \tau \log \frac{d \pi_{\theta}}{d \mu}(a|s) - Q^{\pi_{\theta^\prime}}_\tau(s,a) - \tau \log \frac{d \pi_{\theta^\prime}}{d \mu}(a|s)\right)\nonumber\\
& \qquad \qquad \qquad \ \cdot \nabla \log \frac{d\pi_\theta}{d \mu}(a| s)\pi_{\theta}(da|s)d_\rho^{\pi_{\theta}}(ds) \nonumber \\
& \quad +  \frac{1}{1-\gamma}\int_{S} \int_{A} \left(Q^{\pi_{\theta^\prime}}_\tau(s,a) + \tau \log \frac{d \pi_{\theta^\prime}}{d \mu}(a|s)\right)\nabla \log \frac{d\pi_{\theta}}{d \mu}(a| s)\pi_{\theta}(da|s)d_\rho^{\pi_{\theta}}(ds) \nonumber\\
& \quad - \frac{1}{1-\gamma}\int_{S} \int_{A} \left(Q^{\pi_{\theta^\prime}}_\tau(s,a) + \tau \log \frac{d \pi_{\theta^\prime}}{d \mu}(a|s)\right)\nabla \log \frac{d\pi_{\theta^\prime}}{d \mu}(a| s)\pi_{\theta^\prime}(da|s)d_\rho^{\pi_{\theta^\prime}}(ds) \nonumber\\
& = \frac{1}{1-\gamma}\int_{S} \int_{A} \left(Q^{\pi_\theta}_\tau(s,a) + \tau \log \frac{d \pi_{\theta}}{d \mu}(a|s) - Q^{\pi_{\theta^\prime}}_\tau(s,a) - \tau \log \frac{d \pi_{\theta^\prime}}{d \mu}(a|s)\right) \nonumber\\
& \qquad \qquad \qquad \ \cdot \nabla \log \frac{d\pi_\theta}{d \mu}(a| s)\pi_{\theta}(da|s)d_\rho^{\pi_{\theta}}(ds) \nonumber \\
& \quad + \frac{1}{1-\gamma}\int_{S} \int_{A} \left(Q^{\pi_{\theta^\prime}}_\tau(s,a) + \tau \log \frac{d \pi_{\theta^\prime}}{d \mu}(a|s)\right) \nonumber \\
& \qquad \qquad \qquad \ \cdot\left(\nabla \log \frac{d\pi_{\theta}}{d \mu}(a| s) - \nabla \log \frac{d\pi_{\theta^\prime}}{d \mu}(a| s)\right)\pi_{\theta}(da|s)d_\rho^{\pi_{\theta}}(ds) \nonumber\\
& \quad + \frac{1}{1-\gamma}\int_{S} \int_{A} \left(Q^{\pi_{\theta^\prime}}_\tau(s,a) + \tau \log \frac{d \pi_{\theta^\prime}}{d \mu}(a|s)\right)  \nabla \log \frac{d\pi_{\theta^\prime}}{d \mu}(a| s)\pi_{\theta}(da|s)d_\rho^{\pi_{\theta}}(ds) \nonumber\\
& \quad - \frac{1}{1-\gamma}\int_{S} \int_{A} \left(Q^{\pi_{\theta^\prime}}_\tau(s,a) + \tau \log \frac{d \pi_{\theta^\prime}}{d \mu}(a|s)\right)  \nabla \log \frac{d\pi_{\theta^\prime}}{d \mu}(a| s)\pi_{\theta^\prime}(da|s)d_\rho^{\pi_{\theta^\prime}}(ds) \nonumber\\
& = \frac{1}{1-\gamma}\int_{S} \int_{A} \left(Q^{\pi_\theta}_\tau(s,a) + \tau \log \frac{d \pi_{\theta}}{d \mu}(a|s) - Q^{\pi_{\theta^\prime}}_\tau(s,a) - \tau \log \frac{d \pi_{\theta^\prime}}{d \mu}(a|s)\right) \nonumber \\
& \qquad \qquad \qquad \ \cdot\nabla \log \frac{d\pi_\theta}{d \mu}(a| s)\pi_{\theta}(da|s)d_\rho^{\pi_{\theta}}(ds) \nonumber \\
& \quad + \frac{1}{1-\gamma}\int_{S} \int_{A} \left(Q^{\pi_{\theta^\prime}}_\tau(s,a) + \tau \log \frac{d \pi_{\theta^\prime}}{d \mu}(a|s)\right) \nonumber \\
& \qquad \qquad \qquad \ \cdot\left(\nabla \log \frac{d\pi_{\theta}}{d \mu}(a| s) - \nabla \log \frac{d\pi_{\theta^\prime}}{d \mu}(a| s)\right)\pi_{\theta}(da|s)d_\rho^{\pi_{\theta}}(ds) \nonumber\\
& \quad + \frac{1}{1-\gamma}\int_{S} \int_{A} \left(Q^{\pi_{\theta^\prime}}_\tau(s,a) + \tau \log \frac{d \pi_{\theta^\prime}}{d \mu}(a|s)\right)  \nonumber \\
& \qquad \qquad \qquad \ \cdot \nabla \log \frac{d\pi_{\theta^\prime}}{d \mu}(a| s)\left(\pi_{\theta}(da|s) -\pi_{\theta^\prime}(da|s)\right)d_\rho^{\pi_{\theta}}(ds) \nonumber\\
& \quad + \frac{1}{1-\gamma}\int_{S} \int_{A} \left(Q^{\pi_{\theta^\prime}}_\tau(s,a) + \tau \log \frac{d \pi_{\theta^\prime}}{d \mu}(a|s)\right)  \nonumber \\
& \qquad \qquad \qquad \ \cdot
\nabla \log \frac{d\pi_{\theta^\prime}}{d \mu}(a| s)\pi_{\theta^\prime}(da|s)\left(d_\rho^{\pi_{\theta}}(ds)- d_\rho^{\pi_{\theta^\prime}}(ds)\right), \label{eq: difference of the derivative}
\end{align}
where 
\begin{align}
& \frac{1}{1-\gamma}\int_{S} \int_{A} \left(Q^{\pi_\theta}_\tau(s,a) + \tau \log \frac{d \pi_{\theta}}{d \mu}(a|s) - Q^{\pi_{\theta^\prime}}_\tau(s,a) - \tau \log \frac{d \pi_{\theta^\prime}}{d \mu}(a|s)\right) \nonumber \\
& \qquad \qquad \ \ \ \cdot\nabla \log \frac{d\pi_\theta}{d \mu}(a| s)\pi_{\theta}(da|s)d_\rho^{\pi_{\theta}}(ds)  \nonumber \\
& =  \frac{1}{1-\gamma}\int_{S} \int_{A} \bigg(\gamma \int_{S} \left(V^{\pi_\theta}_\tau (s^\prime) - V^{\pi_{\theta^\prime}}_\tau (s^\prime)\right)P(d s^\prime | s,a) \nonumber \\
& \quad \quad \qquad \qquad \ + \tau \langle g(s,a), \theta - \theta^\prime \rangle + \tau\left(Z_{\pi_{\theta^\prime}}(s) - Z_{\pi_\theta}(s)\right)\bigg) \nonumber\\
& \qquad \qquad \qquad \ \cdot\nabla \log \frac{d\pi_\theta}{d \mu}(a| s)\pi_{\theta}(da|s)d_\rho^{\pi_{\theta}}(ds) \nonumber\\
& =  \frac{1}{1-\gamma}\int_{S} \int_{A} \left(\gamma \int_{S}  \int_0^1 \left\langle \nabla V_\tau^{\pi_{\theta^\epsilon}}(\rho), \theta - \theta^\prime\right\rangle d \epsilon P(d s^\prime | s,a) + \tau \langle g(s,a), \theta - \theta^\prime \rangle \right) \nonumber\\
& \qquad \qquad \qquad \ \cdot\nabla \log \frac{d\pi_\theta}{d \mu}(a| s)\pi_{\theta}(da|s)d_\rho^{\pi_{\theta}}(ds) \label{eq: difference of the derivative 01} 
\end{align}

Using Lemma \ref{lem:app:mdp:bounded_grad_of_log_policy}, Lemma \ref{lem:bounds}, Lemma \ref{lem: smooth of log policy}, Corollary \ref{cor:app:mdp:Lip_occ}, putting Eq.(\ref{eq: difference of the derivative 01}) back to Eq.(\ref{eq: difference of the derivative}), we have 
\begin{align}
& \left\|\nabla V_\tau^{\pi_\theta}(\rho) - \nabla V_\tau^{\pi_{\theta^\prime}}(\rho)\right\|_2\\
& \leq \left( \frac{1}{1-\gamma}\left(\frac{ \gamma (5 + \tau R)  }{1-\gamma} + 6\right)\left(\frac{1+\gamma\tau R}{1-\gamma} + \tau R\right) + \frac{2\tau }{1-\gamma}\right) \Bigg\|\theta - \theta^\prime\Bigg\|_2.
\end{align}
\end{proof}

\section{Conclusion}

We prove linear convergence of policy gradient for entropy regularized MDPs with log-linear policies on general state and action spaces under $Q^\pi_\tau$-realizability and when suitable basis functions are employed. 
This complements existing results for softmax policy gradient methods in the tabular setting~\cite{mei2020global}.
To obtain our results we have established a non-uniform P{\L} inequality for general state and action spaces and carried out novel Lyapunov function-based analysis allowing control of the non-uniform term.

\subsection*{Acknowledgements}
The second author was partially supported by a grant from the Simons Foundation.
The second author would also like to thank the Isaac Newton Institute for Mathematical Sciences, Cambridge, for support and hospitality during the programme Bridging Stochastic Control And Reinforcement Learning: Theories and Applications, where work on this paper was partially undertaken. This work was supported by EPSRC grant EP/V521929/1.
The second and third authors acknowledge funding from the UKRI Prosperity Partnerships grant APP43592: AI$^2$ - Assurance and Insurance for Artificial Intelligence, which supported this work.

\bibliographystyle{plainnat}
\bibliography{references}

@inproceedings{mei2020global,
  title={On the global convergence rates of softmax policy gradient methods},
  author={Mei, Jincheng and Xiao, Chenjun and Szepesvari, Csaba and Schuurmans, Dale},
  booktitle={International conference on machine learning},
  pages={6820--6829},
  year={2020},
  organization={PMLR}
}

@article{cayci2024convergence,
  title={Convergence of entropy-regularized natural policy gradient with linear function approximation},
  author={Cayci, Semih and He, Niao and Srikant, Rayadurgam},
  journal={SIAM Journal on Optimization},
  volume={34},
  number={3},
  pages={2729--2755},
  year={2024},
  publisher={SIAM}
}

@article{liu2023polyak,
  title={Polyak--{{\L}}ojasiewicz inequality on the space of measures and convergence of mean-field birth-death processes},
  author={Liu, Linshan and Majka, Mateusz B and Szpruch, {\L}ukasz},
  journal={Applied Mathematics \& Optimization},
  volume={87},
  number={3},
  pages={48},
  year={2023},
  publisher={Springer}
}

@inproceedings{leahy2022convergence,
  title={Convergence of policy gradient for entropy regularized {MDP}s with neural network approximation in the mean-field regime},
  author={Leahy, James-Michael and Kerimkulov, Bekzhan and \v{S}i\v{s}ka, David and Szpruch, {\L}ukasz},
  booktitle={International Conference on Machine Learning},
  pages={12222--12252},
  year={2022},
  organization={PMLR}
}

@article{sutton1999policy,
  title={Policy gradient methods for reinforcement learning with function approximation},
  author={Sutton, Richard S and McAllester, David and Singh, Satinder and Mansour, Yishay},
  journal={Advances in neural information processing systems},
  volume={12},
  year={1999}
}

@article{polyak1963gradient,
  title={Gradient methods for minimizing functionals},
  author={Polyak, Boris Teodorovich and others},
  journal={Zhurnal vychislitel’noi matematiki i matematicheskoi fiziki},
  volume={3},
  number={4},
  pages={643--653},
  year={1963}
}

@article{lojasiewicz1963topological,
  title={A topological property of real analytic subsets},
  author={Lojasiewicz, Stanislaw},
  journal={Coll. du CNRS, Les {\'e}quations aux d{\'e}riv{\'e}es partielles},
  volume={117},
  number={87-89},
  pages={2},
  year={1963}
}

@article{agarwal2021theory,
  title={On the theory of policy gradient methods: Optimality, approximation, and distribution shift},
  author={Agarwal, Alekh and Kakade, Sham M and Lee, Jason D and Mahajan, Gaurav},
  journal={The Journal of Machine Learning Research},
  volume={22},
  number={1},
  pages={4431--4506},
  year={2021},
  publisher={JMLRORG}
}

@article{bhandari2024global,
  title={Global optimality guarantees for policy gradient methods},
  author={Bhandari, Jalaj and Russo, Daniel},
  journal={Operations Research},
  year={2024},
  publisher={INFORMS}
}

@article{kunze2011pettis,
  title={A {P}ettis-type integral and applications to transition semigroups},
  author={Kunze, Markus},
  journal={Czechoslovak mathematical journal},
  volume={61},
  number={2},
  pages={437--459},
  year={2011},
  publisher={Springer}
}

@article{4d2f28f8-3cb9-3c33-89f8-2a7e91475e75,
 ISSN = {00361445, 10957200},
 URL = {http://www.jstor.org/stable/2028604},
 author = {Gene H. Golub},
 journal = {SIAM Review},
 number = {2},
 pages = {318--334},
 publisher = {Society for Industrial and Applied Mathematics},
 title = {Some Modified Matrix Eigenvalue Problems},
 urldate = {2026-01-22},
 volume = {15},
 year = {1973}
}

@book{dupuis2011weak,
  title={A weak convergence approach to the theory of large deviations},
  author={Dupuis, Paul and Ellis, Richard S},
  year={1997},
  publisher={John Wiley \& Sons}
}

@article{ziebart2010modeling,
  title={Modeling interaction via the principle of maximum causal entropy},
  author={Ziebart, Brian D and Bagnell, J Andrew and Dey, Anind K},
  year={2010},
  publisher={Carnegie Mellon University},
  journal = { }
}

@inproceedings{haarnoja2017reinforcement,
  title={Reinforcement learning with deep energy-based policies},
  author={Haarnoja, Tuomas and Tang, Haoran and Abbeel, Pieter and Levine, Sergey},
  booktitle={International conference on machine learning},
  pages={1352--1361},
  year={2017},
  organization={PMLR}
}

@inproceedings{haarnoja2018soft,
  title={Soft actor-critic: Off-policy maximum entropy deep reinforcement learning with a stochastic actor},
  author={Haarnoja, Tuomas and Zhou, Aurick and Abbeel, Pieter and Levine, Sergey},
  booktitle={International conference on machine learning},
  pages={1861--1870},
  year={2018},
  organization={Pmlr}
}

@article{neu2017unified,
  title={A unified view of entropy-regularized {M}arkov decision processes},
  author={Neu, Gergely and Jonsson, Anders and G{\'o}mez, Vicen{\c{c}}},
  journal={arXiv preprint arXiv:1705.07798},
  year={2017}
}

@inproceedings{geist2019theory,
  title={A theory of regularized {M}arkov decision processes},
  author={Geist, Matthieu and Scherrer, Bruno and Pietquin, Olivier},
  booktitle={International conference on machine learning},
  pages={2160--2169},
  year={2019},
  organization={PMLR}
}

@article{vieillard2020leverage,
  title={Leverage the average: an analysis of {KL} regularization in reinforcement learning},
  author={Vieillard, Nino and Kozuno, Tadashi and Scherrer, Bruno and Pietquin, Olivier and Munos, R{\'e}mi and Geist, Matthieu},
  journal={Advances in Neural Information Processing Systems},
  volume={33},
  pages={12163--12174},
  year={2020}
}

@article{lan2023policy,
  title={Policy mirror descent for reinforcement learning: Linear convergence, new sampling complexity, and generalized problem classes},
  author={Lan, Guanghui},
  journal={Mathematical programming},
  volume={198},
  number={1},
  pages={1059--1106},
  year={2023},
  publisher={Springer}
}

@article{ju2022policy,
  title={Policy optimization over general state and action spaces},
  author={Ju, Caleb and Lan, Guanghui},
  journal={arXiv preprint arXiv:2211.16715},
  year={2022}
}

@inproceedings{agarwal2020optimality,
  title={Optimality and approximation with policy gradient methods in {M}arkov decision processes},
  author={Agarwal, Alekh and Kakade, Sham M and Lee, Jason D and Mahajan, Gaurav},
  booktitle={Conference on learning theory},
  pages={64--66},
  year={2020},
  organization={PMLR}
}

@inproceedings{li2021softmax,
  title={Softmax policy gradient methods can take exponential time to converge},
  author={Li, Gen and Wei, Yuting and Chi, Yuejie and Gu, Yuantao and Chen, Yuxin},
  booktitle={Conference on Learning Theory},
  pages={3107--3110},
  year={2021},
  organization={PMLR}
}

@article{kerimkulov2025fisher,
  title={A {F}isher--{R}ao Gradient Flow for Entropy-Regularised {M}arkov Decision Processes in {P}olish Spaces},
  author={Kerimkulov, Bekzhan and Leahy, James-Michael and \v{S}i\v{s}ka, David and Szpruch, {\L}ukasz and Zhang, Yufei},
  journal={Foundations of Computational Mathematics},
  pages={1--75},
  year={2025},
  publisher={Springer}
}

@article{Giegrich2024,
author = {Giegrich, Michael and Reisinger, Christoph and Zhang, Yufei},
title = {Convergence of Policy Gradient Methods for Finite-Horizon Exploratory Linear-Quadratic Control Problems},
journal = {SIAM Journal on Control and Optimization},
volume = {62},
number = {2},
pages = {1060-1092},
year = {2024},
doi = {10.1137/22M1533517},

URL = { 
    
        https://doi.org/10.1137/22M1533517
    
    

},
eprint = { 
    
        https://doi.org/10.1137/22M1533517
    
    

}
}

@article{li2021sample,
  title={Sample-efficient reinforcement learning is feasible for linearly realizable {MDP}s with limited revisiting},
  author={Li, Gen and Chen, Yuxin and Chi, Yuejie and Gu, Yuantao and Wei, Yuting},
  journal={Advances in Neural Information Processing Systems},
  volume={34},
  pages={16671--16685},
  year={2021}
}

@inproceedings{zanette2020frequentist,
  title={Frequentist regret bounds for randomized least-squares value iteration},
  author={Zanette, Andrea and Brandfonbrener, David and Brunskill, Emma and Pirotta, Matteo and Lazaric, Alessandro},
  booktitle={International Conference on Artificial Intelligence and Statistics},
  pages={1954--1964},
  year={2020},
  organization={PMLR}
}

@inproceedings{yang2019sample,
  title={Sample-optimal parametric {Q}-learning using linearly additive features},
  author={Yang, Lin and Wang, Mengdi},
  booktitle={International conference on machine learning},
  pages={6995--7004},
  year={2019},
  organization={PMLR}
}

@article{doya2000reinforcement,
  title={Reinforcement learning in continuous time and space},
  author={Doya, Kenji},
  journal={Neural computation},
  volume={12},
  number={1},
  pages={219--245},
  year={2000},
  publisher={MIT Press One Rogers Street, Cambridge, MA 02142-1209, USA journals-info~…}
}

@incollection{van2012reinforcement,
  title={Reinforcement learning in continuous state and action spaces},
  author={Van Hasselt, Hado},
  booktitle={Reinforcement learning: State-of-the-art},
  pages={207--251},
  year={2012},
  publisher={Springer}
}

@article{manna2022learning,
  title={Learning in continuous action space for developing high dimensional potential energy models},
  author={Manna, Sukriti and Loeffler, Troy D and Batra, Rohit and Banik, Suvo and Chan, Henry and Varughese, Bilvin and Sasikumar, Kiran and Sternberg, Michael and Peterka, Tom and Cherukara, Mathew J and others},
  journal={Nature communications},
  volume={13},
  number={1},
  pages={368},
  year={2022},
  publisher={Nature Publishing Group UK London}
}

@inproceedings{kurdyka1998gradients,
  title={On gradients of functions definable in o-minimal structures},
  author={Kurdyka, Krzysztof},
  booktitle={Annales de l'institut Fourier},
  volume={48},
  number={3},
  pages={769--783},
  year={1998}
}

@inproceedings{fazel2018global,
  title={Global convergence of policy gradient methods for the linear quadratic regulator},
  author={Fazel, Maryam and Ge, Rong and Kakade, Sham and Mesbahi, Mehran},
  booktitle={International conference on machine learning},
  pages={1467--1476},
  year={2018},
  organization={PMLR}
}

@article{bu2019lqr,
  title={{LQR} through the lens of first order methods: Discrete-time case},
  author={Bu, Jingjing and Mesbahi, Afshin and Fazel, Maryam and Mesbahi, Mehran},
  journal={arXiv preprint arXiv:1907.08921},
  year={2019}
}

@article{hu2023toward,
  title={Toward a theoretical foundation of policy optimization for learning control policies},
  author={Hu, Bin and Zhang, Kaiqing and Li, Na and Mesbahi, Mehran and Fazel, Maryam and Ba{\c{s}}ar, Tamer},
  journal={Annual Review of Control, Robotics, and Autonomous Systems},
  volume={6},
  number={1},
  pages={123--158},
  year={2023},
  publisher={Annual Reviews}
}

@article{sontag2022remarks,
  title={Remarks on input to state stability of perturbed gradient flows, motivated by model-free feedback control learning},
  author={Sontag, Eduardo D},
  journal={Systems \& Control Letters},
  volume={161},
  pages={105138},
  year={2022},
  publisher={Elsevier}
}

@article{lin2025rethinking,
  title={Rethinking the global convergence of softmax policy gradient with linear function approximation},
  author={Lin, Max Qiushi and Mei, Jincheng and Aghaei, Matin and Lu, Michael and Dai, Bo and Agarwal, Alekh and Schuurmans, Dale and Szepesvari, Csaba and Vaswani, Sharan},
  journal={arXiv preprint arXiv:2505.03155},
  year={2025}
}

@article{fox2015taming,
  title={Taming the noise in reinforcement learning via soft updates},
  author={Fox, Roy and Pakman, Ari and Tishby, Naftali},
  journal={arXiv preprint arXiv:1512.08562},
  year={2015}
}

\end{document}